\newlength\myindent
\newcommand{\argmax}{\operatornamewithlimits{argmax}}
\newcommand{\argmin}{\operatornamewithlimits{argmin}}
\newtheorem{proposition}{Proposition}
\newtheorem{lemma}{Lemma}
\newtheorem{theorem}{Theorem}
\newtheorem{definition}{Definition}
\newcommand{\bb}[1]{\mathbb{#1}}
\newcommand{\vb}{\mathbf{v}}
\newcommand{\bs}{\boldsymbol}
\def\x{\boldsymbol{x}}
\def\z{\boldsymbol{z}}
\def\g{\boldsymbol{g}}
\def\r{\boldsymbol{r}}
\def\w{\boldsymbol{w}}
\def\vb{\boldsymbol{v}}
\def\u{\boldsymbol{u}}
\def\calB{\mathcal{B}}
\global\long\def\tr{\operatorname{Tr}}
\global\long\def\sign{\operatorname{sign}}
\global\long\def\var{\operatorname{Var}}
\newtheorem*{rep@theorem}{\rep@title}
\newcommand{\newreptheorem}[2]{%
\newenvironment{rep#1}[1]{%
 \def\rep@title{#2 \ref{##1}}%
 \begin{rep@theorem}}%
 {\end{rep@theorem}}}
\DeclarePairedDelimiterX{\infdivx}[2]{}{}{%
  #1\;\delimsize\|\;#2%
}
\begin{document}

%

%

\renewcommand{\thefootnote}{\fnsymbol{footnote}}

\aistatstitle{Robustness of classifiers to uniform $\ell_p$ and Gaussian noise}



\aistatsauthor{ Jean-Yves Franceschi \And Alhussein Fawzi \And  Omar Fawzi}
\aistatsaddress{ Ecole Normale Supérieure de Lyon\\LIP, UMR 5668 \And  UCLA Vision Lab\footnotemark[1] \And Ecole Normale Supérieure de Lyon\\LIP, UMR 5668 }

\begin{abstract}

We study the robustness of classifiers to various kinds of random noise models. In particular, we consider noise drawn uniformly from the $\ell_p$ ball for $p \in [1,\infty]$
and Gaussian noise with an arbitrary covariance matrix. We characterize this robustness to random noise in terms of the distance to the decision boundary of the classifier. This analysis applies to linear classifiers as well as classifiers with locally approximately flat decision boundaries, a condition which is satisfied by state-of-the-art deep neural networks. The predicted robustness is verified experimentally.

\end{abstract}

\section{Introduction}
\renewcommand{\thefootnote}{\fnsymbol{footnote}}
\footnotetext[1]{Now at DeepMind.}
\renewcommand{\thefootnote}{\arabic{footnote}}

Image classification techniques have recently witnessed major advances leading to record performances on challenging datasets \cite{he2015deep, krizhevsky2012imagenet}. Besides reaching low classification error, it is equally important that classifiers deployed in real-world environments correctly classify perturbed and noisy samples. Specifically, when a sufficiently small perturbation alters a sample, it is desirable that the estimated label of the classifier remains unchanged. Altering perturbations can take various forms, such as additive perturbations, geometric transformations or occlusions for image data. The analysis of the robustness of classifiers under these perturbation regimes is crucial for unraveling their fundamental vulnerabilities. For example, state-of-the-art image classifiers have recently been empirically shown to be vulnerable to well-sought imperceptible additive perturbations \cite{biggio2013evasion, szegedy2013intriguing}, and to more physically plausible nuisances in \cite{sharif2016accessorize}. The goal of this paper is to derive precise quantitative results on the robustness of general classifiers to \textit{random} noise. 


We specifically analyze two random noise models.
Under our first perturbation model, we assume that noise is sampled uniformly at random from the $\ell_p$ ball, for $p \in [1, \infty]$. Different values of $p$ allow us to model very different noise regimes; e.g., $p=1$ corresponds to sparse noise, whereas $p=\infty$ models dense noise typically resulting from signal quantization.
Under our second perturbation regime, the noise is modeled as Gaussian with arbitrary covariance matrix $\Sigma$. Our contributions are summarized as follows:

\begin{itemize}
	\item For linear classifiers, we characterize up to constants the robustness to random noise, as a function of the distance to the decision boundary. We show in particular that, provided the weight vector of the linear classifier is randomly chosen, the robustness to random noise (uniform and Gaussian) scales as $\sqrt{d}$ times the distance to the decision boundary.
	\item We extend the results to  nonlinear classifiers, and show that when the decision boundary is locally approximately flat (which is the case for state-of-the-art classifiers), the above result notably holds.
	\item Through experimental evidence on state-of-the-art image classifiers (deep nets), we show that the proposed bounds predict accurately the robustness of such classifiers. We finally show that our analysis predicts the high robustness of such classifiers to image quantization, which confirms previous empirical evidence. 
\end{itemize}


\textbf{Related work.} The robustness properties of linear and kernel SVM classifiers have been studied in \cite{xu2009robustness, biggio2013evasion}, and robust optimization approaches for constructing robust classifiers have been proposed \cite{sra2012optimization, lanckriet2003robust}. More recently, the robustness properties of deep neural networks have been investigated. In particular, \cite{szegedy2013intriguing} shows that deep neural networks are not robust to worst-case, or adversarial, perturbations. Several works have followed and attempted to provide  explanations to the vulnerability \cite{goodfellow2014, tabacof2015exploring, tanay2016boundary, sabour2016adversarial}. In particular, it was shown theoretically that the ratio of robustness to random noise and robustness to adversarial perturbations measured in the $\ell_2$ norm scales as $\sqrt{d}$ for linear classifiers in \cite{fawzi2015a} and more general classification functions in \cite{nips2016_ours}. Therefore, when the data is sufficiently high dimensional, the robustness to adversarial perturbations is very small, which gives an explanation to the imperceptible nature of such perturbations. 
Our work generalizes \cite{nips2016_ours} to broader noise regimes, such as sparse noise, quantization noise, or correlated Gaussian noise. Indeed, we follow a similar methodology to that of  \cite{nips2016_ours}, where we first establish results for the linear case, and then extend the results to nonlinear classifiers satisfying a locally approximately flat decision boundary.

\textbf{Outline.} This paper is organized as follows. Section~\ref{sec:definitions} introduces the framework of the robustness to random and adversarial perturbations. Section~\ref{sec:linear} presents theoretical estimates of such robustnesses for linear classifiers, which are generalized in Section~\ref{sec:robustness_nonlinear} for classifiers with a locally approximately flat decision boundary. Section~\ref{sec:experiments} then details experiments showing the validity of our bounds for state-of-the-art classifiers and exposing some applications of our results.


\section{Definitions and notations}
\label{sec:definitions}


Let $f: \bb{R}^d \rightarrow \bb{R}^L$ be a $L$-class classifier. The estimated label of a datapoint $\x \in \bb{R}^d$ is set to $g(\x) = \argmax_{k} f_k(\x)$, where $f_k(\x)$ denotes the $k$th component of $f(\x)$.  Our goal in this paper is to analyze the robustness of $f$ to random perturbations of the input. For that, we consider an arbitrary distribution $\nu$ on $\bb{R}^d$ that we interpret as giving the direction $\vb$ of the noise, and we measure the length of the minimal scaling applied to $\vb$ required to change the estimated label of $f$ at $\x$ with probability at least $\varepsilon$. More precisely, let $\vb$ be a random variable distributed according to $\nu$; for a given $\varepsilon>0$, we define $r_{\nu,\varepsilon}(\x)$ as:
\begin{align}
\label{eq:robustness_random}
r_{\nu,\varepsilon}(\x) = \min_{\alpha} \left\{ \left|\alpha\right| \text{ s.t. } \bb{P}\left\{g(\x+\alpha \vb) \neq g(\x) \right\} \geq \varepsilon\right\}. 
\end{align}
If the set is empty, we set $r_{\nu,\varepsilon}(\x) = + \infty$.\footnote{We should also technically consider the closure of the set to ensure the minimum is achieved, but we will avoid such technicalities throughout the paper as they are of no relevance for our study.} In this paper, we will focus on two families of choices for $\nu$.

The first family is parameterized by a real number $p \in [1,\infty]$. The distribution $\nu$ is then the uniform distribution over the unit ball of $\ell_p^d$, i.e., $\calB_p = \{\x \in \bb{R}^d: \| \x \|_p \leq 1 \}$ where $\| \x \|_{p} = (\sum_{i=1}^d x_i^p)^{1/p}$. For this setting of distribution $\nu$, we write $r_{p,\varepsilon}( \x ) = r_{\nu,\varepsilon}( \x)$. Observe that the Euclidean norm $\| . \|_{2}$ is invariant under an orthonormal basis change, but this is not the case for $\| . \|_{p}$ when $p \neq 2$, i.e., it depends on the basis that is chosen to write the signal $\x$; hence, this dependence also holds for $r_{p,\varepsilon}( \x )$. Different choices of $p$ allow us to span a range of realistic noise models. For example, choosing $p=1$ leads to sparse noise vectors modeling salt and pepper noise, while $p=\infty$ leads to uniform noise vectors that allow us to model noise resulting from signal quantization \cite[Chapter~4.5]{bovik2010handbook}. An illustration of the different noise regimes can be found in Figure~\ref{fig:illustration_noises}.

The second family is parameterized by an arbitrary positive definite matrix $\Sigma \in \bb{R}^{d \times d}$, that will generally be normalized with $\tr(\Sigma) = 1$ to fix the scale. The distribution $\nu$ is then the multivariate normal distribution with mean $\bf{0}$ and covariance matrix $\Sigma$. We use the notation $r_{\Sigma,\varepsilon}(\x) = r_{\nu,\varepsilon}( \x)$ for this setting. A special case of this family is therefore the additive white Gaussian noise (where $\Sigma = \frac{I}{d}$); note however that this family is much broader and can model a noise that is correlated with the input $\x$, as no assumption is made on $\Sigma$.

In the remainder of this paper, our goal is to derive bounds on the robustness of classifiers $f$ to random noise sampled from either of these two families. To do so, we first define a key quantity for our analysis, the robustness to worst-case perturbations:
\begin{align}
\label{eq:robustness_adversarial}
\r_p^*(\x) = \argmin_{\r} \left\{\| \r \|_p \text{ s.t. } g(\x+\r) \neq g(\x)\right\}.
\end{align}
In other words, $\| \r_p^*(\x) \|_p$ quantifies the length of the minimal perturbation required to change the estimated label of the classifier, or equivalently, the distance from the data point $\x$ to the decision boundary of the classifier. $\r_p^* (\x)$ is often alternatively referred to as an \textit{adversarial} perturbation, as it corresponds to the least noticeable perturbation an adversary would apply to fool a classifier. Note that, like $r_{p,\varepsilon}( \x )$, it heavily depends on the choice of norm $\ell_p$, and thus on the choice of orthonormal basis. Figure~\ref{fig:illustration_adversarial_perturbations} illustrates the dependence on $p$ of this perturbation. Such perturbations, which have been the subject of intense studies, will be used to derive guarantees on the robustness to random noise.

\begin{figure}
	\includegraphics[width=0.6\textwidth]{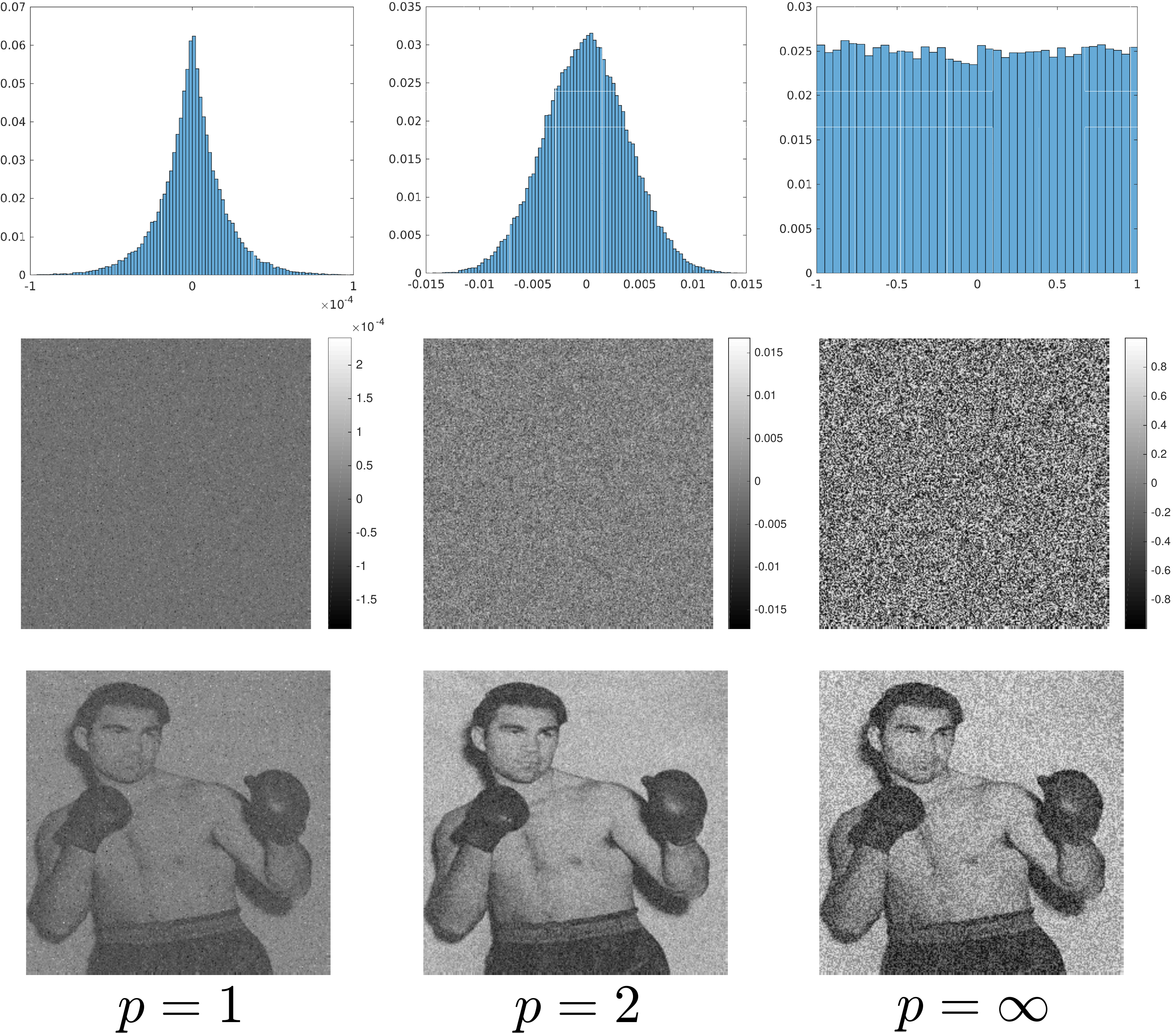}
	\caption{\label{fig:illustration_noises}Illustration of noise with different values of $p$. First row: histogram of uniformly sampled noise from the unit ball of $\ell_p^d$. Second row: Example of noise image. Third row: Example of noisy image. Note that different values of $p$ result in perceptually different noise images.}
\end{figure}

\begin{figure}
	\includegraphics[width=0.6\textwidth]{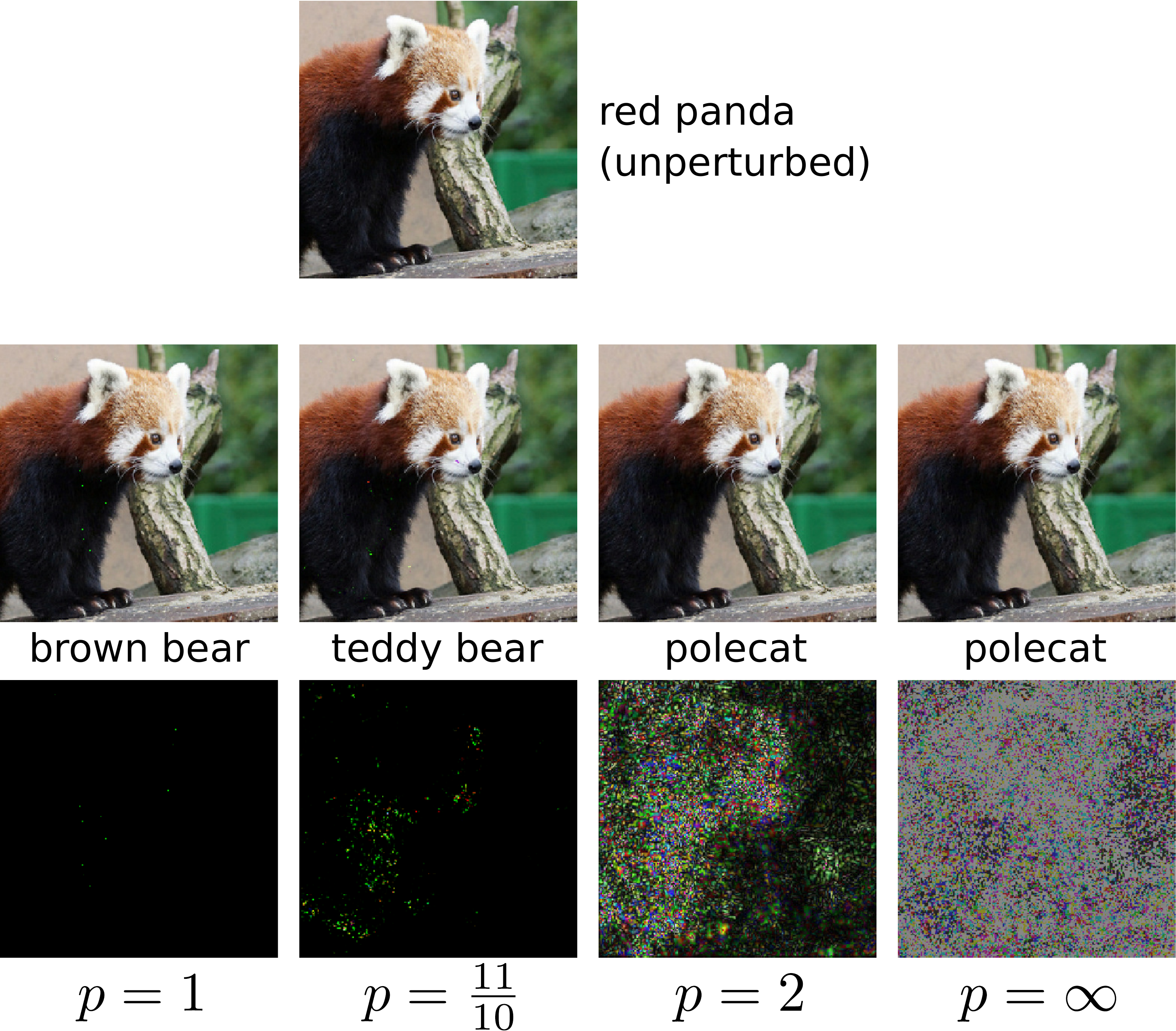}
	\caption{\label{fig:illustration_adversarial_perturbations}Illustration of adversarial perturbations with different values of $p$. First row: original image and its classification. Second row, for each column, from bottom to top: chosen $p$, adversarial perturbation, perturbed image with its classification. When $p \to \infty$, the perturbation tends to be distributed over all pixels; when $p \to 1$, it tends to be distributed over few pixels. Adversarial perturbations were estimated on the VGG-19 classifier \cite{simonyan2014very} using the method presented in \cite{moosavi2015deepfool}.}
\end{figure}





In the next sections, we characterize the robustness of linear and nonlinear classifiers to random perturbations in terms of the robustness to worst-case perturbations. In the case of Gaussian random noise, we focus for $\r_p^*\left(\x\right)$ on the norm $\| . \| = \| . \|_2$, even though all our results can be generalized to $p$-norms.

\section{Robustness of linear classifiers}
\label{sec:linear}

For simplicity of exposition, we state our results for binary classifiers, and we extend the results for multi-class classifiers in the supplementary material. The proofs may also be found in the supplementary material.

We consider in this section the particular case where $f$ is a \emph{linear} classifier, i.e., all the $f_k$'s are linear functions. In particular, in the binary case, the setting can be simplified by considering a single linear function
\begin{equation}
	f : \x \mapsto \bs{w}^T \x + \bs{b}.
\end{equation}
In this case,\footnote{In the general multi-class setting, this corresponds to $f_0 = f$ and $f_1 = 0$.} $g(\x) = 1$ if and only if $f(\x) > 0$.

\subsection{Uniform $\ell_p$ noise}

The following result bounds the robustness of a linear classifier to uniformly random noise with respect to its robustness to adversarial perturbations, for any norm $\ell_p$.

\begin{theorem}
\label{thm:lp_noise}
Let $p \in [1, \infty]$. Let $p' \in [1,\infty]$ be such that $\frac{1}{p} + \frac{1}{p'} = 1$.
There exist constants $\varepsilon_0, \zeta_1(\varepsilon), \zeta_2(\varepsilon)$ such that, for all $\varepsilon < \varepsilon_0$:
\begin{equation*}
\zeta_1(\varepsilon) d^{1/p} \frac{ \| \w \|_{p'} }{\| \w \|_{2} } \leq \frac{r_{p,\varepsilon}(\x)}{\| \r^*_{p}(\x) \|_p} \leq \zeta_2(\varepsilon) d^{1/p} \frac{ \| \w \|_{p'} }{\| \w \|_{2} }.
\end{equation*}
We can take $\zeta_1(\varepsilon) = C\sqrt{\varepsilon}$,\footnote{We show in the supplementary material that for $p > 1$, we can also choose $\zeta_1(\varepsilon) = \frac{C'}{\sqrt{\ln \frac{1}{\varepsilon}}}$ for some constant $C'$.} and $\zeta_2(\varepsilon) = \sqrt{\frac{1}{c-\sqrt{c'\varepsilon}}}$, for some constants $C, c, c'$.
\end{theorem}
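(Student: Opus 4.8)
The plan is to reduce the statement, for a linear classifier, to a one–dimensional estimate on the random variable $Z=\w^{T}\vb$ with $\vb$ uniform on $\calB_p$. Assume without loss of generality $g(\x)=1$, i.e.\ $f(\x)=\w^{T}\x+b>0$. For any scalar $\alpha$, the event $g(\x+\alpha\vb)\neq g(\x)$ is exactly $\alpha Z\le -f(\x)$; since $\calB_p$ is symmetric and $\w\neq 0$, $Z$ has a symmetric law with a density, so $\mathbb{P}\{g(\x+\alpha\vb)\neq g(\x)\}=\mathbb{P}\{Z\ge f(\x)/|\alpha|\}$ for every $\alpha\neq 0$. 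Writing $q_\varepsilon$ for the number (unique when $\varepsilon<1/2$) with $\mathbb{P}\{Z\ge q_\varepsilon\}=\varepsilon$, continuity and monotonicity give $r_{p,\varepsilon}(\x)=f(\x)/q_\varepsilon$. On the other hand, $\ell_p$--$\ell_{p'}$ duality yields the classical formula $\|\r^{*}_p(\x)\|_p=|f(\x)|/\|\w\|_{p'}$ for the distance to the hyperplane. Hence $r_{p,\varepsilon}(\x)/\|\r^{*}_p(\x)\|_p=\|\w\|_{p'}/q_\varepsilon$ (in particular this ratio does not depend on $\x$), and the whole theorem becomes the two–sided bound $q_\varepsilon=\Theta_\varepsilon\!\big(d^{-1/p}\|\w\|_2\big)$.

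Next I would fix the scale of $Z$ with a single second–moment computation: by invariance of $\calB_p$ under coordinate permutations and sign changes, $\mathbb{E}[\vb\vb^{T}]=\tfrac1d\,\mathbb{E}\|\vb\|_2^{2}\,I$, so $\mathbb{E}[Z^{2}]=\tfrac1d\,\mathbb{E}\|\vb\|_2^{2}\,\|\w\|_2^{2}$. Using the standard fact that a uniform point of $\calB_p^{d}$ satisfies $\mathbb{E}\|\vb\|_2^{2}\asymp d^{1-2/p}$ with constants uniform over $p\in[1,\infty]$ and $d$ (this follows from the closed-form Gamma-function expression for $\mathbb{E}|v_1|^{2}$ on $\calB_p^{d}$, or from the Barthe--Guédon--Mendelson--Naor representation $\vb\overset{d}{=}U^{1/d}\,\g/\|\g\|_p$ with $\g$ having i.i.d.\ coordinates of density $\propto e^{-|t|^p}$ and $U$ uniform on $[0,1]$), one gets $c_0^{1/2}\,d^{-1/p}\|\w\|_2\le\sigma_Z\le C_0^{1/2}\,d^{-1/p}\|\w\|_2$ for $\sigma_Z:=\sqrt{\mathbb{E}Z^{2}}$.

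The $\zeta_2$ side of the theorem needs a lower bound on $q_\varepsilon$, i.e.\ an anti-concentration statement. The key structural input is that for $p\ge 1$ the body $\calB_p$ is convex, hence the uniform law on it is log-concave, so Borell's lemma gives a reverse Hölder inequality $\mathbb{E}[Z^{4}]\le K\,(\mathbb{E}[Z^{2}])^{2}$ with a universal constant $K$. Applying Paley--Zygmund to $Z^{2}$ together with the symmetry of $Z$ then yields $\mathbb{P}\{Z\ge\sqrt{1-\sqrt{2K\varepsilon}}\,\sigma_Z\}\ge\varepsilon$ for $\varepsilon$ small, hence $q_\varepsilon\ge\sqrt{1-\sqrt{2K\varepsilon}}\,\sigma_Z$, which combined with the lower bound on $\sigma_Z$ produces $\zeta_2(\varepsilon)=\sqrt{1/(c-\sqrt{c'\varepsilon})}$. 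The $\zeta_1$ side needs an upper bound on $q_\varepsilon$: Chebyshev gives $\varepsilon=\mathbb{P}\{Z\ge q_\varepsilon\}\le\mathbb{P}\{Z^{2}\ge q_\varepsilon^{2}\}\le\sigma_Z^{2}/q_\varepsilon^{2}$, so $q_\varepsilon\le\sigma_Z/\sqrt\varepsilon$, and with the upper bound on $\sigma_Z$ this gives $\zeta_1(\varepsilon)=C\sqrt\varepsilon$. For $p>1$ one replaces Chebyshev by the sharper tail bound coming from the lighter tails of $e^{-|t|^p}$ (sub-Gaussian-type concentration of $\w^{T}\g/\|\g\|_p$), getting $q_\varepsilon\lesssim\sigma_Z\sqrt{\ln(1/\varepsilon)}$ and hence the improved $\zeta_1(\varepsilon)=C'/\sqrt{\ln(1/\varepsilon)}$ of the footnote.

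The main obstacle is to make the two inputs $\mathbb{E}\|\vb\|_2^{2}\asymp d^{1-2/p}$ and $\mathbb{E}[Z^{4}]\le K(\mathbb{E}[Z^{2}])^{2}$ fully explicit, with constants that do not degenerate as $p\to 1$, $p\to\infty$, or $d\to\infty$; everything else is bookkeeping around the reduction in the first paragraph. The boundary case $p=\infty$ (the cube, where $\vb$ has i.i.d.\ uniform-$[-1,1]$ coordinates) can be handled directly, which also serves as a sanity check on the constants $c,c',C$.
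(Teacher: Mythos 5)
Your proposal is correct and follows essentially the same route as the paper: reduce to the tail of $Z=\w^{T}\vb$ via the closed-form $\ell_p$-distance to the hyperplane, get the lower bound on $r_{p,\varepsilon}$ from Markov/Chebyshev on $Z^{2}$, and get the upper bound from Paley--Zygmund with a fourth-moment (reverse H\"older) control, exactly as in Lemmas~\ref{lem:lp_noise_part1} and~\ref{lem:lp_noise_part2}. The only cosmetic difference is that you derive the second moment exactly by symmetry and invoke Borell's lemma for $\mathbb{E}[Z^4]\le K(\mathbb{E}[Z^2])^2$, whereas the paper cites the moment bounds of Barthe et al.\ directly for both; the quantitative content is the same.
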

More details on constants $C, c, c'$ are available in the supplementary material.
In words, our result demonstrates that $r_{p,\varepsilon}(\x)$ is well estimated by $\| \r^*_{p}(\x) \|_p$ times a multiplicative factor that is independent of $\x$ and is of the order $d^{1/p} \frac{ \| \w \|_{p'} }{\| \w \|_{2} }$. The special case $p = 2$, for which this multiplicative factor becomes $\sqrt{d}$, was previously shown in~\cite{nips2016_ours} and \cite{fawzi2015a}. For $p \neq 2$, this factor depends on the choice of the classifier through vector $\w$. Such a dependence was to be expected as the $p$-norm for $p \neq 2$ depends on the choice of basis. This dependence takes into account the relation between this choice of basis to write the signal and the direction $\w$ chosen by the classifier. For example, when $\w = (1 \ 0 \ \dots \ 0)^T$, we have a classifier that only uses the first component of the signal. So for $p = \infty$, the problem effectively becomes one-dimensional as only the first coordinate matters and we have $r_{\infty}(\x) = \| \r^*_{\infty}(\x) \|_{\infty}$.

%
%
Nevertheless, for a typical choice of the vector $\w$ of the linear classifier, this factor stays of order $\sqrt{d}$ if $p > 1$.

\begin{proposition}
\label{prop:MeanBoundsRandom}
For any $p\in\left(1,\infty\right]$, if $\w$ is a random direction uniformly distributed over the unit
$\ell_{2}$-sphere, then, as $d \to \infty$,
\begin{equation*}
\frac{d^{1/p} \frac{\left\Vert \w\right\Vert _{p'}}{\| \w \|_2}}{\sqrt{d}} \xrightarrow[\text{a.s.}]{}\sqrt{2}\left(\frac{\Gamma\left(\frac{2p-1}{2\left(p-1\right)}\right)}{\sqrt{\pi}}\right)^{1-\frac{1}{p}}.
\end{equation*}
Moreover, for $p = 1$,
\begin{equation*}
\frac{d \frac{\left\Vert \w \right\Vert _{\infty}}{\| \w \|_2}}{\sqrt{2d \ln d}} \xrightarrow[\text{a.s.}]{} 1 \text{.}
\end{equation*}
\end{proposition}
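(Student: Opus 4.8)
The plan is to reduce the two claims to the strong law of large numbers (for $p>1$) and to classical Gaussian extreme-value estimates (for $p=1$), after rewriting a uniformly random direction on the sphere as a normalized Gaussian vector. Fix once and for all an i.i.d.\ sequence $(g_i)_{i \geq 1}$ of standard Gaussian variables and, for each $d$, set $\g = (g_1, \dots, g_d)$ and $\w = \g / \| \g \|_2$; then $\w$ is uniform on the unit $\ell_2$-sphere of $\bb{R}^d$, and since all dimensions now live on a single probability space it suffices to establish the stated limits along this coupling. As $\| \w \|_2 = 1$, we have $\| \w \|_{p'} / \| \w \|_2 = \| \g \|_{p'} / \| \g \|_2$, and the denominator is handled by the strong law: $\| \g \|_2^2 / d = \frac{1}{d} \sum_{i \leq d} g_i^2 \to \mathbb{E}[g_1^2] = 1$ almost surely, hence $\| \g \|_2 / \sqrt{d} \to 1$ a.s.

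For $p \in (1, \infty]$ we have $p' \in [1, \infty)$, so $\| \g \|_{p'}^{p'} = \sum_{i \leq d} |g_i|^{p'}$ and the strong law gives $\| \g \|_{p'} / d^{1/p'} \to \left( \mathbb{E}|g_1|^{p'} \right)^{1/p'}$ a.s. Since $\frac{1}{p} + \frac{1}{p'} = 1$,
\begin{equation*}
\frac{1}{\sqrt{d}} \cdot d^{1/p} \cdot \frac{\| \w \|_{p'}}{\| \w \|_2} = d^{\frac{1}{p} + \frac{1}{p'} - 1} \cdot \frac{\| \g \|_{p'} / d^{1/p'}}{\| \g \|_2 / \sqrt{d}} = \frac{\| \g \|_{p'} / d^{1/p'}}{\| \g \|_2 / \sqrt{d}} \xrightarrow[\text{a.s.}]{} \left( \mathbb{E}|g_1|^{p'} \right)^{1/p'}.
\end{equation*}
It remains to identify the constant: inserting the standard absolute-moment formula $\mathbb{E}|g_1|^s = 2^{s/2} \Gamma\!\left( \frac{s+1}{2} \right) / \sqrt{\pi}$ with $s = p'$, and using $p' = \frac{p}{p-1}$ — so that $\frac{1}{p'} = 1 - \frac{1}{p}$ and $\frac{p'+1}{2} = \frac{2p-1}{2(p-1)}$ — one obtains exactly $\sqrt{2}\,\bigl( \Gamma(\tfrac{2p-1}{2(p-1)}) / \sqrt{\pi} \bigr)^{1 - 1/p}$. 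The value $p = \infty$ (where $p' = 1$, $d^{1/p} = 1$, and the exponents are read in the limit) is the special case giving $\sqrt{2/\pi}$.

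For $p = 1$ we have $p' = \infty$ and $\| \w \|_\infty / \| \w \|_2 = \max_{i \leq d} |g_i| / \| \g \|_2$, so
\begin{equation*}
\frac{d \, \| \w \|_\infty / \| \w \|_2}{\sqrt{2 d \ln d}} = \frac{\max_{i \leq d} |g_i|}{\sqrt{2 \ln d}} \cdot \frac{\sqrt{d}}{\| \g \|_2},
\end{equation*}
where the second factor converges to $1$ a.s.\ by the computation above. For the first factor I would invoke the classical fact that $\max_{i \leq d} |g_i| / \sqrt{2 \ln d} \to 1$ almost surely: the upper bound follows from the Gaussian tail estimate $\mathbb{P}(|g_1| > t) \leq 2 e^{-t^2/2}$ together with a Borel--Cantelli argument along the subsequence $d = 2^k$ (then extended to all $d$ by monotonicity of the maximum), and the matching lower bound from independence, $\mathbb{P}(\max_{i\le d}|g_i| \le t) = \mathbb{P}(|g_1| \le t)^d$, and a two-sided Gaussian tail bound. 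I expect this last point — carrying out the almost-sure extreme-value argument rigorously — to be the only genuine work; the $p > 1$ cases are routine once the Gaussian representation is in place, the sole subtlety being the algebraic match with the Gamma-function expression.
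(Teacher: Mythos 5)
Your proof is correct and follows essentially the same route as the paper's: write $\w = \g/\|\g\|_2$ for a Gaussian vector $\g$, apply the strong law of large numbers to $\|\g\|_{p'}^{p'}$ and $\|\g\|_2^2$ for $p>1$, and use the almost-sure Gaussian extreme-value limit $\max_{i\le d}|g_i|/\sqrt{2\ln d}\to 1$ for $p=1$ (which the paper simply cites from Galambos rather than sketching via Borel--Cantelli). The only cosmetic difference is that the paper normalizes its Gaussians to variance $\tfrac12$, which is immaterial since $\w$ is scale-invariant in $\g$, and your Gamma-function bookkeeping matches the stated constant.
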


While this result is only asymptotic and valid for random decision hyperplanes, we experimentally show in Section~\ref{sec:experiments} that its dependence in $p$ allows us to propose an estimate providing a very good approximation of the robustness to random noise.

\subsection{Gaussian noise}

In the case where the uniformly random noise is replaced by a Gaussian noise with a given covariance matrix $\Sigma$, we can similarly characterize the ratio $\frac{r_{\Sigma,\varepsilon}(\x)}{\| \r^*_{2}(\x) \|_2}$ as a function of $\Sigma$ and $\w$ as follows.
\begin{theorem}
\label{thm:gaussian_noise}
Let $\Sigma$ be a $d \times d$ positive semidefinite matrix with $\tr(\Sigma) = 1$.\footnote{Note that the condition $\tr(\Sigma) = 1$ is not needed for the statement but its motivation is to fix the scale of $r_{\Sigma,\varepsilon}(\x)$.} There exist constants $\varepsilon_0', \zeta_1'(\varepsilon), \zeta_2'(\varepsilon)$ such that, for all $\varepsilon < \varepsilon_0'$:
\begin{equation*}
\zeta_1'(\varepsilon) \frac{ \| \w \|_{2} }{\| \sqrt{\Sigma} \w \|_{2} } \leq \frac{r_{\Sigma,\varepsilon}(\x)}{\| \r^*_{2}(\x) \|_2} \leq \zeta_2'(\varepsilon)  \frac{ \| \w \|_{2} }{\| \sqrt{\Sigma} \w \|_{2} }.
\end{equation*}
We can take $\varepsilon_0' = \frac{1}{3}$, $\zeta_1'(\varepsilon) = \sqrt{\frac{1}{2\ln\left(\frac{1}{\varepsilon}\right)}}$ and $\zeta_2'(\varepsilon) = \sqrt{\frac{1}{1-\sqrt{3\varepsilon}}}$.
\end{theorem}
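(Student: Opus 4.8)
The plan is to reduce the statement to an essentially exact computation of $r_{\Sigma,\varepsilon}(\x)$, followed by two elementary estimates on the Gaussian quantile function. First, replacing $(\w,b)$ by $(-\w,-b)$ if necessary (which changes none of the four quantities appearing in the statement), I would assume $f(\x)=\w^T\x+b>0$, so that $g(\x+\alpha\vb)\neq g(\x)$ is equivalent to $f(\x+\alpha\vb)\le 0$, i.e.\ to $\alpha\,\w^T\vb\le -f(\x)$. Since $\vb\sim\mathcal N(0,\Sigma)$, the scalar $\w^T\vb$ is centered Gaussian with variance $\w^T\Sigma\w=\|\sqrt\Sigma\w\|_2^2$; using the symmetry of the Gaussian, $\bb P\{g(\x+\alpha\vb)\neq g(\x)\}=\Phi\!\left(-\frac{f(\x)}{|\alpha|\,\|\sqrt\Sigma\w\|_2}\right)$, where $\Phi$ is the standard normal c.d.f. (I would briefly dispose of the degenerate cases $f(\x)=0$, where both sides vanish, and $\sqrt\Sigma\w=0$, where both sides are $+\infty$.)

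Next, since this probability is continuous and strictly increasing in $|\alpha|$, running from $0$ to $\tfrac12$, and $\varepsilon<\varepsilon_0'=\tfrac13<\tfrac12$, the defining minimum in \eqref{eq:robustness_random} is attained at equality, which gives the closed form
\begin{equation*}
r_{\Sigma,\varepsilon}(\x)=\frac{f(\x)}{\|\sqrt\Sigma\w\|_2\,\bigl(-\Phi^{-1}(\varepsilon)\bigr)}.
\end{equation*}
Combined with the classical formula $\|\r^*_2(\x)\|_2=f(\x)/\|\w\|_2$ for the distance of $\x$ to the hyperplane $\{f=0\}$, this yields the exact identity $\dfrac{r_{\Sigma,\varepsilon}(\x)}{\|\r^*_2(\x)\|_2}=\dfrac{\|\w\|_2}{\|\sqrt\Sigma\w\|_2}\cdot\dfrac{1}{-\Phi^{-1}(\varepsilon)}$, independent of $\x$.

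It then remains to sandwich the scalar $\frac{1}{-\Phi^{-1}(\varepsilon)}$ between $\zeta_1'(\varepsilon)$ and $\zeta_2'(\varepsilon)$. Writing $q=-\Phi^{-1}(\varepsilon)>0$, so that $\Phi(-q)=\varepsilon$, this amounts to showing $\sqrt{1-\sqrt{3\varepsilon}}\le q\le\sqrt{2\ln(1/\varepsilon)}$ for $\varepsilon<\tfrac13$. The upper bound follows from the standard Gaussian tail estimate $\varepsilon=\Phi(-q)\le\tfrac12 e^{-q^2/2}\le e^{-q^2/2}$ (a one-line Chernoff/monotonicity argument), which rearranges to $q^2\le 2\ln(1/\varepsilon)$, hence $1/q\ge\zeta_1'(\varepsilon)$. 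For the lower bound, if $q\ge 1$ it is immediate since $1-\sqrt{3\varepsilon}\le 1$; if $q<1$ I would use the elementary lower bound $\Phi(-q)\ge\tfrac12-\frac{q}{\sqrt{2\pi}}$ (from $\phi\le\phi(0)$ on $[0,q]$) together with the chain $\tfrac12-\frac{q}{\sqrt{2\pi}}\ge\frac{1-q^2}{3}\ge\frac{(1-q^2)^2}{3}$ valid on $[0,1]$ — the first inequality because the quadratic $\frac{q^2}{3}-\frac{q}{\sqrt{2\pi}}+\frac16$ has negative discriminant $\frac{1}{2\pi}-\frac29<0$, the second because $0\le 1-q^2\le 1$. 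This gives $\varepsilon\ge\frac{(1-q^2)^2}{3}$, i.e.\ $q^2\ge 1-\sqrt{3\varepsilon}$, hence $1/q\le\zeta_2'(\varepsilon)$; the condition $\varepsilon<\tfrac13$ is precisely what makes $1-\sqrt{3\varepsilon}>0$ and $\zeta_2'(\varepsilon)$ well defined.

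The computation of $r_{\Sigma,\varepsilon}(\x)$ is clean, so there is no serious obstacle; the only genuine choices are in the last step, where the constants $\zeta_1',\zeta_2'$ are deliberately simple rather than tight (the sharp constant being $1/(-\Phi^{-1}(\varepsilon))$ itself). The mildly delicate point is picking tail estimates that are elementary to state yet still valid on the whole range $\varepsilon<\tfrac13$, the lower-bound side being handled by the split at $q=1$ and the discriminant check above. The same scheme, with $\Phi$ replaced by the appropriate one-dimensional marginal of the noise, delivers the $p$-norm generalization alluded to after the theorem.
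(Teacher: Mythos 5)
Your proof is correct, and it shares the paper's central reduction: since $\w^T\vb$ is exactly a one-dimensional Gaussian with variance $\|\sqrt{\Sigma}\w\|_2^2$, the whole theorem collapses to a two-sided estimate on a Gaussian tail, and your Chernoff step for the lower bound on the ratio is essentially identical to the paper's. Where you genuinely diverge is in the upper bound and in the overall framing. The paper never writes down the exact quantile; instead it bounds $\Pbb\{g(\x+\alpha\vb)\neq g(\x)\}$ from below via Paley--Zygmund applied to $X=\bigl((\sqrt{\Sigma}\w)^T\vb\bigr)^2/\var\bigl((\sqrt{\Sigma}\w)^T\vb\bigr)$, using the Gaussian fourth moment to bound $\Ebb[X^2]\le 3$, which lands on precisely the inequality $\varepsilon\ge(1-q^2)^2/3$ that you reach by the density bound $\phi\le\phi(0)$ plus the discriminant check. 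Your route buys transparency: the identity $r_{\Sigma,\varepsilon}(\x)/\|\r_2^*(\x)\|_2=\frac{\|\w\|_2}{\|\sqrt{\Sigma}\w\|_2}\cdot\frac{1}{-\Phi^{-1}(\varepsilon)}$ makes it plain that the sharp constant is $1/(-\Phi^{-1}(\varepsilon))$ and that $\zeta_1',\zeta_2'$ are merely convenient envelopes of it, something the paper's presentation obscures. The Paley--Zygmund route buys uniformity: it is the same second-moment argument the paper uses for the uniform $\ell_p$ case (where no exact distribution of $\w^T\vb$ is available), so the Gaussian theorem comes out as a verbatim instance of the general machinery rather than a special computation. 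Both are complete proofs; neither has a gap.
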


In this case, the multiplicative factor between robustnesses to random and adversarial perturbations is of the order $\frac{ \| \w \|_{2} }{\| \sqrt{\Sigma} \w \|_{2} }$. Note that this factor lies in between $\lambda_{\max}(\sqrt{\Sigma})^{-1}$ and $\lambda_{\min}(\sqrt{\Sigma})^{-1}$. However, these values correspond to extremal cases, and for most choices of $\w$, this factor will be determined by a convex combination of eigenvalues of $\Sigma$. More precisely, if $\u_1, \dots, \u_d$ are the eigenvectors of $\Sigma$ with eigenvalues $\lambda^2_1, \dots, \lambda^2_d$ and assuming $\| \w \|_2 = 1$ (without loss of generality), the factor is given by a weighted  average of the eigenvalues $\lambda_1^2, \dots, \lambda_d^2$:
\begin{align*}
\frac{1}{ \sqrt{\sum_{i=1}^{d} \lambda_i^2 |\u_i^T \w|^2 }}.
\end{align*}
In particular, if $\Sigma = \frac{1}{d}I_d$, then the factor is $\sqrt{d}$. Even more generally, for typical choices of $\w$ we expect $|\u_i^T \w|^2$ be of order $\frac{1}{d}$, in which case the factor will also be of order $\sqrt{d}$.



%

\section{Robustness of nonlinear classifiers}
\label{sec:robustness_nonlinear}

We now consider the general case where $f$ is a nonlinear classifier. The goal of this section is to derive relations between $r_{p,\varepsilon}(\x)$ and $\| \r_p^*(\x) \|_p$ in this general case, under a reasonable hypothesis on the geometry of the decision boundary.

\subsection{Locally Approximately Flat (LAF) Decision Boundary Model}

Before giving the formal definition, let us describe the main idea behind the \textit{Locally Approximately Flat} (LAF) decision boundary model. This model requires that the decision boundary can be locally sandwiched between two hyperplanes that are parallel to the tangent hyperplane. We do not ask for this to hold for every point on the decision boundary, but only to hold for the closest points on the decision boundary of our data points. 

\begin{definition}[LAF model]
	Let $f$ be a binary classifier with smooth\footnote{This is a strong assumption that is only used here for the sake of exposition. Actually, the tangent $\mathcal{T}(\x^*)$ in the definition can be replaced by any hyperplane intersecting the decision boundary at $\x^*$. Then, in the results conditioned by the LAF model, the gradient of $f$ at $\x^*$ can be replaced by a normal vector to this plane.} decision boundary $\mathcal{S} = \{x \in \bb{R}^d : f(\x) = 0\}$. For $\x^* \in \mathcal{S}$, define $\mathcal{T}(\x^*)$ to be the hyperplane tangent to $\mathcal{S}$ at point $\x^*$. For $\x \in \bb{R}^d$ and $\x^* \in \mathcal{S}$, we define $\mathcal{H}_{\gamma}^{-}(\x,\x^*)$ to be the halfspace of points that are on the side of $\x$ of the hyperplane parallel to $\mathcal{T}(\x^*)$ that passes though the point $\gamma \x + (1-\gamma)\x^*$. Similarly, $\mathcal{H}_{\gamma}^{+}(\x,\x^*)$ is the halfspace of points that are not on the side of $\x$ of the hyperplane parallel to $\mathcal{T}(\x^*)$ that passes though the point $\x^* + \gamma (\x^* - \x)$ (see Figure~\ref{fig:LAF}).

	We say that $f$ is $(\gamma, \eta)$-Locally Approximately Flat at point $\x$ if for $\x^* \in \mathcal{S}$ minimizing $\| \x - \x^*\|_p$, the set $\mathcal{H}_{\gamma}^{-}(\x,\x^*) \cap \calB_p(\x, \eta)$ is classified as $\x$ and $\mathcal{H}_{\gamma}^+(\x,\x^*) \cap \calB_p(\x, \eta)$ is classified differently from $\x$. Here $\calB_{p}(\x, \eta)$ is the $\ell_p$-ball centered at $\x$ with radius $\eta$.

\begin{figure}
	\centering
	\includegraphics[width=0.6\textwidth]{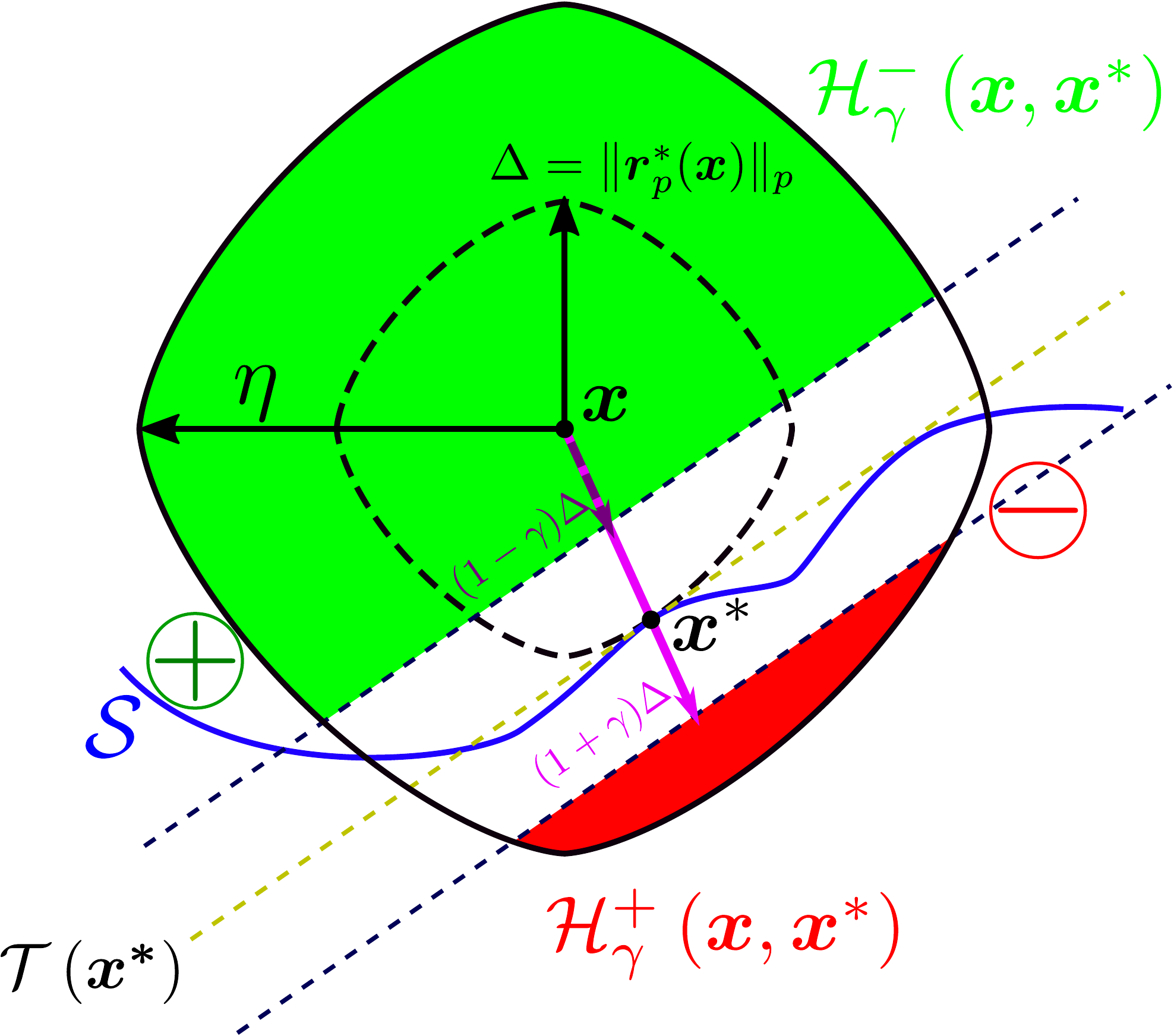}
	\caption{\label{fig:LAF}Illustration of a $(\gamma, \eta)$-LAF classifier with $p = \frac{3}{2}$. $\mathcal{S}$ is the decision boundary of $f$, separating instances of the same predicted class as $\x$ (on the side of $\x$) from instances whose classification differs from $\x$ (on the other side). By definition, all inputs in $\mathcal{H}_{\gamma}^+(\x,\x^*) \cap \calB_p(\x, \eta)$ (red area, other side of $\mathcal{S}$) are classified differently from $\x$, while the inputs in $\mathcal{H}_{\gamma}^-(\x,\x^*) \cap \calB_p(\x, \eta)$ (green area, same side of $\mathcal{S}$ as $\x$) are classified as $\x$.}
\end{figure}

\end{definition}

The LAF model assumes that the decision boundary can be \textit{locally} approximated by a hyperplane, in the vicinity of images $\x$ sampled from the data distribution. It should be noted that, in order to be able to define locality, we need a distance measure and thus the LAF property depends implicitly on the choice of norm. For $\gamma = 0$, the LAF model corresponds to locally exactly flat decision boundaries (no curvature). If in addition $\eta = \infty$, this corresponds to a linear decision boundary.

Prior empirical evidence has shown that state-of-the-art deep neural network classifiers have decision boundaries that are approximately flat along random directions \cite{warde2016adversarial, nips2016_ours}. Normal two-dimensional cross-sections (along random directions) of the decision boundary are illustrated in Figure~\ref{fig:two_dimensional_cross_sections}. Note that such cross-sections have very low curvature, thereby providing evidence that the LAF assumption holds approximately (at least, with high probability) for complex classifiers, such as modern deep neural networks. It should further be noted that the LAF model is tightly related to the curvature condition of the decision boundary in \cite{nips2016_ours}. The LAF model, however, does not assume any regularity condition on the decision surface, which is nonsmooth in many settings (e.g., deep neural networks due to piecewise linear activation functions). Finally, it should be noted that, for the sake of clarity, we assumed that the entire set $\mathcal{H}_{\gamma}^{+}(\x,\x^*) \cap \calB_p(\x, \eta)$ (respectively, $\mathcal{H}_{\gamma}^-(\x,\x^*) \cap \calB_p(\x, \eta)$) is classified differently from $\x$ (respectively, similarly to $\x$); however, the results in this section hold even if these conditions are only satisfied with high probability.


\begin{figure}
	\centering
	\includegraphics[width=0.85\textwidth]{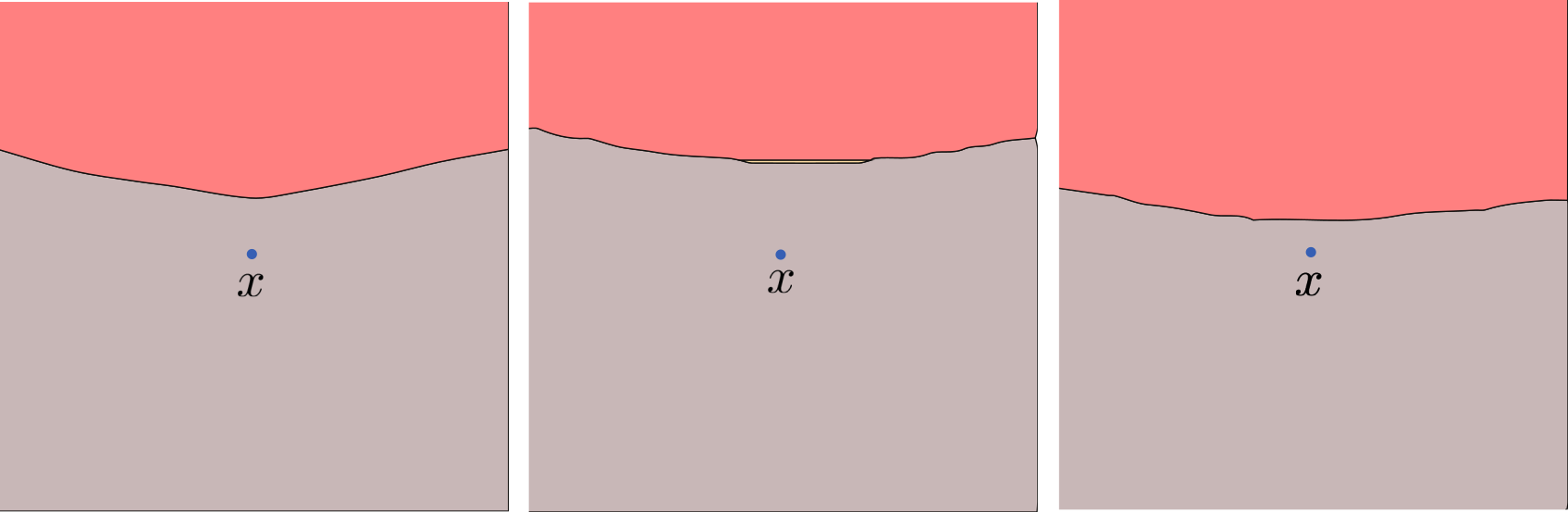}
	\caption{\label{fig:two_dimensional_cross_sections}Two dimensional normal cross-sections of the decision boundary of a deep network classifier along random directions, in the vicinity of different natural images (denoted by $\x$ for each cross-section). The CaffeNet architecture \cite{jia2014} trained on ImageNet \cite{russakovsky2015imagenet} was used.}
\end{figure}




\subsection{Robustness Results Under LAF Model}

Our next result shows that, provided $f$ is Locally Approximately Flat, a very similar result to Theorem~\ref{thm:lp_noise} holds, with the normal vector $\w$ replaced by the gradient of $f$ at the point $\x^*$ of the boundary that is closest to $\x$. It should be noted that for nonlinear classifiers, the gradient $\nabla f(\x^*)$ plays the same role as $\w$ for linear classifiers, as it is normal to the tangent to the decision boundary at $\x^*$.

\begin{theorem}
\label{thm:lp_noise_laf}
Let $p \in [1, \infty]$. Let $p' \in [1,\infty]$ be such that $\frac{1}{p} + \frac{1}{p'} = 1$. Let $\varepsilon_0, \zeta_1(\varepsilon), \zeta_2(\varepsilon)$ be as in Theorem~\ref{thm:lp_noise}. Then, for all $\varepsilon < \varepsilon_0$, the following holds.

Assume $f$ is a classifier that is $(\gamma, \eta)$-LAF at point $\x$ and $\x^*$ be such that $\r^*_{p}(\x) = \x^* - \x$. Then:
\begin{equation*}
(1-\gamma) \zeta_1(\varepsilon) d^{1/p} \frac{ \| \nabla f(\x^*) \|_{p'} }{\| \nabla f(\x^*) \|_{2} } \leq  \frac{r_{p,\varepsilon}(\x)}{\| \r^*_{p}(\x) \|_p} \leq (1+\gamma) \zeta_2(\varepsilon) d^{1/p} \frac{ \| \nabla f (\x^*) \|_{p'} }{\| \nabla f(\x^*) \|_{2} } ,
\end{equation*}
provided $$\eta \geq (1+\gamma) \zeta_2(\varepsilon) d^{1/p} \frac{ \| \nabla f (\x^*) \|_{p'} }{\| \nabla f(\x^*) \|_{2}} \left\|\r_p^*\left(\x\right) \right\|_p .$$
\end{theorem}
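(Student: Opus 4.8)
The plan is to reduce the nonlinear LAF case to the already-proven linear Theorem~\ref{thm:lp_noise} by a sandwiching argument. Fix $\x$ and let $\x^* \in \mathcal{S}$ attain $\|\x - \x^*\|_p = \|\r^*_p(\x)\|_p$, and let $\w_0 = \nabla f(\x^*)$, which is normal to the tangent hyperplane $\mathcal{T}(\x^*)$. Consider the two affine linear classifiers whose decision boundaries are the hyperplanes parallel to $\mathcal{T}(\x^*)$ passing through $\gamma \x + (1-\gamma)\x^*$ and through $\x^* + \gamma(\x^* - \x)$, respectively; call these the inner classifier $g^-$ and the outer classifier $g^+$. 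By the definition of $(\gamma,\eta)$-LAF, for every point $\z \in \calB_p(\x,\eta)$ we have: if $g^-(\z) \neq g^-(\x)$ then $g(\z) \neq g(\x)$, and if $g(\z) \neq g(\x)$ then $g^+(\z) \neq g^+(\x)$ — i.e., on the ball of radius $\eta$, the event $\{g(\x + \alpha\vb) \neq g(\x)\}$ is sandwiched between the corresponding events for $g^+$ and $g^-$.

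The key quantitative observation is that the distance from $\x$ to the decision boundary of $g^-$ is exactly $(1-\gamma)\|\r^*_p(\x)\|_p$ and the distance from $\x$ to the boundary of $g^+$ is exactly $(1+\gamma)\|\r^*_p(\x)\|_p$, since these hyperplanes are parallel translates of $\mathcal{T}(\x^*)$ placed at the prescribed fractions along the segment. Both linear classifiers share the same normal direction $\w_0$, hence the same ratio $d^{1/p}\|\w_0\|_{p'}/\|\w_0\|_2$. Applying Theorem~\ref{thm:lp_noise} to $g^-$ gives $r_{p,\varepsilon}^{g^-}(\x) \geq \zeta_1(\varepsilon) d^{1/p} \frac{\|\w_0\|_{p'}}{\|\w_0\|_2} (1-\gamma)\|\r^*_p(\x)\|_p$; applying it to $g^+$ gives $r_{p,\varepsilon}^{g^+}(\x) \leq \zeta_2(\varepsilon) d^{1/p}\frac{\|\w_0\|_{p'}}{\|\w_0\|_2}(1+\gamma)\|\r^*_p(\x)\|_p$. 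The sandwiching of events then yields $r_{p,\varepsilon}^{g^-}(\x) \leq r_{p,\varepsilon}(\x) \leq r_{p,\varepsilon}^{g^+}(\x)$, which chains into the claimed two-sided bound after dividing by $\|\r^*_p(\x)\|_p$.

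The main technical subtlety — and the reason for the hypothesis $\eta \geq (1+\gamma)\zeta_2(\varepsilon) d^{1/p}\frac{\|\nabla f(\x^*)\|_{p'}}{\|\nabla f(\x^*)\|_2}\|\r^*_p(\x)\|_p$ — is that the LAF comparison with $g^+$ and $g^-$ is only valid inside $\calB_p(\x,\eta)$, whereas the robustness $r_{p,\varepsilon}$ involves scalings $\alpha\vb$ with $\vb$ in the unit $\ell_p$-ball, so $\|\alpha\vb\|_p \leq |\alpha|$. Thus I must verify that the relevant value of $|\alpha|$ — namely anything up to $r_{p,\varepsilon}^{g^+}(\x)$, which by the linear theorem is at most $\zeta_2(\varepsilon)d^{1/p}\frac{\|\w_0\|_{p'}}{\|\w_0\|_2}(1+\gamma)\|\r^*_p(\x)\|_p$ — stays below $\eta$, so that all the relevant perturbed points lie in the ball where the sandwiching holds; the stated hypothesis on $\eta$ is exactly what guarantees this. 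I expect the bookkeeping here, making precise that it suffices to control $\alpha$ up to the upper robustness value rather than all $\alpha$, to be the only delicate point; everything else is a direct transfer from Theorem~\ref{thm:lp_noise}. One should also note that the multi-class subtlety (handling $g = \argmax_k f_k$ rather than a sign) is already folded into the linear theorem's multi-class version in the supplementary material, so the same reduction applies there.
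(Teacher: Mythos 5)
Your proof follows the paper's argument essentially verbatim: sandwich the decision boundary between the two parallel hyperplanes with common normal $\nabla f(\x^*)$ at $\ell_p$-distances $(1-\gamma)\|\r_p^*(\x)\|_p$ and $(1+\gamma)\|\r_p^*(\x)\|_p$ from $\x$, apply Theorem~\ref{thm:lp_noise} to each, and use the hypothesis on $\eta$ to guarantee the comparison is valid for every relevant $|\alpha|$. One slip worth fixing: the two implications you extract from the LAF definition are stated backwards --- LAF actually gives, on $\calB_p(\x,\eta)$, that $g(\z)\neq g(\x) \Rightarrow g^-(\z)\neq g^-(\x)$ (contrapositive of ``$\mathcal{H}^-_\gamma$ is classified as $\x$'') and $g^+(\z)\neq g^+(\x) \Rightarrow g(\z)\neq g(\x)$, which are precisely the containments that produce your (correct) chain $r^{g^-}_{p,\varepsilon}(\x)\leq r_{p,\varepsilon}(\x)\leq r^{g^+}_{p,\varepsilon}(\x)$; as literally written, your implications would yield the opposite ordering.
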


In the case where $\nabla f (\x^*)$ is uncorrelated with the basis used to write the signal (which we model by taking for $\nabla f (\x^*)$ a random direction in the $\ell_2$ sphere), we obtain the same result as in Proposition~\ref{prop:MeanBoundsRandom} (i.e., by replacing $\w$ with $\nabla f (\x^*)$). This provides bounds on the robustness to random noise that only depend on $\| \r^*_2 \|_2$ and $d$.
We show that these asymptotic bounds provide accurate estimates of the empirical robustness in Section~\ref{sec:experiments}.

The result on Gaussian noise also holds for LAF classifiers.
\begin{theorem}
\label{thm:gaussian_noise_laf}
Let $\Sigma$ be a $d \times d$ positive semidefinite matrix with $\tr(\Sigma) = 1$. Let $\varepsilon'_0, \zeta'_1(\varepsilon), \zeta'_2(\varepsilon)$ as in Theorem~\ref{thm:gaussian_noise}. Then, for all $\varepsilon < \frac{1}{2}\varepsilon'_0$, the following holds.

Assume $f$ is a classifier that is $(\gamma, \eta)$-LAF at point $\x$ and $\x^*$ be such that $\r^*_{2}(\x) = \x^* - \x$. Then:
\begin{equation*}
(1-\gamma) \zeta'_1\left(\frac{\varepsilon}{2}\right) \frac{ \| \nabla f(\x^*) \|_{2} }{\| \sqrt{\Sigma} \nabla f(\x^*) \|_{2} } \leq  \frac{r_{\Sigma,\varepsilon}(\x)}{\| \r^*_{2}(\x) \|_2} \leq (1+\gamma) \zeta'_2\left(\frac{3\varepsilon}{2}\right) \frac{ \| \nabla f (\x^*) \|_{2} }{\| \sqrt{\Sigma} \nabla f(\x^*) \|_{2} },
\end{equation*}
provided, using $\psi(\varepsilon)= 8\tr\left(\Sigma^2\right) \ln \frac{4}{\varepsilon}$,$$\eta \geq (1+\gamma) (1 + \psi(\varepsilon)) \zeta'_2\left(\frac{3\varepsilon}{2}\right) \frac{ \| \nabla f (\x^*) \|_{2} }{\| \sqrt{\Sigma} \nabla f(\x^*) \|_{2} }\left\|\r_2^*\left(\x\right) \right\|_2.$$
\end{theorem}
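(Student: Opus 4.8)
The plan is to reduce the statement to the one–dimensional Gaussian computation behind Theorem~\ref{thm:gaussian_noise} via the LAF sandwich, and to add one new ingredient: a tail bound controlling the probability that the Gaussian noise leaves the ball $\calB_{2}(\x,\eta)$.

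\emph{Reduction to the linear case.} Since $\x^{*}$ minimises $\|\x-\x^{*}\|_{2}$ over the smooth boundary $\S$, the vector $\r^{*}_{2}(\x)=\x^{*}-\x$ is normal to $\S$ at $\x^{*}$, hence collinear with $\w:=\nabla f(\x^{*})$. Set $\u=\r^{*}_{2}(\x)/\|\r^{*}_{2}(\x)\|_{2}$ and $s(\z)=\u^{T}(\z-\x)$. The two hyperplanes in the LAF definition are $\{s=(1-\gamma)\|\r^{*}_{2}(\x)\|_{2}\}$ (bounding $\Hcal^{-}_{\gamma}(\x,\x^{*})$) and $\{s=(1+\gamma)\|\r^{*}_{2}(\x)\|_{2}\}$ (bounding $\Hcal^{+}_{\gamma}(\x,\x^{*})$), so the $(\gamma,\eta)$-LAF property states that for every $\z\in\calB_{2}(\x,\eta)$ one has $g(\z)=g(\x)$ if $s(\z)\le(1-\gamma)\|\r^{*}_{2}(\x)\|_{2}$, and $g(\z)\neq g(\x)$ if $s(\z)\ge(1+\gamma)\|\r^{*}_{2}(\x)\|_{2}$. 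For $\vb\sim\mathcal{N}(\mathbf{0},\Sigma)$ one has $s(\x+\alpha\vb)=\alpha\,\u^{T}\vb\sim\mathcal{N}(0,\alpha^{2}\sigma^{2})$ with $\sigma:=\|\sqrt{\Sigma}\u\|_{2}=\|\sqrt{\Sigma}\w\|_{2}/\|\w\|_{2}$ (using $\u=\pm\w/\|\w\|_{2}$), so $\bb{P}\{s(\x+\alpha\vb)\ge t\}=\bar{\Phi}(t/(\alpha\sigma))$ where $\bar{\Phi}(u)=\bb{P}\{G\ge u\}$, $G\sim\mathcal{N}(0,1)$. This is exactly the computation behind Theorem~\ref{thm:gaussian_noise}: for a linear classifier $r_{\Sigma,\varepsilon}(\x)/\|\r^{*}_{2}(\x)\|_{2}=\|\w\|_{2}/(\|\sqrt{\Sigma}\w\|_{2}\,\bar{\Phi}^{-1}(\varepsilon))$, and in particular that theorem yields $\zeta'_{1}(\varepsilon)\le 1/\bar{\Phi}^{-1}(\varepsilon)\le\zeta'_{2}(\varepsilon)$ for $\varepsilon<\varepsilon'_{0}$. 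The only genuinely new task is then to bound the probability that $\x+\alpha\vb$ leaves $\calB_{2}(\x,\eta)$.

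\emph{Upper bound.} Take $\alpha_{0}=(1+\gamma)\,\zeta'_{2}(3\varepsilon/2)\,\frac{\|\w\|_{2}}{\|\sqrt{\Sigma}\w\|_{2}}\,\|\r^{*}_{2}(\x)\|_{2}$. As $\varepsilon<\tfrac12\varepsilon'_{0}$ gives $3\varepsilon/2<\varepsilon'_{0}$, we have $\zeta'_{2}(3\varepsilon/2)\ge 1/\bar{\Phi}^{-1}(3\varepsilon/2)$, hence $(1+\gamma)\|\r^{*}_{2}(\x)\|_{2}/(\alpha_{0}\sigma)\le\bar{\Phi}^{-1}(3\varepsilon/2)$ and thus $\bb{P}\{\x+\alpha_{0}\vb\in\Hcal^{+}_{\gamma}(\x,\x^{*})\}\ge 3\varepsilon/2$. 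The hypothesis on $\eta$ is exactly $\eta\ge(1+\psi(\varepsilon))\,\alpha_{0}$, so $\bb{P}\{\x+\alpha_{0}\vb\notin\calB_{2}(\x,\eta)\}=\bb{P}\{\|\vb\|_{2}>\eta/\alpha_{0}\}\le\bb{P}\{\|\vb\|_{2}>1+\psi(\varepsilon)\}\le\varepsilon/2$ by the concentration bound below. Since the event $\{\x+\alpha_{0}\vb\in\Hcal^{+}_{\gamma}(\x,\x^{*})\cap\calB_{2}(\x,\eta)\}$ forces a label change, $\bb{P}\{g(\x+\alpha_{0}\vb)\neq g(\x)\}\ge 3\varepsilon/2-\varepsilon/2=\varepsilon$, i.e.\ $r_{\Sigma,\varepsilon}(\x)\le\alpha_{0}$, the asserted upper bound.

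\emph{Lower bound.} Take $\alpha_{1}=(1-\gamma)\,\zeta'_{1}(\varepsilon/2)\,\frac{\|\w\|_{2}}{\|\sqrt{\Sigma}\w\|_{2}}\,\|\r^{*}_{2}(\x)\|_{2}$ and fix $\alpha<\alpha_{1}$. Since $\zeta'_{1}(\varepsilon/2)\le 1/\bar{\Phi}^{-1}(\varepsilon/2)$, $(1-\gamma)\|\r^{*}_{2}(\x)\|_{2}/(\alpha\sigma)>\bar{\Phi}^{-1}(\varepsilon/2)$, hence $\bb{P}\{\x+\alpha\vb\notin\Hcal^{-}_{\gamma}(\x,\x^{*})\}<\varepsilon/2$. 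One checks $\alpha_{1}\le\alpha_{0}$ (since $\gamma\ge0$ and $\bar{\Phi}^{-1}$ is decreasing), so $\alpha(1+\psi(\varepsilon))<\alpha_{0}(1+\psi(\varepsilon))\le\eta$ and $\bb{P}\{\x+\alpha\vb\notin\calB_{2}(\x,\eta)\}\le\bb{P}\{\|\vb\|_{2}>1+\psi(\varepsilon)\}\le\varepsilon/2$. On the complement of $\{\x+\alpha\vb\notin\Hcal^{-}_{\gamma}(\x,\x^{*})\}\cup\{\x+\alpha\vb\notin\calB_{2}(\x,\eta)\}$ the point $\x+\alpha\vb$ is classified as $\x$, so a union bound gives $\bb{P}\{g(\x+\alpha\vb)\neq g(\x)\}<\varepsilon$. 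As this holds for every $\alpha<\alpha_{1}$, $r_{\Sigma,\varepsilon}(\x)\ge\alpha_{1}$, the asserted lower bound.

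\emph{Main obstacle.} Everything above is a repackaging of the one–dimensional Gaussian-tail argument of Theorem~\ref{thm:gaussian_noise} applied with $\w=\nabla f(\x^{*})$, sandwiched between the two parallel hyperplanes at signed distances $(1\mp\gamma)\|\r^{*}_{2}(\x)\|_{2}$, with the $3\varepsilon/2$ and $\varepsilon/2$ budgets arranged so that the escape term is absorbed and the net failure probability stays $\varepsilon$. The one new estimate, and the delicate point, is $\bb{P}\{\|\vb\|_{2}>1+\psi(\varepsilon)\}\le\varepsilon/2$: writing $\vb=\sqrt{\Sigma}\g$ with $\g\sim\mathcal{N}(\mathbf{0},I_{d})$, $\|\vb\|_{2}^{2}=\g^{T}\Sigma\g$ is a Gaussian quadratic form with $\bb{E}\|\vb\|_{2}^{2}=\tr(\Sigma)=1$, and a Bernstein/Laurent--Massart-type deviation inequality for such forms bounds the fluctuation in terms of $\tr(\Sigma^{2})$ (together with $\lambda_{\max}(\Sigma)$); matching this against the threshold $1+\psi(\varepsilon)$ is what fixes $\psi(\varepsilon)=8\tr(\Sigma^{2})\ln(4/\varepsilon)$ and hence the precise form of the condition imposed on $\eta$. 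Checking that this condition is strong enough at the single scale $\alpha_{0}$ and at all $\alpha<\alpha_{1}$ is the last piece of bookkeeping.
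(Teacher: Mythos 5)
Your proof is correct and follows essentially the same route as the paper's: sandwich the decision boundary between the two parallel hyperplanes at signed distances $(1\mp\gamma)\|\r^*_2(\x)\|_2$, apply the linear Gaussian result (Theorem~\ref{thm:gaussian_noise}) to these auxiliary classifiers with the budgets $3\varepsilon/2$ and $\varepsilon/2$, and absorb the event that the noise escapes $\calB_2(\x,\eta)$ using the Bernstein-type concentration of $\|\vb\|_2$ around $1$ with deviation $\psi(\varepsilon)$ — which is exactly the paper's argument. The only difference is presentational (you phrase the linear sub-problem via the exact Gaussian quantile $\bar{\Phi}^{-1}$ and spell out the lower-bound bookkeeping that the paper leaves as "similar reasoning").
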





\section{Experiments}
\label{sec:experiments}


\textbf{Robustness of a binary linear classifier to uniform random noise.} We now assess empirically our bounds for the robustness to random noise. We first consider the 10-class MNIST digit classification task \cite{lecun1998mnist}, and train a binary linear classifier separating digits 0 to 4 from digits 5 to 9, which achieves a performance of $84\%$ in the test set. To assess our analytical results, we compare these to an empirical estimate of the robustness to uniform random noise for different values of $p$. It is based on the combination of the expressions found in Theorem~\ref{thm:lp_noise} and Proposition~\ref{prop:MeanBoundsRandom}; for a fixed $\varepsilon$, our estimate is:
\begin{equation}
	\label{eq:estimate}
	\mathscr{E}_{r_p}\left(\x\right) = \zeta_0 \sqrt{d} \left(\frac{\Gamma\left(\frac{2p-1}{2\left(p-1\right)}\right)}{\sqrt{\pi}}\right)^{1-\frac{1}{p}} \left\| \r^*_{p}(\x) \right\|_p,
\end{equation}
where $\zeta_0$ is a constant.
The empirical robustness of Eq.~(\ref{eq:robustness_random}) is specifically computed  through an exhaustive search of smallest radius of the $\ell_p$ ball leading to an $\varepsilon$ fraction of misclassified samples. Note moreover that for this linear classifier, the worst-case robustness $\| \r_p^* \|_p$ is given by the distance to the hyperplane, and can therefore be computed in closed form (see the supplementary material). Figure~\ref{fig:MNIST_linear} illustrates the empirical robustness, our theoretical bounds and our estimate (i.e., upper and lower bounds of Theorem~\ref{thm:lp_noise}, and estimate of Eq.~(\ref{eq:estimate})) with respect to $p$. In addition to providing accurate upper and lower bounds for all the range of tested $p$-norms, observe that our estimate provides a remarkably accurate approximation of the robustness to random noise, for all $p$. Our analytical results hence correctly predict the robustness behavior of this classifier through a wide variety of noise models, and can therefore be used to predict the robustness in these regimes.

\textbf{Robustness of a multi-class deep neural network to uniform random noise.} We now consider a more complex classification setting, where we evaluate the robustness of the VGG-19 deep neural network on the multi-class ImageNet dataset of natural images \cite{russakovsky2015imagenet}. Similarly to our experiment for the linear classifier, we compare the empirical value of the robustness for different values of $p$ to our theoretical bounds from Theorem~\ref{thm:lp_noise_laf} and our estimate from Eq.~(\ref{eq:estimate}). Note that unlike the previous case, the worst-case robustness $\| \r_p^* \|_p$ cannot be obtained in closed-form for deep networks; we therefore estimate it using the algorithm described in \cite{moosavi2015deepfool}. The results are shown in Figure~\ref{fig:ImageNet_VGG}.
Observe that, once again, our estimate predicts accurately the robustness of the deep neural network for different values of $p$. Hence, despite the high nonlinearity of the deep network as a function of the inputs, our bounds established under the LAF assumption hold accurately for all tested values of $p$. 



\begin{figure}
	\centering
	\includegraphics[width=0.8\textwidth]{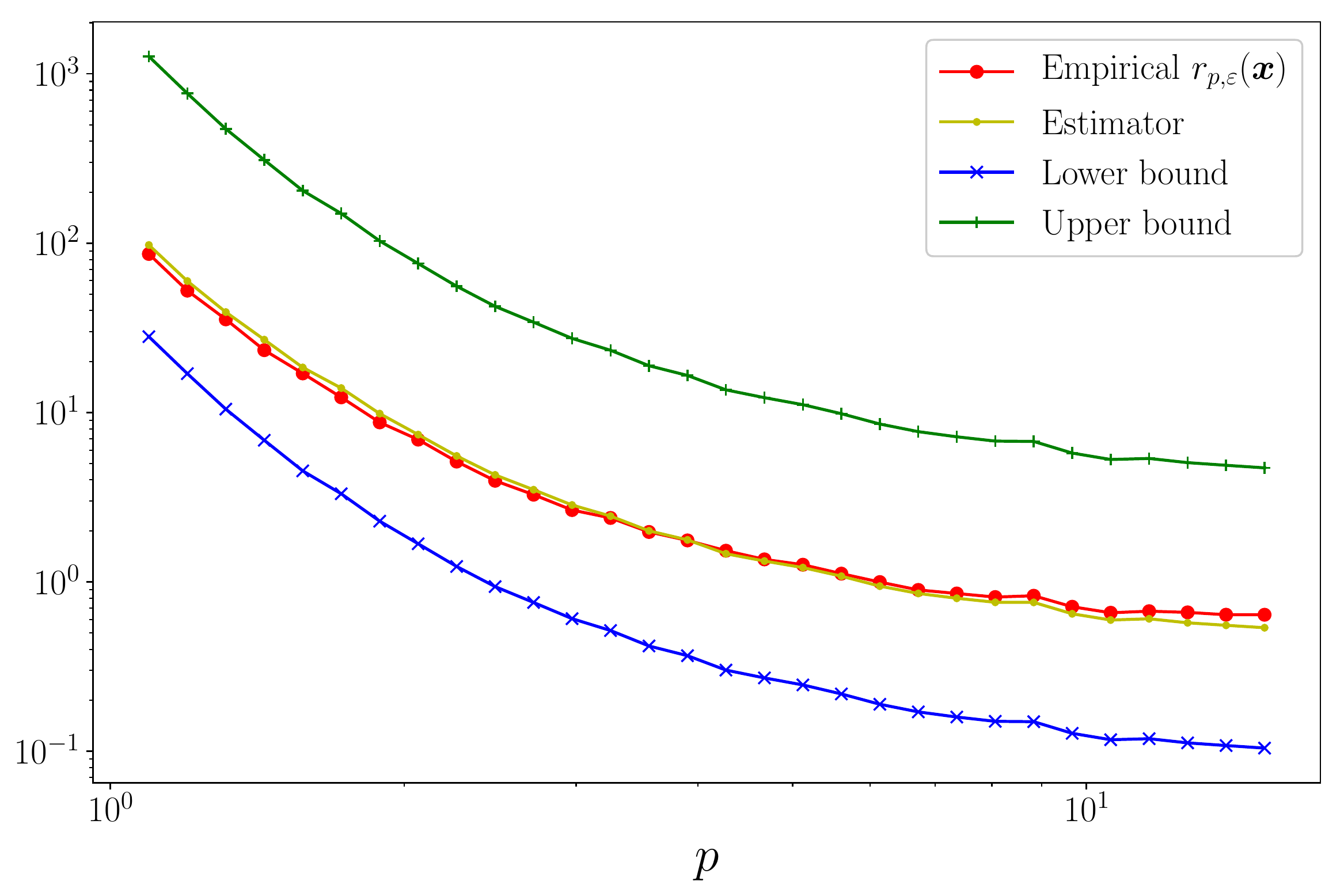}
	\caption{\label{fig:MNIST_linear} Empirical robustness to random uniform noise, derived upper and lower bounds (Theorem~\ref{thm:lp_noise}) and estimate from Eq.~(\ref{eq:estimate}), as a function of $p$ for a linear classifier trained on MNIST. For a given $p$, empirical robustness was computed through an exhaustive search of the smallest radius of the ball where an $\varepsilon$ fraction of points sampled uniformly from the ball are misclassified. We choose $\varepsilon = 1.5\%$, empirically find $\zeta_0 \approx 0.72$, and run the experiments for each chosen $p$ over 1,000 random images from the MNIST test set.}
\end{figure}

\begin{figure}
	\centering
	\includegraphics[width=0.8\textwidth]{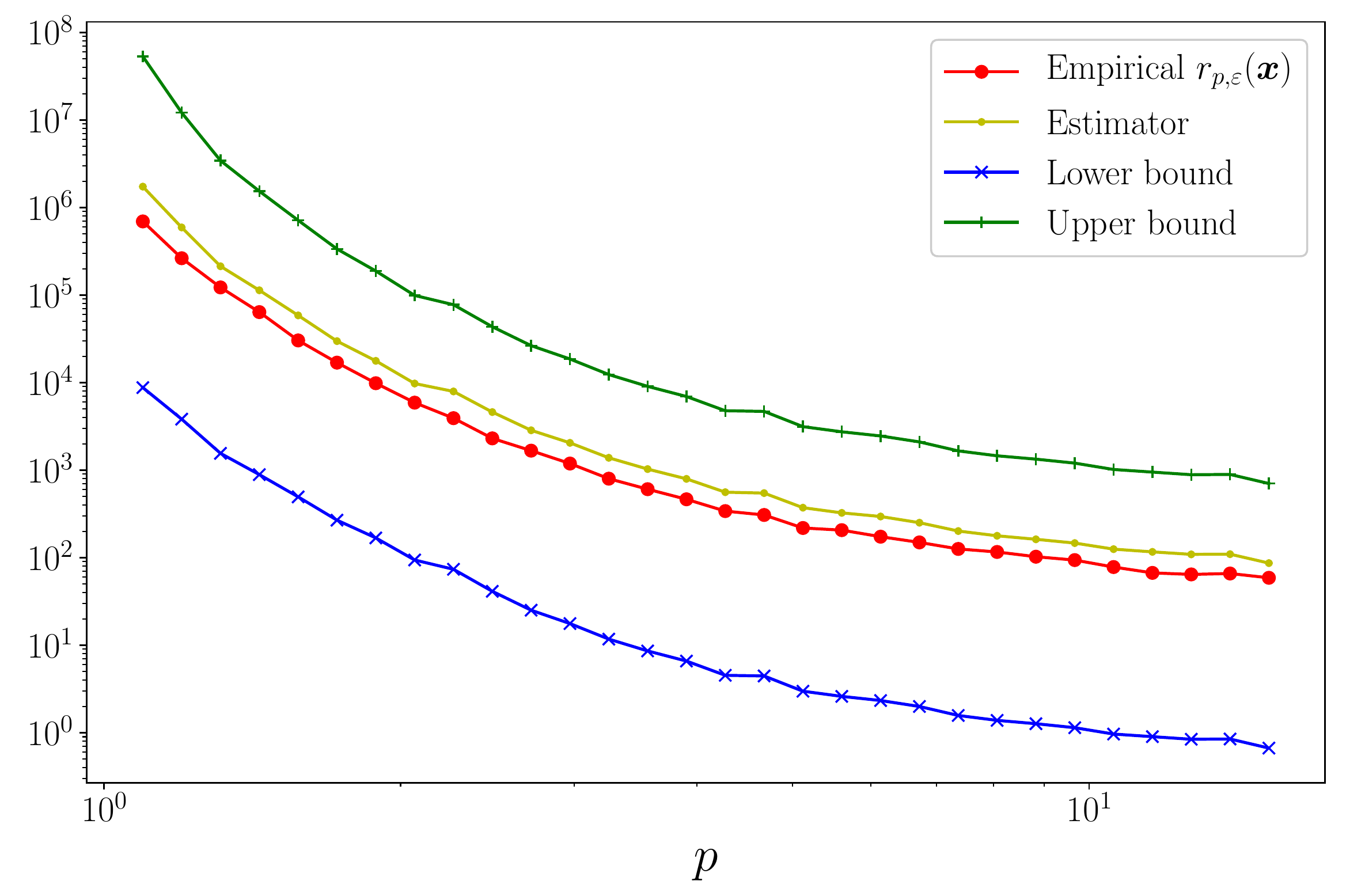}
	\caption{\label{fig:ImageNet_VGG} Empirical robustness to random uniform noise, derived upper and lower bounds (Theorem~\ref{thm:lp_noise_laf}) and estimate from Eq.~(\ref{eq:estimate}), as a function of $p$ for the VGG-19 classifier trained on ImageNet. See the caption of Figure~\ref{fig:MNIST_linear} for more details about the computation of $r_{p,\varepsilon}(\x)$. We choose $\varepsilon = 1.5\%$, use $\zeta_0 \approx 0.72$ as in Figure~\ref{fig:MNIST_linear}, and run the experiments for each chosen $p$ over 200 images from the ImageNet validation set.}
\end{figure}

\textbf{Robustness of a deep neural network to quantization.} We now leverage our analytical results to assess the robustness of a deep neural network classifier to image quantization. When a signal $\x$ is quantized into a discrete valued-signal $Q(\x)$, the quantization noise $Q(\x) - \x$ is often modeled as a signal independent uniform random variable \cite[Chapter~4.5]{bovik2010handbook}. That is, under this assumption, $Q(\x) - \x$ is uniformly distributed over $\mathcal{B}_{\infty} (\bs{0}, \Delta/2)$, with $\Delta$ denoting the quantization step size. According to our analytical results in Section~\ref{sec:robustness_nonlinear}, the approximate step size $\Delta$ that the classifier can tolerate (without changing the estimated label of the quantized image) with probability $1-\varepsilon$ is thus given by:
\[
\Delta = \frac{2 \zeta_0}{\sqrt{\pi}} \sqrt{d} \| \r_{\infty}^*(\x) \|_{\infty},
\]
using the estimate of Eq.~(\ref{eq:estimate}). Moreover, the number of quantization levels required to guarantee robustness of the classifier is therefore estimated by
\begin{align}
\label{eq:prediction_L}
L_q = \frac{255}{\frac{2 \zeta_0}{\sqrt{\pi}} \sqrt{d} \| \r^*_{\infty}(\x) \|_{\infty}}.
\end{align}
In other words, Eq.~(\ref{eq:prediction_L}) predicts that images encoded with more than $\log_2 \left( \frac{255}{\frac{2 \zeta_0}{\sqrt{\pi}} \sqrt{d} \| \r^*_{\infty}(\x) \|_{\infty}} \right)$ bits will have the same estimated label as the original image with high probability, despite quantization. Figure~\ref{fig:quantization} shows that this prediction is a good approximation of the real quantization level computed for $8,000$ images from the ImageNet validation set for the VGG-19 classifier. In this experiment, we use a minimum variance quantization. Moreover, as commonly done, dithering is also applied to improve the perceptual quality of the quantized image.  Interestingly, as predicted by our analysis, most images can be heavily quantized (with e.g., $3$ bits) without changing the label of the classifier, despite the significant distortions to the images caused by heavy quantization (see Figure~\ref{fig:quantization_image} for example images). Finally, note that our analytical results confirm and quantify earlier empirical observations that highlighted the high robustness of classifiers to compression mechanisms \cite{dodge2016understanding, paola1995effect}.

\begin{figure}
	\centering
	\includegraphics[width=0.7\textwidth]{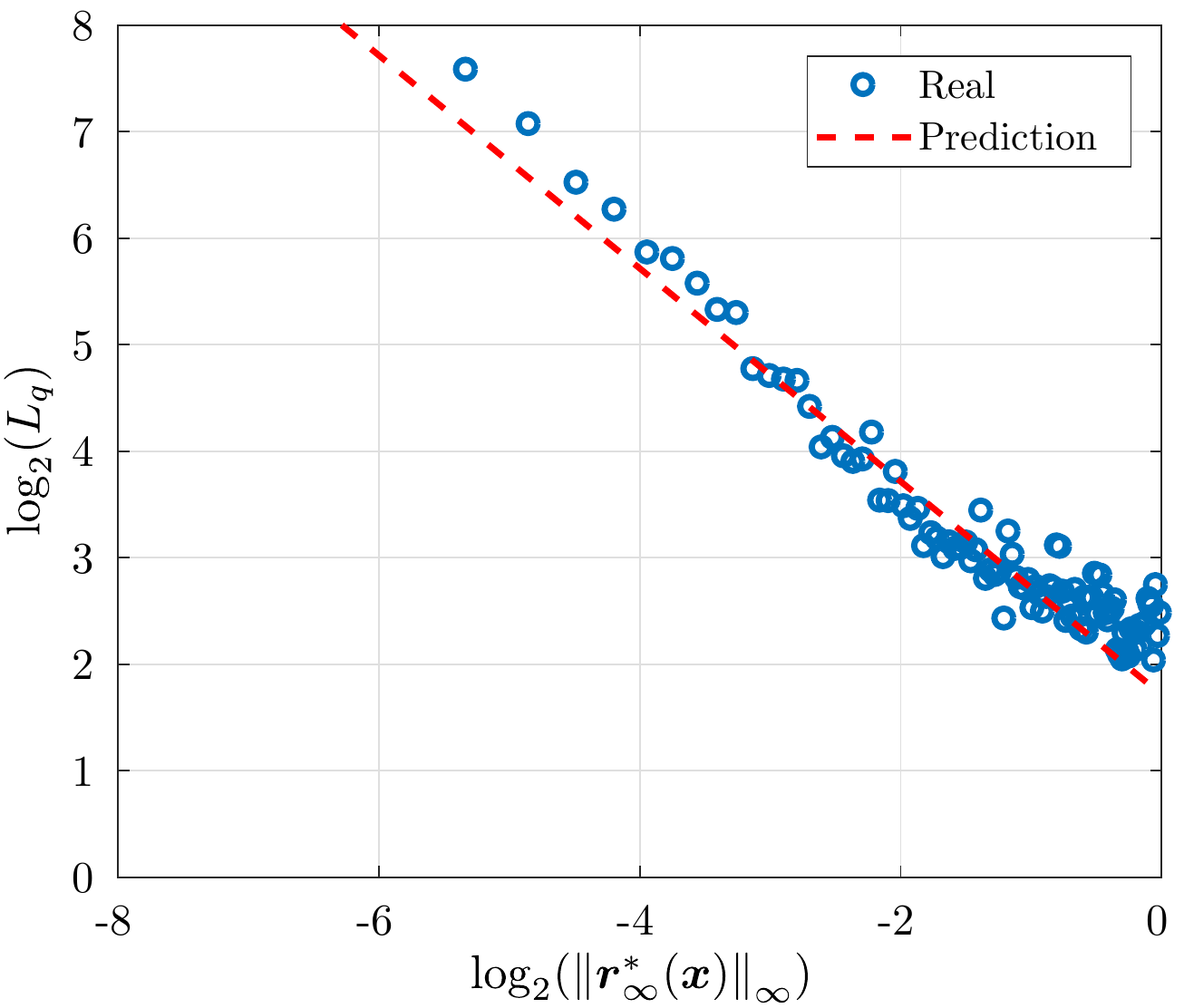}
	\caption{\label{fig:quantization} Minimum number of bits required to encode an image to guarantee similar estimated label as original image vs. $\log_2(\|\r^*_{\infty} (\x)\|_{\infty})$. \textit{Real} points are computed through an exhaustive search of the required quantization level (with different images), and \textit{Prediction} is computed using Eq.~(\ref{eq:prediction_L}). We choose $\varepsilon=1.5\%$ and $\zeta_0=0.72$ as in Figure~\ref{fig:ImageNet_VGG}.}
\end{figure}

\begin{figure}
	\centering
	\includegraphics[width=0.85\textwidth]{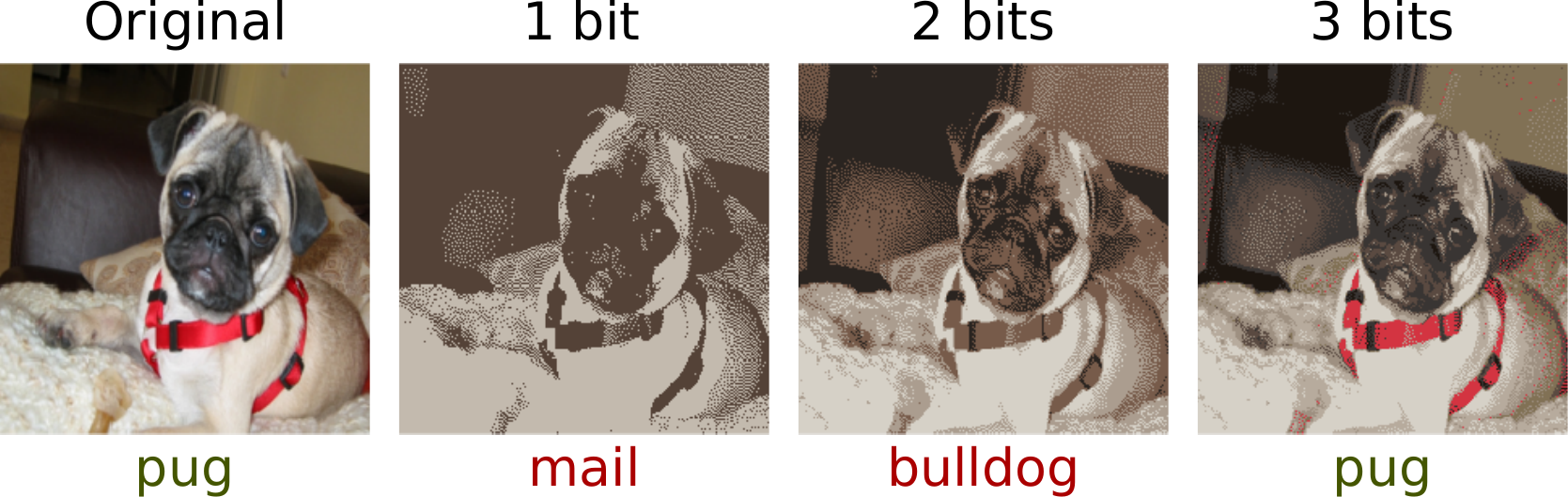}
	\caption{\label{fig:quantization_image}Example image, where a quantization (with dithering) using $3$ bits leads to correct classification.}
\end{figure}

\textbf{Robustness to signal-dependent Gaussian noise.} We now consider the case where some Gaussian noise that correlates with the input image is added to this image. That is, we consider a Gaussian noise $\mathcal{N}(0, \Sigma(\x))$, where $\Sigma(\x)$ is a diagonal matrix such that $\Sigma(\x)_{ii} = 1_{x_i \geq t} \cdot x_i$, where $x_i$ denotes the value of pixel $i$, and $t$ denotes a user-specified threshold.\footnote{We consider in practice color images; the quantity $x_i$ refers in this case to $x_{i, r} + x_{i, g} + x_{i, b}$, where $x_{i, r}, x_{i, g}, x_{i, b}$ respectively denote the red, green and blue channels.} $\Sigma(\x)$ is further normalized to satisfy $\text{Tr} (\Sigma(\x)) = 1$. Under this noise model, noise is solely added to pixels that are ``almost white'' (i.e., pixels satisfying $x_i \geq t$), while all other pixels are left untouched. It should be noted that such signal-dependent noise models are commonly used to model physical deficiencies in acquisition, such as shot noise.

Our analytical results for the Gaussian case predict that the robustness to such noise (provided the gradient directions are ``typical'') should be independent of the distribution of eigenvalues $\Sigma(\x)$, and should moreover satisfy\footnote{We stress here that, due to the normalization $\tr (\Sigma(\x)) = 1$, the same amount of noise is added to all images. It is only the distribution of noise that differs: noise is concentrated on few pixels for images with few white pixels, and spread for white images.}
\begin{align}
\label{eq:exp_gaussian}
\frac{1}{2} \sqrt{d} \leq \frac{r_{\Sigma(\x),\varepsilon} (\x)}{\| \r_2^*(\x) \|_2} \leq 2 \sqrt{d},
\end{align}
where $\varepsilon=0.15$. To verify this hypothesis, we show in Figure~\ref{fig:white_gaussian_exp} the ratio $\frac{r_{\Sigma(\x),\varepsilon} (\x)}{\| \r_2^*(\x) \|_2}$ over 30,000 images from the ImageNet validation set for the VGG-19 classifier, as a function of the ``whiteness'' of the image; i.e., $W(\x) = \sum_i 1_{x_i \geq t} x_i$. Similarly to previous experiments, $r_{\Sigma,\varepsilon} (\x)$ is estimated using an exhaustive line search. It can be seen that the ratio $\frac{r_{\Sigma(\x),\varepsilon} (\x)}{\| \r_2^*(\x) \|_2}$ approximately satisfies the bounds in Eq.~(\ref{eq:exp_gaussian}), although the empirical ratio can surpass the upper bound, for images with significant white pixels. This is potentially due to our assumption on the randomness of the direction of the decision boundary, which can be violated in this case: in fact, white pixels (i.e., non-zero eigenvectors of $\Sigma$) appear often in the background of images, and are thus correlated with the decision boundaries of the classifier. Despite this assumption not being satisfied,  our bounds allow us to predict fairly accurately the behavior of a complex deep network in presence of image-dependent Gaussian noise.


\begin{figure}
\centering
\includegraphics[width=0.7\textwidth]{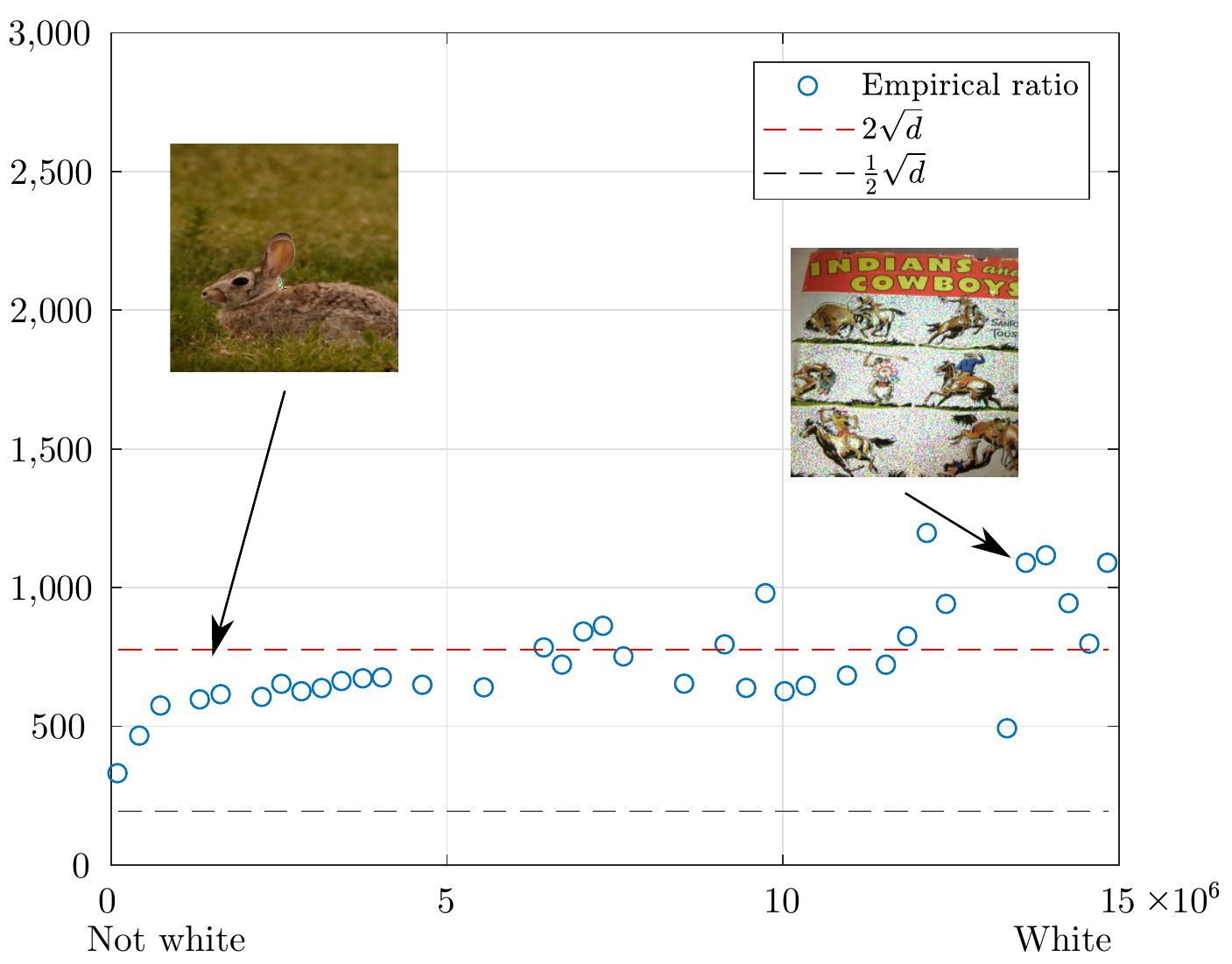}
\caption{\label{fig:white_gaussian_exp} Fraction $\frac{r_{\Sigma(\x),\varepsilon} (\x)}{\| \r_2^*(\x) \|_2}$, where $\varepsilon=15\%$, as a function of $W(\x)$. $W(\x)$ encodes how ``white'' the pixels of the image are. Under our noise model, images with small $W(\x)$ will have the noise concentrated along a few pixels, while images with large $W(\x)$ will have their noise spread across most pixels in the image. Each circle represents the   average robustness ratio of images with the same $W(\x)$.
}
\end{figure}







\section{Conclusion}

We have derived precise bounds on the robustness of linear and nonlinear classifiers to random noise, under two noise distributions: uniform noise in the $\ell_p$ unit ball, and Gaussian noise. Our quantitative results show that state-of-the-art classifiers are orders of magnitude more robust to typical random noise than to worst-case perturbations, typically of order the square root of the input dimension. Such bounds are shown to hold in challenging settings, where a state-of-the-art deep network is used on a large scale multi-class dataset such as ImageNet. Our analysis can be leveraged to quantify the effect of many disturbances (e.g., image quantization) on classifiers, and provide robustness guarantees when such systems are deployed in real world environments. Moreover, our analysis allows us to draw links between different noise regimes, and show the effect of the robustness to adversarial perturbations (or equivalently, the distance to the decision boundary) on other noise regimes.

In this work, we have studied the robustness with respect to generic $\ell_p$ norms. For future work, we believe it would be very interesting to characterize the robustness of classifiers to random perturbations by using perceptual similarity metrics adapted to different modalities, such as images \cite{wang2004image} and speech.


\section*{Acknowledgements}
We gratefully acknowledge the support of NVIDIA Corporation with the
donation of the Titan Xp GPU used for this research.

\small{
	\bibliographystyle{apalike}
	\bibliography{bibliography}
}

\pagebreak

\appendix

\setcounter{theorem}{0}
\setcounter{proposition}{0}

In these appendices, we prove the theoretical results stated in the main article.

\section{Preliminary Results}
In this section, we explicitly compute $\| \r_{p}^*(\x) \|_p$ for a linear classifier as described in the main article.

\begin{lemma}
\label{lem:DistanceHyperplanePNorm}For all $p \in \left[1,\infty\right]$, the $\ell_{p}$-distance from any point $\x$ to the decision hyperplane $\mathcal{H}$ defined by $f\left(\z\right)=0$ is:
\begin{itemize}
	\item if $p=\infty$:
	$$\left\| \r_{\infty}^*(\x) \right\|_{\infty} =\frac{\left|f\left(\x\right)\right|}{\left\Vert \w\right\Vert _{1}}\text{;}$$
	\item if $p=1$:
	$$\left\| \r_1^*(\x) \right\|_{1} =\frac{\left|f\left(\x\right)\right|}{\left\Vert \w\right\Vert _{\infty}}\text{;}$$
	\item if $p \in \left(1,\infty\right)$:
	$$\| \r_{p}^*(\x) \|_p=\frac{\left|f\left(\x\right)\right|}{\left\Vert \w\right\Vert _{\frac{p}{p-1}}}\text{.}$$
\end{itemize}
Overall, for all $p\in\left[1,\infty\right]$,
the $\ell_{p}$-distance from any point $\x$ to the decision hyperplane
$\mathcal{H}:f\left(\z\right)=0$ is:
\begin{equation*}
\| \r_{p}^*(\x) \|_p=\frac{\left|f\left(\x\right)\right|}{\left\Vert \w\right\Vert _{\frac{p}{p-1}}}\text{.}
\end{equation*}
\end{lemma}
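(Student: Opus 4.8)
The plan is to compute $\| \r_p^*(\x) \|_p = \min\{ \|\r\|_p : f(\x+\r) \neq f(\x) \text{ sign-wise}\}$ directly. Since $f(\z) = \w^T\z + \bs{b}$ is affine, the estimated label changes as soon as $\x+\r$ crosses the hyperplane $\mathcal{H} = \{\z : \w^T\z + \bs{b} = 0\}$, so up to the boundary technicality mentioned in the main text the quantity equals the $\ell_p$-distance $\dist_p(\x,\mathcal{H}) = \inf\{\|\r\|_p : \w^T(\x+\r) + \bs{b} = 0\}$. Writing $t = f(\x) = \w^T\x + \bs{b}$, the constraint becomes $\w^T\r = -t$, so we must solve the linearly-constrained norm-minimization problem $\min\{\|\r\|_p : \w^T \r = -t\}$. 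First I would reduce to the homogeneous statement: by scaling, $\dist_p(\x,\mathcal{H}) = |t| \cdot \min\{\|\r\|_p : \w^T\r = 1\}$, so everything comes down to evaluating $m_p(\w) := \min\{\|\r\|_p : \w^T\r = 1\}$.

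The key step is the duality identity $m_p(\w) = 1/\|\w\|_{p'}$ where $1/p + 1/p' = 1$. This is exactly the statement that the norm of the linear functional $\r \mapsto \w^T\r$ on $(\Rbb^d,\|\cdot\|_p)$ is $\|\w\|_{p'}$, i.e., the dual-norm characterization: on one hand, Hölder's inequality gives $1 = \w^T\r \le \|\w\|_{p'}\|\r\|_p$ for every feasible $\r$, hence $m_p(\w) \ge 1/\|\w\|_{p'}$; on the other hand, equality in Hölder is attained by an explicit $\r$, which exhibits a feasible point of the right norm. Concretely, for $p \in (1,\infty)$ take $\r_i \propto \sign(w_i)\,|w_i|^{p'-1}$ normalized so that $\w^T\r = 1$; a short computation shows $\|\r\|_p = 1/\|\w\|_{p'}$. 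For $p = 1$ (so $p' = \infty$), take $\r = e_{i_0}/w_{i_0}$ where $|w_{i_0}| = \|\w\|_\infty$, giving $\|\r\|_1 = 1/\|\w\|_\infty$. For $p = \infty$ (so $p' = 1$), take $\r_i = \sign(w_i)/\|\w\|_1$, giving $\|\r\|_\infty = 1/\|\w\|_1$ and $\w^T\r = 1$. Multiplying back by $|t| = |f(\x)|$ yields the three bulleted formulas, and since $\frac{p}{p-1} = p'$ in all cases (with the conventions $p'=\infty$ at $p=1$ and $p'=1$ at $p=\infty$), they collapse to the single displayed expression $\| \r_p^*(\x) \|_p = |f(\x)|/\|\w\|_{p/(p-1)}$.

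I do not anticipate a serious obstacle here — this is the classical computation of the $\ell_p$-distance from a point to a hyperplane. The only mild care needed is the handling of the endpoint cases $p\in\{1,\infty\}$, where Hölder's inequality and its equality conditions must be invoked with the correct conjugate exponent ($\infty$ and $1$ respectively) rather than the formula $p/(p-1)$ literally, and the noting that the extremal $\r$ need not be unique at the endpoints (it still attains the minimum, which is all we need). Assuming the reduction to the decision hyperplane (justified because $g$ changes value exactly when $f$ changes sign, and by continuity the infimum over strict sign-changes equals the infimum over reaching the boundary), the proof is just Hölder plus an explicit minimizer in each regime.
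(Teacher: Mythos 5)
Your proof is correct, but it takes a different route from the paper's. The paper argues geometrically and case-by-case: for $p=\infty$ (and symmetrically $p=1$) it identifies which vertex of the growing $\ell_\infty$ ball (hypercube) first touches the hyperplane, and for $p\in(1,\infty)$ it imposes the tangency condition between the hyperplane and the $\ell_p$ ball, deriving $n_i^{p-1}=\lambda w_i$ for the contact point $\x+\bs{n}$ and solving for the radius. You instead reduce to the homogeneous problem $\min\{\|\r\|_p : \w^T\r=1\}$ and invoke the dual-norm identity: Hölder gives the lower bound $\|\r\|_p\ge 1/\|\w\|_{p'}$ uniformly over feasible $\r$, and the explicit Hölder extremizers (which, for $p\in(1,\infty)$, coincide with the paper's contact point $n_i\propto \sign(w_i)|w_i|^{p'-1}$) show it is attained. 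Your argument is more unified — one mechanism covers all three regimes, with only the conjugate-exponent conventions differing at the endpoints — and it is arguably tighter on the optimality side, since the Hölder bound certifies minimality over \emph{all} feasible perturbations, whereas the paper's tangency argument implicitly assumes the optimum occurs at a smooth contact point (and its "without loss of generality, every coordinate of $\bs{n}$ is non-negative" step glosses over sign bookkeeping that your $\sign(w_i)$ factors handle explicitly). What the paper's version buys in exchange is a concrete geometric picture of where the closest boundary point sits, which it reuses implicitly when discussing adversarial perturbations for various $p$.
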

\begin{proof}
We distinguish between the three cases.
\begin{itemize}
	\item Suppose $p=\infty$.
	The distance from $\x$ to $\mathcal{H}$ is equal to the minimum radius
	$\alpha$ of a ball (i.e., for $\ell_{\infty}$, a hypercube)
	centered at $\x$ that intersects $\mathcal{H}$. This intersection
	with minimum radius necessarily contains a vertex of the hypercube.
	To determine which one, it suffices to determine which vector $\x+\alpha\bs{\varepsilon}$,
	with $\bs{\varepsilon}\in\left\{ -1,1\right\} ^{d}$, first intersects $\mathcal{H}$
	when $\alpha$ increases starting at $0$. Such an intersection arises
	when $\w^{T}\left(\x+\alpha\bs{\varepsilon}\right)+\bs{b}=0$, so $\alpha=-\frac{f\left(\x\right)}{\w^{T}\varepsilon}$,
	and since $\alpha$ must be non-negative:
	\[
	\r_{\infty}^*(\x)=\min_{f\left(\x\right)\cdot \w^{T}\bs{\varepsilon}\leq0}-\frac{f\left(\x\right)}{\w^{T}\bs{\varepsilon}}=\frac{\left|f\left(\x\right)\right|}{\left\Vert \w\right\Vert _{1}}\text{,}
	\]
	because $\bs{\varepsilon}\in\left\{ -1,1\right\} ^{d}$ (simply choose $\varepsilon_{i}=\sign\left(-f\left(\x\right)w_{i}\right)$).

	\item Suppose $p=1$.
	In this case, the proof is symmetric to the one for $p=\infty$,
	with $\bs{\varepsilon}\in\left\{ -1,1\right\} ^{d}$ having exactly one non-zero
	coordinate.

	\item Suppose $p \in \left(1,\infty\right)$.
	The distance from $\x$ to $\mathcal{H}$ is equal to the minimum radius
	$\alpha$ of an $\ell_{p}$ ball $\mathcal{B}_{p}$ centered at $\x$
	that intersects $\mathcal{H}$. This ball is described by the following
	equation (where $\bs{z}$ is the variable):
	\[
	\sum_{i=1}^{d}\left(z_{i}-x_{i}\right)^{p}\leq\alpha^{p}\text{.}
	\]
	For such a minimum radius, the plane described by $f\left(\z\right)=0$
	is tangent to $\mathcal{B}_{p}$ at some point $\x+\bs{n}$. Let us assume
	without loss of generality that every coordinate of $\bs{n}$ is non-negative.
	We also know that this hyperplane is described by the following equations (where $\bs{z}$ is the variable):
	\begin{eqnarray*}
	\nabla_{\x+\bs{n}}\left(\sum_{i=1}^{d}\left(z_{i}-x_{i}\right)^{p}-\alpha^{p}\right)^{T}\left(\z-\left(\x+\bs{n}\right)\right)=0 & \Leftrightarrow & \sum_{i=1}^{d}n_{i}^{p-1}\left(z_{i}-x_{i}-n_{i}\right)=0\\
	 & \Leftrightarrow & \sum_{i=1}^{d}n_{i}^{p-1}\left(z_{i}-x_{i}\right)=\alpha^{p}\text{,}
	\end{eqnarray*}
	beacuse $\bs{n}$ belongs to the boundary of $\mathcal{B}_p$.
	The last equation thus describes the same hyperplane as $\w^{T} \z=-\bs{b}$.
	Therefore, there exists $\lambda\in\mathbb{R}\setminus\left\{ 0\right\} $
	such that $\forall i,n_{i}^{p-1}=\lambda w_{i}$. Then, since $\x+\bs{n}\in\mathcal{B}_{p}$:
	\[
	\sum_{i=1}^{d}n_{i}^{p-1}\left(\left(x_{i}+n_{i}\right)-x_{i}\right)=\lambda\sum_{i=1}^{d}w_{i}n_{i}=\alpha^{p}\text{,}
	\]
	and, since $\x+\bs{n}\in\mathcal{H}$:
	\[
	\sum_{i=1}^{d}w_{i}\left(x_{i}+n_{i}\right)+\bs{b}=f\left(x\right)+\w^{T}\bs{n}=0\text{,}
	\]
	we have $\lambda=-\frac{\alpha^{p}}{f\left(\x\right)}$.
	Finally:
	\begin{eqnarray*}
	\alpha & = & \left(\sum_{i=1}^{d}n_{i}^{p}\right)^{\frac{1}{p}}=\left(\sum_{i=1}^{d}\left(\lambda w_{i}\right)^{\frac{p}{p-1}}\right)^{\frac{1}{p}}=\left(\frac{\alpha^{p}}{\left|f\left(\x\right)\right|}\right)^{\frac{1}{p-1}}\left\Vert \w\right\Vert _{\frac{p}{p-1}}^{\frac{1}{p-1}}\\
	\alpha & = & \frac{\left|f\left(\x\right)\right|}{\left\Vert \w\right\Vert _{\frac{p}{p-1}}}\text{.}
	\end{eqnarray*}
\end{itemize}
\end{proof}

\section{Robustness of Linear Classifiers to $\ell_p$ Noise}

\subsection{Main Theorem}

\begin{theorem}
\label{thm:lp_noise2}
Let $p \in [1, \infty]$. Let $p' \in [1,\infty]$ be such that $\frac{1}{p} + \frac{1}{p'} = 1$. Then there exist universal constants $C, c, c' > 0$ such that, for all $\varepsilon < \frac{c^2}{c'}$:
\begin{equation*}
\zeta_1(\varepsilon) d^{1/p} \frac{ \| \w \|_{p'} }{\| \w \|_{2} } \leq \frac{r_{p,\varepsilon}(\x)}{\| \r^*_{p}(\x) \|_p} \leq \zeta_2(\varepsilon) d^{1/p} \frac{ \| \w \|_{p'} }{\| \w \|_{2} },
\end{equation*}
where $\zeta_1(\varepsilon) = C \sqrt{\varepsilon}$ and $\zeta_2(\varepsilon) = \frac{1}{\sqrt{c - \sqrt{c' \varepsilon}}}$.
\end{theorem}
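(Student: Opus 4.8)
The plan is to turn the claim into a two-sided estimate for a single quantile of the one-dimensional random variable $\w^{T}\vb$, and then to obtain that estimate from the second and fourth moments of $\w^{T}\vb$: Chebyshev's inequality will give the lower bound on the ratio, the Paley--Zygmund inequality the upper bound. \textbf{Step 1: reduction to a quantile of $\w^{T}\vb$.} Since $f$ is affine, $f(\x+\alpha\vb)=f(\x)+\alpha\,\w^{T}\vb$, so $g(\x+\alpha\vb)\neq g(\x)$ exactly when $f(\x)\bigl(f(\x)+\alpha\,\w^{T}\vb\bigr)<0$. Because the uniform law on $\calB_{p}$ is invariant under $\vb\mapsto-\vb$, the variable $\w^{T}\vb$ is symmetric about $0$, and a short case analysis on the signs of $f(\x)$ and $\alpha$ gives
\[
\bb{P}\{g(\x+\alpha\vb)\neq g(\x)\}=\tfrac12\,\bb{P}\Bigl\{|\w^{T}\vb|\ge \tfrac{|f(\x)|}{|\alpha|}\Bigr\}.
\]
The function $t\mapsto\bb{P}\{|\w^{T}\vb|\ge t\}$ is continuous and nonincreasing, equals $1$ at $t=0$ and vanishes for $t>\|\w\|_{p'}$ (Hölder, as $\|\vb\|_{p}\le 1$); hence $r_{p,\varepsilon}(\x)=|f(\x)|/q$, where $q=q(\varepsilon)$ is the unique value with $\bb{P}\{|\w^{T}\vb|\ge q\}=2\varepsilon$. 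Combined with Lemma~\ref{lem:DistanceHyperplanePNorm}, which gives $\|\r^{*}_{p}(\x)\|_{p}=|f(\x)|/\|\w\|_{p'}$, this yields the exact identity $r_{p,\varepsilon}(\x)/\|\r^{*}_{p}(\x)\|_{p}=\|\w\|_{p'}/q$. Both sides are unchanged under rescaling of $\w$, so we may assume $\|\w\|_{2}=1$; the theorem then reduces to proving $\tfrac{1}{\zeta_{2}(\varepsilon)}\,d^{-1/p}\le q\le \tfrac{1}{\zeta_{1}(\varepsilon)}\,d^{-1/p}$.

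\textbf{Step 2: moments of $\w^{T}\vb$.} Write $\sigma_{p}^{2}=\bb{E}[v_{1}^{2}]$ under the uniform law on $\calB_{p}$. Coordinate sign-flip symmetry annihilates every monomial $\bb{E}[v_{i}v_{j}v_{k}v_{l}]$ in which some index appears an odd number of times, and permutation symmetry reduces the survivors to two patterns, so with $\|\w\|_{2}=1$,
\[
\bb{E}[(\w^{T}\vb)^{2}]=\sigma_{p}^{2},\qquad
\bb{E}[(\w^{T}\vb)^{4}]=\bb{E}[v_{1}^{4}]\textstyle\sum_{i}w_{i}^{4}+3\,\bb{E}[v_{1}^{2}v_{2}^{2}]\sum_{i\neq j}w_{i}^{2}w_{j}^{2}.
\]
Since $\bb{E}[v_{1}^{2}v_{2}^{2}]\le \bb{E}[v_{1}^{4}]$ by Cauchy--Schwarz and $\sum_{i}w_{i}^{4}+\sum_{i\neq j}w_{i}^{2}w_{j}^{2}=\|\w\|_{2}^{4}=1$, we get $\bb{E}[(\w^{T}\vb)^{4}]\le 3\,\bb{E}[v_{1}^{4}]$. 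The first coordinate of $\vb$ has density proportional to $(1-|t|^{p})^{(d-1)/p}$ on $[-1,1]$, so a Beta integral gives
\[
\bb{E}[v_{1}^{2k}]=\frac{\Gamma\!\bigl(\tfrac{2k+1}{p}\bigr)\,\Gamma\!\bigl(\tfrac{d}{p}+1\bigr)}{\Gamma\!\bigl(\tfrac1p\bigr)\,\Gamma\!\bigl(\tfrac{d+2k}{p}+1\bigr)}.
\]
From this formula, together with log-convexity and standard ratio bounds for the Gamma function, one extracts universal constants $0<a\le A$ and $B'$, independent of $p\in[1,\infty]$ and of $d$, such that $a\,d^{-2/p}\le\sigma_{p}^{2}\le A\,d^{-2/p}$ and $\bb{E}[v_{1}^{4}]\le B'\,\sigma_{p}^{4}$; the extremes $p=1$ and $p=\infty$ (where the coordinates become i.i.d.\ Laplace-type and i.i.d.\ uniform on $[-1,1]$, respectively) and the regime of small $d$ are checked directly. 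In particular $\bb{E}[(\w^{T}\vb)^{4}]\le 3B'\bigl(\bb{E}[(\w^{T}\vb)^{2}]\bigr)^{2}$.

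\textbf{Step 3: the two tail bounds.} For the lower bound on the ratio, Chebyshev gives $\bb{P}\{|\w^{T}\vb|\ge t\}\le \sigma_{p}^{2}/t^{2}\le A\,d^{-2/p}/t^{2}$, which is below $2\varepsilon$ once $t=\sqrt{A/(2\varepsilon)}\,d^{-1/p}$; hence $q\le\sqrt{A/(2\varepsilon)}\,d^{-1/p}$ and $r_{p,\varepsilon}(\x)/\|\r^{*}_{p}(\x)\|_{p}=\|\w\|_{p'}/q\ge \sqrt{2\varepsilon/A}\;d^{1/p}\|\w\|_{p'}/\|\w\|_{2}$, i.e.\ $\zeta_{1}(\varepsilon)=C\sqrt{\varepsilon}$ with $C=\sqrt{2/A}$. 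For the upper bound on the ratio, Paley--Zygmund applied to $Z=(\w^{T}\vb)^{2}$ gives, for every $\theta\in(0,1)$,
\[
\bb{P}\bigl\{(\w^{T}\vb)^{2}\ge\theta\,\bb{E}[(\w^{T}\vb)^{2}]\bigr\}\ge (1-\theta)^{2}\,\frac{\bigl(\bb{E}[(\w^{T}\vb)^{2}]\bigr)^{2}}{\bb{E}[(\w^{T}\vb)^{4}]}\ge\frac{(1-\theta)^{2}}{3B'}.
\]
Choosing $\theta=1-\sqrt{6B'\varepsilon}$ (legitimate when $\varepsilon<1/(6B')$) makes the right-hand side equal to $2\varepsilon$, so $q^{2}\ge\theta\sigma_{p}^{2}\ge (1-\sqrt{6B'\varepsilon})\,a\,d^{-2/p}$ and therefore
\[
\frac{r_{p,\varepsilon}(\x)}{\|\r^{*}_{p}(\x)\|_{p}}=\frac{\|\w\|_{p'}}{q}\le\frac{1}{\sqrt{a-\sqrt{6a^{2}B'\,\varepsilon}}}\;d^{1/p}\,\frac{\|\w\|_{p'}}{\|\w\|_{2}},
\]
i.e.\ $\zeta_{2}(\varepsilon)=1/\sqrt{c-\sqrt{c'\varepsilon}}$ with $c=a$ and $c'=6a^{2}B'$, valid precisely for $\varepsilon<c^{2}/c'=1/(6B')$.

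\textbf{Main obstacle.} Steps 1 and 3 are essentially forced once the reduction is in place; the real work is Step 2, namely proving that $\sigma_{p}^{2}$ is comparable to $d^{-2/p}$ and that the coordinate kurtosis $\bb{E}[v_{1}^{4}]/\sigma_{p}^{4}$ is bounded, \emph{with constants that do not depend on $p$ or $d$}. This requires non-asymptotic two-sided control of the Gamma ratios $\Gamma(\tfrac dp+1)/\Gamma(\tfrac{d+2k}{p}+1)$ and $\Gamma(\tfrac{2k+1}{p})/\Gamma(\tfrac1p)$, in particular through the degenerations $p\to1^{+}$ and $p\to\infty$ and the small-$d$ regime; this uniformity is exactly what makes the constants $C,c,c'$ universal, and it is the one part of the argument that goes beyond routine bookkeeping.
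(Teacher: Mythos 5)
Your proof is correct, and its overall skeleton is the one the paper uses: reduce the misclassification probability to the tail $\tfrac12\,\bb{P}\{|\w^{T}\vb|\ge t\}$ via the closed form for $\|\r^*_p(\x)\|_p$, then get the lower bound on the ratio from Markov/Chebyshev on the second moment and the upper bound from Paley--Zygmund with the fourth moment, which is exactly where the constants $C\sqrt{\varepsilon}$ and $1/\sqrt{c-\sqrt{c'\varepsilon}}$ and the threshold $\varepsilon<c^2/c'$ come from in both arguments. Where you genuinely diverge is in the source of the moment estimates. The paper does not compute them: it imports the bounds $\var(\w^{T}\vb)\asymp d^{-2/p}\|\w\|_2^2$ and $\bb{E}[(\w^{T}\vb)^4]\lesssim d^{-4/p}\|\w\|_2^4$ wholesale from Theorem~7 of Barthe, Gu\'edon, Mendelson and Naor (2005), which controls all moments $\bb{E}[|\w^{T}\vb|^k]$ of linear functionals on $\calB_p$. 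You instead derive the two moments you need from first principles: the sign-flip/permutation expansion of $\bb{E}[(\w^{T}\vb)^4]$ (which is correct, including the bound $\bb{E}[v_1^2v_2^2]\le\bb{E}[v_1^4]$ and the resulting factor $3$) together with the exact Beta-integral formula for $\bb{E}[v_1^{2k}]$ coming from the marginal density $\propto(1-|t|^p)^{(d-1)/p}$. This buys a self-contained and more transparent proof at the cost of the Gamma-ratio uniformity you flag at the end --- showing $a\,d^{-2/p}\le\sigma_p^2\le A\,d^{-2/p}$ and bounded coordinate kurtosis with constants independent of $p\in[1,\infty]$ and $d$. You sketch rather than execute that verification, but the claims are true (they are precisely the $k=2,4$ cases of the cited theorem, and the ratios $\Gamma(3/p)/\Gamma(1/p)$, $\Gamma(5/p)\Gamma(1/p)/\Gamma(3/p)^2$ and $\Gamma(\tfrac dp+1)/\Gamma(\tfrac{d+2k}{p}+1)\cdot d^{2k/p}$ are indeed uniformly bounded above and below), so this is a deferred computation rather than a gap; if you want to avoid it entirely, citing the Barthe et al.\ result as the paper does is the shortcut.
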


Theorem~\ref{thm:lp_noise2} is proved by the following lemmas.

\begin{lemma}
\label{lem:lp_noise_part1}
There exists a universal constant $C > 0$ such that
\begin{equation*}
\frac{r_{p,\varepsilon}(\x)}{\| \r^*_{p}(\x) \|_p} \geq \zeta_1(\varepsilon) d^{1/p} \frac{ \| \w \|_{p'} }{\| \w \|_{2} },
\end{equation*}
where $\zeta_1(\varepsilon) = C \sqrt{\varepsilon}$.
\end{lemma}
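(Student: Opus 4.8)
The plan is to reduce the label‑flip event to a one–dimensional tail bound for the random variable $\w^{T}\vb$, control that tail by a second moment, and then compute the second moment of $\w^{T}\vb$ in closed form using a Dirichlet‑type integral over $\calB_p$. Concretely, I would first assume without loss of generality that $f(\x)>0$ (the case $f(\x)=0$ is trivial, and replacing $f$ by $-f$ changes neither $r_{p,\varepsilon}$ nor $\|\r_p^*\|_p$). Since $f$ is affine, the event $\{g(\x+\alpha\vb)\neq g(\x)\}$ agrees, up to a null set, with $\{f(\x)+\alpha\,\w^{T}\vb\le 0\}\subseteq\{|\w^{T}\vb|\ge f(\x)/|\alpha|\}$, so any $\alpha$ feasible in (\ref{eq:robustness_random}) satisfies $\varepsilon\le\Pbb\{|\w^{T}\vb|\ge f(\x)/|\alpha|\}$. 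Chebyshev's inequality gives $\Pbb\{|\w^{T}\vb|\ge t\}\le t^{-2}\Ebb\big[(\w^{T}\vb)^{2}\big]$, and because $\calB_p$ is invariant under signed permutations of the coordinates we have $\Ebb[\vb\vb^{T}]=\sigma_p^{2}I_d$ with $\sigma_p^{2}:=\Ebb[v_1^{2}]=\tfrac1d\Ebb\big[\|\vb\|_2^{2}\big]$, hence $\Ebb\big[(\w^{T}\vb)^{2}\big]=\sigma_p^{2}\|\w\|_2^{2}$. Combining yields $\varepsilon\le\sigma_p^{2}\|\w\|_2^{2}\alpha^{2}/f(\x)^{2}$, i.e. $r_{p,\varepsilon}(\x)\ge\sqrt{\varepsilon}\,|f(\x)|/(\sigma_p\|\w\|_2)$; dividing by $\|\r_p^*(\x)\|_p=|f(\x)|/\|\w\|_{p'}$ from Lemma~\ref{lem:DistanceHyperplanePNorm} reduces the whole statement to establishing $\sigma_p\le\sqrt{A}\,d^{-1/p}$ for a universal constant $A$, after which $\zeta_1(\varepsilon)=A^{-1/2}\sqrt{\varepsilon}$ works.

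For the second moment I would use the classical Dirichlet moment identity $\int_{\calB_p^d}\prod_i|x_i|^{a_i-1}\,dx=\big(\prod_i\tfrac2p\Gamma(a_i/p)\big)\big/\Gamma\big(1+\tfrac1p\sum_i a_i\big)$: taking $a_1=3$ and $a_i=1$ for $i\ge2$ and dividing by the volume (the case $a_i\equiv1$) gives the exact value
\[
\sigma_p^{2}=\frac{\Gamma(3/p)}{\Gamma(1/p)}\cdot\frac{\Gamma(1+d/p)}{\Gamma\big(1+(d+2)/p\big)},
\]
which as a check equals $1/(d+2)$ for $p=2$, equals $2/((d+1)(d+2))$ for $p=1$, and tends to $1/3$ as $p\to\infty$. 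To bound the Gamma ratio I would apply an elementary ratio inequality of Wendel type, $\Gamma(x)/\Gamma(x+s)\le c_0\,x^{-s}$ for $s\in(0,1]$ with an absolute constant $c_0$; this applies directly when $p\ge2$ (so $s=2/p\le1$), and when $p\in[1,2)$ one first peels off a factor, $\Gamma(x)/\Gamma(x+2/p)=x^{-1}\Gamma(x+1)/\Gamma(x+1+(2/p-1))$. Either way $\Gamma(1+d/p)/\Gamma\big(1+(d+2)/p\big)\le c_0'(1+d/p)^{-2/p}\le c_0'(d/p)^{-2/p}=c_0'\,p^{2/p}d^{-2/p}$, and since $q\mapsto\Gamma(3q)q^{-2q}/\Gamma(q)$ is bounded on $q=1/p\in(0,1]$ (it equals $2$ at $q=1$ and tends to $1/3$ as $q\to0^{+}$), the prefactor $c_0'\tfrac{\Gamma(3/p)}{\Gamma(1/p)}p^{2/p}$ is bounded by a universal $A$, giving $\sigma_p^{2}\le A\,d^{-2/p}$ as required.

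The probabilistic half of the argument is routine; the only genuinely delicate point is obtaining the bound $\sigma_p^{2}\le A\,d^{-2/p}$ \emph{uniformly} in both $p\in[1,\infty]$ and $d$. The exact Gamma expression makes this tractable, but one must handle $p\ge2$ (increment $2/p\le1$) separately from $p\in[1,2)$ (increment $2/p>1$, needing the extra recursion step), and verify continuity near $p=\infty$, where $d/p\to0$. Finally, I would note that for $p>1$ one can replace Chebyshev's inequality by a subexponential concentration bound for $\w^{T}\vb$ (using $\|\vb\|_p\le1$), which upgrades the $\sqrt{\varepsilon}$ factor to the $1/\sqrt{\ln(1/\varepsilon)}$ form mentioned in the footnote.
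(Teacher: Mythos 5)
Your proof is correct, and its probabilistic skeleton is exactly the paper's: reduce the misclassification event to the tail event $\{|\w^{T}\vb|\ge \|\w\|_{p'}\|\r_p^*(\x)\|_p/|\alpha|\}$ via the distance formula of Lemma~\ref{lem:DistanceHyperplanePNorm}, bound that tail by the second moment, and conclude once $\Ebb\big[(\w^{T}\vb)^{2}\big]\le A\,d^{-2/p}\|\w\|_2^2$ is established. Where you genuinely diverge is in how that moment bound is obtained: the paper imports it as a black box from \cite[Theorem~7]{barthe2005probabilistic}, whereas you derive the \emph{exact} value $\sigma_p^2=\frac{\Gamma(3/p)}{\Gamma(1/p)}\cdot\frac{\Gamma(1+d/p)}{\Gamma(1+(d+2)/p)}$ from the Liouville--Dirichlet integral (your sanity checks at $p=1,2,\infty$ are all right) and then control the Gamma ratio uniformly in $p$ and $d$ by Wendel's inequality, with the extra recursion step for $p\in[1,2)$ and a direct treatment of $p=\infty$. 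This buys a self-contained argument with an explicit, in-principle computable constant $C$, at the price of the casework over the increment $2/p$; the citation route is shorter and, more importantly, also supplies the higher-moment and lower-variance estimates that the paper reuses in Lemmas~\ref{lem:lp_noise_part2} and~\ref{lem:lp_noise_alternative}, which your Dirichlet computation would have to be extended to cover (fourth moments and a matching lower bound on $\var(\w^{T}\vb)$) if you wanted the whole of Theorem~\ref{thm:lp_noise2} by this method. One minor caveat: your closing remark that the $1/\sqrt{\ln(1/\varepsilon)}$ improvement follows from a subexponential bound ``using $\|\vb\|_p\le 1$'' is too quick --- the paper's Lemma~\ref{lem:lp_noise_alternative} needs the full moment-growth estimates $\Ebb|\w^{T}\vb|^k\le (C_0k^{1/\min(p,2)}d^{-1/p}\|\w\|_2)^k$, not just boundedness of $\vb$ --- but this aside does not affect the validity of your proof of the stated lemma.
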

\begin{proof}
Let us first express conveniently $\bb{P}_{\vb \sim \calB_p}\left\{g\left(\x\right) \neq g\left(\x+\alpha \vb\right)\right\}$, where $\vb \sim \calB_p$ means that $\vb$ is chosen uniformly at random in $\calB_p$:
\begin{eqnarray}
\bb{P}_{\vb \sim \calB_p}\left\{g\left(\x\right) \neq g\left(\x+\alpha \vb\right)\right\} & = & \bb{P}_{\vb \sim \calB_p}\left(f\left(\x\right)f\left(\x+\alpha \vb\right)\leq0\right\}\nonumber \\
 & = & \bb{P}_{\vb \sim \calB_p}\left\{\sign\left(f\left(\x\right)\right)\left(\w^{T}x+\bs{b}\right)\leq-\sign\left(f\left(\x\right)\right)\alpha\w^{T}\vb\right\}\nonumber \\
 & = & \bb{P}_{\vb \sim \calB_p}\left\{\left\Vert \w\right\Vert _{\frac{p}{p-1}}\| \r_{p}^*(\x) \|_p\leq-\sign\left(f\left(x\right)\right)\alpha\w^{T}\vb\right\}\label{eq:ApplicationDistancePointPlane}\\
 & = & \bb{P}_{\vb \sim \calB_p}\left\{\left\Vert \w\right\Vert _{p'}\frac{\| \r_{p}^*(\x) \|_p}{\left|\alpha\right|}\leq \w^{T}\vb\right\}\label{eq:SymmetryUniformDistribution}\\
 & = & \frac{1}{2}\mathbb{P}_{\vb \sim \calB_p}\left\{\left\Vert \w\right\Vert _{p'}\frac{\| \r_{p}^*(\x) \|_p}{\left|\alpha\right|}\leq\left|\w^{T}\vb\right|\right\},\label{eq:SymmetryUniformDistribution2}
\end{eqnarray}
where Eq.~(\ref{eq:ApplicationDistancePointPlane}) is given by Lemma~\ref{lem:DistanceHyperplanePNorm},
and Eq.~(\ref{eq:SymmetryUniformDistribution}) and (\ref{eq:SymmetryUniformDistribution2}) follow from $\vb \sim \calB_p\Rightarrow-\vb \sim \calB_p$.

\medskip{}
Markov's inequality gives, from Eq.~(\ref{eq:SymmetryUniformDistribution2}):
\[
\bb{P}_{\vb \sim \calB_p}\left\{g\left(\x\right) \neq g\left(\x+\alpha \vb\right)\right\}\leq \frac{1}{2}\frac{\mathbb{E}_{\vb \sim \calB_p}\left[\left(\sum_{i=1}^{d}w_{i}v_{i}\right)^{2}\right]}{\left(\left\Vert \w\right\Vert _{p'}\frac{\| \r_{p}^*(\x) \|_p}{\left|\alpha\right|}\right)^{2}}\text{.}
\]
In \cite[Theorem\ 7]{barthe2005probabilistic}, it is proved
that there is a constant $C_{0}>0$ such that:
\[
\mathbb{E}_{\vb \sim \calB_p}\left[\left(\sum_{i=1}^{d}w_{i}v_{i}\right)^{2}\right]\leq\left(\frac{2C_{0}}{d^{\frac{1}{p}}}\left\Vert \w\right\Vert _{2}\right)^{2}\text{,}
\]
Therefore:
\[
\bb{P}_{\vb \sim \calB_p}\left\{g\left(\x\right) \neq g\left(\x+\alpha \vb\right)\right\}\leq \frac{1}{2}\frac{\left(\frac{2C_{0}}{d^{\frac{1}{p}}}\left\Vert \w\right\Vert _{2}\right)^{2}}{\left(\left\Vert \w\right\Vert _{p'}\frac{\| \r_{p}^*(\x) \|_p}{\left|\alpha\right|}\right)^{2}}\text{\text{.}}
\]
So, if $\left|\alpha\right|<\sqrt{\varepsilon}\frac{d^{\frac{1}{p}}}{\sqrt{2}C_{0}}\frac{\left\Vert \w\right\Vert _{p'}}{\left\Vert \w\right\Vert _{2}}\| \r_{p}^*(\x) \|_p$,
then $\bb{P}_{\vb \sim \calB_p}\left\{g\left(\x\right) \neq g\left(\x+\alpha \vb\right)\right\}<\varepsilon$.
Thus, there is a universal constant $C=\frac{1}{\sqrt{2}C_{0}}>0$ such that:
\begin{equation*}
\frac{r_{p,\varepsilon}(\x)}{\| \r^*_{p}(\x) \|_p} \geq \zeta_1(\varepsilon) d^{1/p} \frac{ \| \w \|_{p'} }{\| \w \|_{2} }.
\end{equation*}
\end{proof}

\begin{lemma}
\label{lem:lp_noise_part2}
There exist universal constants $c, c' > 0$ such that, for all $\varepsilon < \frac{c^2}{c'}$:
\begin{equation*}
\frac{r_{p,\varepsilon}(\x)}{\| \r^*_{p}(\x) \|_p} \leq \zeta_2(\varepsilon) d^{1/p} \frac{ \| \w \|_{p'} }{\| \w \|_{2} },
\end{equation*}
where $\zeta_2(\varepsilon) = \frac{1}{\sqrt{c - \sqrt{c' \varepsilon}}}$.
\end{lemma}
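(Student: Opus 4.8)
The plan is to invert, in the large-$\alpha$ direction, the identity established in the proof of Lemma~\ref{lem:lp_noise_part1}, namely
\[
\bb{P}_{\vb \sim \calB_p}\left\{g(\x) \neq g(\x+\alpha\vb)\right\} = \frac{1}{2}\,\mathbb{P}_{\vb \sim \calB_p}\left\{\left\|\w\right\|_{p'}\frac{\| \r_{p}^*(\x)\|_p}{|\alpha|} \leq \left|\w^T\vb\right|\right\}.
\]
Since $r_{p,\varepsilon}(\x)$ is a minimum over admissible $\alpha$, it suffices to produce a single value $\alpha_0$ with $|\alpha_0| \leq \zeta_2(\varepsilon)\, d^{1/p}\frac{\|\w\|_{p'}}{\|\w\|_2}\| \r_{p}^*(\x)\|_p$ for which the right-hand side is at least $\varepsilon$. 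Thus the problem reduces to an \emph{anti-concentration} bound for the linear form $\w^T\vb = \sum_{i=1}^d w_i v_i$ under $\vb \sim \calB_p$: a lower bound on the probability that $|\w^T\vb|$ exceeds a suitable small multiple of $d^{-1/p}\|\w\|_2$.

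To get this, I would use two moment estimates from \cite{barthe2005probabilistic}, valid up to absolute constants uniformly in $p$ and $d$: a second-moment lower bound $\mathbb{E}_{\vb\sim\calB_p}[(\w^T\vb)^2] \geq (c_0\, d^{-1/p}\|\w\|_2)^2$ (complementing the upper bound already invoked in Lemma~\ref{lem:lp_noise_part1}), and a fourth-moment upper bound $\mathbb{E}_{\vb\sim\calB_p}[(\w^T\vb)^4] \leq (c_1\, d^{-1/p}\|\w\|_2)^4$. The Paley--Zygmund inequality then yields, for every $\theta \in (0,1)$,
\[
\mathbb{P}_{\vb\sim\calB_p}\!\left\{(\w^T\vb)^2 \geq \theta\, \mathbb{E}[(\w^T\vb)^2]\right\} \;\geq\; (1-\theta)^2\,\frac{\big(\mathbb{E}[(\w^T\vb)^2]\big)^2}{\mathbb{E}[(\w^T\vb)^4]} \;\geq\; (1-\theta)^2\left(\frac{c_0}{c_1}\right)^{4}.
\]

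Now choose $\alpha_0$ so that the threshold matches the Paley--Zygmund level, i.e. $\|\w\|_{p'}\| \r_{p}^*(\x)\|_p/|\alpha_0| = \sqrt{\theta\, \mathbb{E}[(\w^T\vb)^2]}$. Using $\mathbb{E}[(\w^T\vb)^2] \geq (c_0 d^{-1/p}\|\w\|_2)^2$ this forces $|\alpha_0| \leq \frac{d^{1/p}}{c_0\sqrt{\theta}}\frac{\|\w\|_{p'}}{\|\w\|_2}\| \r_{p}^*(\x)\|_p$, while the misclassification probability at $\alpha_0$ is at least $\frac{1}{2}(1-\theta)^2(c_0/c_1)^4$. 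Imposing $\frac{1}{2}(1-\theta)^2(c_0/c_1)^4 \geq \varepsilon$ amounts to taking $\theta = 1 - (c_1/c_0)^2\sqrt{2\varepsilon}$, which is positive exactly when $\varepsilon < \frac{1}{2}(c_0/c_1)^4$. Substituting back gives
\[
r_{p,\varepsilon}(\x) \;\leq\; |\alpha_0| \;\leq\; \frac{1}{\sqrt{c_0^2 - \sqrt{2\,c_1^4\,\varepsilon}}}\; d^{1/p}\,\frac{\|\w\|_{p'}}{\|\w\|_2}\,\| \r_{p}^*(\x)\|_p,
\]
which is the claim with $c = c_0^2$, $c' = 2c_1^4$, $\zeta_2(\varepsilon) = 1/\sqrt{c - \sqrt{c'\varepsilon}}$, and admissible range $\varepsilon < c^2/c'$ matching $\varepsilon < \frac{1}{2}(c_0/c_1)^4$.

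The main obstacle is the moment input: one must confirm that \cite{barthe2005probabilistic} indeed provides the two-sided control of $\mathbb{E}[(\w^T\vb)^2]$ and the fourth-moment bound on the $\ell_p^d$ ball with constants that do not degrade as $p \to 1$ (the case $p=\infty$ being elementary, as then $\w^T\vb$ is a weighted sum of independent uniform variables). Granting these estimates, the remainder is a routine application of Paley--Zygmund together with the change of variables linking $\alpha$ to the threshold, so no further difficulty is expected.
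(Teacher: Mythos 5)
Your argument is correct and follows essentially the same route as the paper's proof: the same symmetrization identity for the misclassification probability, the same second- and fourth-moment bounds on $\w^T\vb$ from \cite{barthe2005probabilistic}, and the same Paley--Zygmund step, yielding the identical form $\zeta_2(\varepsilon)=1/\sqrt{c-\sqrt{c'\varepsilon}}$ with $c=c_0^2$. The moment estimates you flag as the remaining input are indeed exactly what the paper invokes from Theorem~7 of that reference, so no gap remains.
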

\begin{proof}
We first transform the expression of $\bb{P}_{\vb \sim \calB_p}\left\{g\left(\x\right) \neq g\left(\x+\alpha \vb\right)\right\}$:
\begin{eqnarray*}
\bb{P}_{\vb \sim \calB_p}\left\{g\left(\x\right) \neq g\left(\x+\alpha \vb\right)\right\} & = & \mathbb{P}_{\vb \sim \calB_p}\left\{\left\Vert \w\right\Vert _{\frac{p}{p-1}}\frac{\| \r_{p}^*(\x) \|_p}{\left|\alpha\right|}\leq \w^{T}\vb\right\}\\
 & = & \frac{1}{2}\mathbb{P}_{\vb \sim \calB_p}\left\{\left\Vert \w\right\Vert _{p'}\frac{\| \r_{p}^*(\x) \|_p}{\left|\alpha\right|}\leq\left|\w^{T}\vb\right|\right\}\\
 & = & \frac{1}{2}\mathbb{P}_{\vb \sim \calB_p}\left\{\frac{1}{\var\left(\w^{T}\vb\right)}\left(\left\Vert \w\right\Vert _{p'}\frac{\| \r_{p}^*(\x) \|_p}{\left|\alpha\right|}\right)^{2}\leq\frac{\left(\w^{T}\vb\right)^{2}}{\var\left(\w^{T}\vb\right)}\right\}.
\end{eqnarray*}
Paley-Zygmund's inequality states that, if $X$ is a random variable with finite variance and $t\in\left[0,1\right]$, then:$$\bb{P}\left\{X > t \bb{E}\left[X\right]\geq(1-t)^2 \frac{\bb{E}\left[X\right]^2}{\bb{E}\left[X^2\right]}\right\}.$$
Note that $\mathbb{E}_{\vb \sim \calB_p}\left(\frac{\left(\w^{T}\vb\right)^{2}}{\var\left(\w^{T}\vb\right)}\right)=1$,
because $\mathbb{E}_{\vb \sim \calB_p}\left(\w^{T}\vb\right)=0$.
So, by using Paley-Zygmund's inequality with $X=\frac{\left(\w^{T}\vb\right)^{2}}{\var\left(\w^{T}\vb\right)}$ and $t=\frac{1}{\var\left(\w^{T}\vb\right)}\left(\left\Vert \w\right\Vert _{p'}\frac{\| \r_{p}^*(\x) \|_p}{\left|\alpha\right|}\right)^{2}$, when $\left|\alpha\right|\geq\frac{\left\Vert \w\right\Vert _{p'}}{\sqrt{\var\left(\w^{T}\vb\right)}}\| \r_{p}^*(\x) \|_p$:
\begin{eqnarray*}
\bb{P}_{\vb \sim \calB_p}\left\{g\left(\x\right) \neq g\left(\x+\alpha \vb\right)\right\} & \geq & \frac{\left(1-\frac{1}{\var\left(\w^{T}\vb\right)}\left(\left\Vert \w\right\Vert _{p'}\frac{\| \r_{p}^*(\x) \|_p}{\left|\alpha\right|}\right)^{2}\right)^{2}}{2\mathbb{E}_{\vb \sim \calB_p}\left[\frac{\left(\w^{T}\vb\right)^{4}}{\var\left(\w^{T}\vb\right)^{2}}\right]}\text{.}
\end{eqnarray*}
So, if $\left|\alpha\right|>\frac{1}{\sqrt{\var\left(\w^{T}\vb\right)-\sqrt{2\varepsilon\mathbb{E}\left[\left(\w^{T}\vb\right)^{4}\right]}}}\left\Vert \w\right\Vert _{p'}\| \r_{p}^*(\x) \|_p$,
then $\bb{P}_{\vb \sim \calB_p}\left\{g\left(\x\right) \neq g\left(\x+\alpha \vb\right)\right\}>\varepsilon$.
According to \cite[Theorem\ 7]{barthe2005probabilistic}, there is a universal constant $c_{0}>0$ such that:
\begin{itemize}
\item for $\var\left(\w^{T}\vb\right)$:
\[
\var\left(\w^{T}\vb\right)\geq\left(\frac{c_{0}}{d^{\frac{1}{p}}}\left\Vert \w\right\Vert _{2}\right)^{2}\text{;}
\]
\item for $\mathbb{E}\left[\left(\w^{T}\vb\right)^{4}\right]$:
\[
\mathbb{E}\left[\left(\w^{T}\vb\right)^{4}\right]\leq\left(\frac{4C_{0}}{d^{\frac{1}{p}}}\left\Vert \w\right\Vert _{2}\right)^{4}\text{.}
\]
\end{itemize}
So there are universal constants $c=c_{0}^{2},c'=512C_{0}^{4}>0$
such that:
\begin{equation*}
\frac{r_{p,\varepsilon}(\x)}{\| \r^*_{p}(\x) \|_p} \leq \zeta_2(\varepsilon) d^{1/p} \frac{ \| \w \|_{p'} }{\| \w \|_{2} }\text{.}
\end{equation*}
\end{proof}

\subsection{Alternative Lower Bound}
Actually, the lower bound of Theorem~\ref{thm:lp_noise2} may be improved for most $p$-norms by the following result.

\begin{lemma}
\label{lem:lp_noise_alternative}
There exists a universal constant $C' > 0$ such that
\begin{equation*}
\frac{r_{p,\varepsilon}(\x)}{\| \r^*_{p}(\x) \|_p} \geq \zeta_1(\varepsilon) d^{1/p} \frac{ \| \w \|_{p'} }{\| \w \|_{2} },
\end{equation*}
where $\zeta_1(\varepsilon) = \frac{C'}{\sqrt{\log \frac{3}{\varepsilon}}} \left(1-\frac{1}{\min(p, 2)}\right)$.
\end{lemma}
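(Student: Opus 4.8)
\emph{Proof strategy.} The plan is to rerun the proof of Lemma~\ref{lem:lp_noise_part1}, but to control the law of the linear functional $\w^{T}\vb$ through \emph{all} of its moments rather than only the second one; this is what turns the lossy factor $\sqrt{\varepsilon}$ into the sharper $1/\sqrt{\log(1/\varepsilon)}$. Exactly as in Eqs.~(\ref{eq:ApplicationDistancePointPlane})--(\ref{eq:SymmetryUniformDistribution2}), I first rewrite
\[
\bb{P}_{\vb \sim \calB_p}\{g(\x)\neq g(\x+\alpha\vb)\}\;=\;\tfrac12\,\bb{P}_{\vb \sim \calB_p}\!\left\{\,|\w^{T}\vb|\ \ge\ \|\w\|_{p'}\frac{\|\r^*_p(\x)\|_p}{|\alpha|}\,\right\},
\]
so the claim reduces to an upper-tail estimate for $\w^{T}\vb$: it is enough to show this probability stays below $2\varepsilon$ as soon as the threshold $t=\|\w\|_{p'}\|\r^*_p(\x)\|_p/|\alpha|$ exceeds a suitable multiple of $(1-\tfrac1{\min(p,2)})^{-1}\sqrt{\log(3/\varepsilon)}\;d^{-1/p}\|\w\|_{2}$.

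For the tail I would use the moment estimates of Barthe et al.~\cite{barthe2005probabilistic} (the same source used above for $\var(\w^{T}\vb)$), in the form: there is a universal $C_0$ such that, for all integers $k$ up to a (large) threshold depending on $p$,
\[
\left(\bb{E}_{\vb \sim \calB_p}|\w^{T}\vb|^{k}\right)^{1/k}\ \le\ \frac{C_0}{1-\tfrac1{\min(p,2)}}\cdot\frac{\sqrt{k}}{d^{1/p}}\,\|\w\|_{2}\ .
\]
That is, $\w^{T}\vb$ behaves like a centred sub-Gaussian variable of scale $d^{-1/p}\|\w\|_2$, but with a proportionality constant that deteriorates like $(1-\tfrac1{\min(p,2)})^{-1}$ as $p\downarrow1$ (for $p\ge2$ this constant is bounded, and one has a genuine sub-Gaussian tail for the uniform law on $\calB_p$). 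This degradation is exactly what produces the factor $1-\tfrac1{\min(p,2)}$ in $\zeta_1$, and why $p=1$ — where $\w^{T}\vb$ is only sub-exponential — must be excluded.

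Plugging this into Markov's inequality for $|\w^{T}\vb|^{k}$ gives $\bb{P}_{\vb \sim \calB_p}\{|\w^{T}\vb|\ge t\}\le\bigl(C_0(1-\tfrac1{\min(p,2)})^{-1}\sqrt{k}\,d^{-1/p}\|\w\|_2/t\bigr)^{k}$; optimizing over $k$ — the optimum is of order $\bigl((1-\tfrac1{\min(p,2)})\,t\,d^{1/p}/\|\w\|_2\bigr)^{2}$ — collapses this to an exponential tail $\exp\!\bigl(-c(1-\tfrac1{\min(p,2)})^{2}(t\,d^{1/p}/\|\w\|_2)^{2}\bigr)$ with $c$ universal. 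Imposing that this be $<2\varepsilon$ forces $t\ge\mathrm{const}\cdot(1-\tfrac1{\min(p,2)})^{-1}\sqrt{\log\tfrac1\varepsilon}\;d^{-1/p}\|\w\|_2$, and substituting back $t=\|\w\|_{p'}\|\r^*_p(\x)\|_p/|\alpha|$ (the factor $2$ being absorbed by replacing $\log\tfrac1\varepsilon$ by $\log\tfrac3\varepsilon$) yields
\[
\frac{r_{p,\varepsilon}(\x)}{\|\r^*_p(\x)\|_p}\ \ge\ \frac{C'}{\sqrt{\log\frac3\varepsilon}}\Bigl(1-\tfrac1{\min(p,2)}\Bigr)\,d^{1/p}\,\frac{\|\w\|_{p'}}{\|\w\|_{2}},
\]
which is the assertion.

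\emph{Main obstacle.} The substance is the moment bound: one must extract from~\cite{barthe2005probabilistic} (via, e.g., the probabilistic representation of the uniform measure on $\calB_p$ by $p$-generalized Gaussians and an independent exponential) an estimate of exactly the displayed shape — sub-Gaussian growth in $k$, scale $d^{-1/p}\|\w\|_2$, uniform in the direction $\w$, with a constant explicit in $p$ that blows up only like $(1-\tfrac1{\min(p,2)})^{-1}$ as $p\downarrow1$ — and check that its range of validity in $k$ comfortably contains the $k=O(\log(1/\varepsilon))$ selected by the optimization. Granting that, the Markov step, the optimization over the integer $k$, and the bookkeeping of the universal constants are routine.
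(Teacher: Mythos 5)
Your reduction to a tail bound for $\w^{T}\vb$ and the Markov-plus-optimize-over-$k$ mechanism are sound, and for $p\ge 2$ your argument is correct and essentially equivalent to the paper's (which packages the same moment information through the moment generating function and a Chernoff bound rather than optimizing $\bb{P}\{|\w^{T}\vb|^{k}\ge t^{k}\}$ over $k$). The genuine gap is the moment estimate you take as input. What \cite{barthe2005probabilistic} provides --- and what the paper's proof actually uses --- is $\left(\bb{E}_{\vb\sim\calB_p}|\w^{T}\vb|^{k}\right)^{1/k}\le C_{0}\,k^{1/\min(p,2)}\,d^{-1/p}\|\w\|_{2}$, i.e.\ growth $k^{1/p}$ for $1<p<2$, not $\sqrt{k}$. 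Your claimed sub-Gaussian form with a constant depending only on $p$ cannot be extracted from this: the discrepancy is a factor $k^{1/p-1/2}$, which is unbounded in $k$, so no constant of the shape $\bigl(1-\tfrac{1}{\min(p,2)}\bigr)^{-1}$ covers the range $k\asymp\log\tfrac{3}{\varepsilon}$ that your optimization selects once $\varepsilon$ is small. Moreover the claim is not merely unavailable but false for $1<p<2$: taking $\w=e_{1}$, the marginal $v_{1}$ of the uniform law on $\calB_p$ has density proportional to $(1-|t|^{p})^{(d-1)/p}$, hence tail $\bb{P}\{|v_{1}|>s\}\approx\exp(-ds^{p}/p)$; at $s=\tfrac12$ this is $\exp(-\Theta(d))$, whereas sub-Gaussianity at scale $C_{p}d^{-1/p}$ would force $\exp(-c_{p}d^{2/p})$ with $2/p>1$, a contradiction for large $d$. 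So $\w^{T}\vb$ is genuinely only of Orlicz type $p$ at scale $d^{-1/p}\|\w\|_{2}$ when $p<2$, and your route, as written, yields only $\zeta_1(\varepsilon)\propto(\log\tfrac{3}{\varepsilon})^{-1/p}$ in that regime rather than $(\log\tfrac{3}{\varepsilon})^{-1/2}$.

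The paper does not assume your moment bound; it feeds the true $k^{1/\min(p,2)}$ moment growth into the series $\sum_{k}\frac{\theta^{2k}}{(2k)!}\bb{E}\bigl[(\w^{T}\vb)^{2k}\bigr]$ and controls the result with Stirling-type estimates to obtain a Gaussian-type bound on the moment generating function whose variance proxy is inflated by a factor involving $\bigl(1-\tfrac{1}{\min(p,2)}\bigr)^{-1}$. That series computation is precisely the delicate point you deferred to a citation, and it is where all the substance of the lemma lives; to complete your version you would need to carry out such an estimate yourself (or restrict to $p\ge 2$, where your proof already closes).
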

\begin{proof}
Let $p_{2}=\min\left(p,2\right)$. We have:
\begin{eqnarray*}
\bb{P}_{\vb \sim \calB_p}\left\{g\left(\x\right) \neq g\left(\x+\alpha \vb\right)\right\} & = & \mathbb{P}_{\vb \sim \calB_p}\left\{\left\Vert \w\right\Vert _{\frac{p}{p-1}}\frac{\| \r^*_{p}(\x) \|_p}{\left|\alpha\right|}\leq \w^{T}\vb\right\}\\
 & = & \mathbb{P}_{\vb \sim \calB_p}\left\{e^{\theta t}\leq\exp\left(\theta \w^{T}\vb\right)\right\}\text{,}
\end{eqnarray*}
where $t=\left\Vert \w\right\Vert _{p'}\frac{\| \r^*_{p}(\x) \|_p}{\left|\alpha\right|}$,
for any $\theta>0$. Markov's inequality gives:
\begin{eqnarray*}
\bb{P}_{\vb \sim \calB_p}\left\{g\left(\x\right) \neq g\left(\x+\alpha \vb\right)\right\} & \leq & \frac{1}{e^{\theta t}}\mathbb{E}_{\vb \sim \calB_p}\left[\exp\left(\theta \w^{T}\vb\right)\right]=\frac{1}{e^{\theta t}}\sum_{k=0}^{\infty}\frac{1}{k!}\mathbb{E}_{\vb \sim \calB_p}\left[\left(\theta \w^{T}\vb\right)^{k}\right]\text{.}\\
 & \leq & \frac{1}{e^{\theta t}}\sum_{k=0}^{\infty}\frac{1}{\left(2k\right)!}\mathbb{E}_{\vb \sim \calB_p}\left[\left(\theta \w^{T}\vb\right)^{2k}\right]\text{,}
\end{eqnarray*}
since $\w^{T}\vb$ is symmetric. In \cite[Theorem\ 7]{barthe2005probabilistic},
it is proved that:
\begin{itemize}
\item if $k\leq d$ and $p\leq2$:
\[
\mathbb{E}_{\vb \sim \calB_p}\left[\left|\sum_{i=1}^{d}w_{i}v_{i}\right|^{k}\right]\leq\left(\frac{C_{0}k^{\frac{1}{p}}}{d^{\frac{1}{p}}}\left\Vert \w\right\Vert _{2}\right)^{k}\text{;}
\]
\item if $k\leq d$ and $p>2$:
\[
\mathbb{E}_{\vb \sim \calB_p}\left[\left|\sum_{i=1}^{d}w_{i}v_{i}\right|^{k}\right]\leq\left(\frac{C_{0}k^{\frac{1}{2}}}{d^{\frac{1}{p}}}\left\Vert \w\right\Vert _{2}\right)^{k}\text{;}
\]
\item if $k>d$ and $p\leq2$:
\[
\mathbb{E}_{\vb \sim \calB_p}\left[\left|\sum_{i=1}^{d}w_{i}v_{i}\right|^{k}\right]\leq\left(C_{0}\left\Vert \w\right\Vert _{2}\right)^{k}\leq\left(\frac{C_{0}k^{\frac{1}{p}}}{d^{\frac{1}{p}}}\left\Vert \w\right\Vert _{2}\right)^{k}\text{;}
\]
\item if $k>d$ and $p>2$:
\[
\mathbb{E}_{\vb \sim \calB_p}\left[\left|\sum_{i=1}^{d}w_{i}v_{i}\right|^{k}\right]\leq\left(\frac{C_{0}d^{\frac{1}{2}}}{d^{\frac{1}{p}}}\left\Vert \w\right\Vert _{2}\right)^{k}\leq\left(\frac{C_{0}k^{\frac{1}{2}}}{d^{\frac{1}{p}}}\left\Vert \w\right\Vert _{2}\right)^{k}\text{,}
\]
\end{itemize}
where $C_{0}$ is a universal constant (the same as in the proof of
Lemma~\ref{lem:lp_noise_part1}). So, overall:
\[
\mathbb{E}_{\vb \sim \calB_p}\left[\left|\sum_{i=1}^{d}w_{i}v_{i}\right|^{k}\right]\leq\left(\frac{C_{0}k^{\frac{1}{p_{2}}}}{d^{\frac{1}{p}}}\left\Vert \w\right\Vert _{2}\right)^{k}\text{.}
\]
Thus:
\[
\bb{P}_{\vb \sim \calB_p}\left\{g\left(\x\right) \neq g\left(\x+\alpha \vb\right)\right\}\leq\frac{1}{e^{\theta t}}\sum_{k=0}^{\infty}\frac{1}{\left(2k\right)!}\left(\theta\frac{C_{0}\left(2k\right)^{\frac{1}{p_{2}}}}{d^{\frac{1}{p}}}\left\Vert \w\right\Vert _{2}\right)^{2k}\text{.}
\]
We can bound the following power series using Stirling-like bounds
\cite{robbins1955stirling} in (\ref{eq:StirlingLowerBound}) and (\ref{eq:StirlingUpperBound}):
\begin{eqnarray}
\sum_{k=0}^{\infty}\frac{\left(2k\right)^{\frac{2k}{p_{2}}}}{\left(2k\right)!}x^{k} & \leq & 1+\frac{1}{\sqrt{2\pi}}\sum_{k=1}^{\infty}\frac{\left(2k\right)^{\frac{2k}{p_{2}}}}{\left(2k\right)^{2k+\frac{1}{2}}}\left(e^{2}x\right)^{k}\label{eq:StirlingLowerBound}\\
 &  & =1+\frac{1}{\sqrt{2\pi}}\sum_{k=1}^{\infty}\left(2k\right)^{-2\left(1-\frac{1}{p_{2}}\right)k-\frac{1}{2}}\left(e^{2}x\right)^{k}\nonumber \\
 & \leq & 1+\frac{1}{\sqrt{2\pi}}\sum_{k=1}^{\infty}\left(\left\lfloor 2\left(1-\frac{1}{p_{2}}\right)k\right\rfloor \right)^{-\left\lfloor 2\left(1-\frac{1}{p_{2}}\right)k\right\rfloor -\frac{1}{2}}\left(e^{2}x\right)^{k}\nonumber \\
 & \leq & 1+\frac{e}{\sqrt{2\pi}}\sum_{k=1}^{\infty}\frac{\exp\left(-\left\lfloor 2\left(1-\frac{1}{p_{2}}\right)k\right\rfloor \right)}{\left\lfloor 2\left(1-\frac{1}{p_{2}}\right)k\right\rfloor !}\left(e^{2}x\right)^{k}\label{eq:StirlingUpperBound}\\
 & \leq & 1+\frac{e^{2}}{\sqrt{2\pi}}\sum_{k=1}^{\infty}\frac{\exp\left(-2\left(1-\frac{1}{p_{2}}\right)k\right)}{k!}\left(e^{2}x\right)^{k}\text{.}\nonumber
\end{eqnarray}
Therefore:
\[
\bb{P}_{\vb \sim \calB_p}\left\{g\left(\x\right) \neq g\left(\x+\alpha \vb\right)\right\}\leq3e^{-\theta t}\exp\left(\theta^{2}\frac{p_{2}}{p_{2}-1}\frac{e^{2}C_{0}^{2}}{d^{\frac{2}{p}}}\left\Vert \w\right\Vert _{2}^{2}\right)\text{.}
\]
By choosing $\theta=\frac{1}{2}t\left(\frac{p_{2}}{p_{2}-1}\frac{e^{2}C_{0}k^{\frac{1}{p_{2}}}}{d^{\frac{2}{p}}}\left\Vert \w\right\Vert _{2}\right)^{-1}$:
\begin{eqnarray*}
\bb{P}_{\vb \sim \calB_p}\left\{g\left(\x\right) \neq g\left(\x+\alpha \vb\right)\right\} & \leq & 3\exp\left(-t^{2}\left(1-\frac{1}{p_{2}}\right)\frac{d^{\frac{2}{p}}}{2e^{2}C_{0}^{2}\left\Vert \w\right\Vert _{2}^{2}}\right)\\
 &  & =3\exp\left(-\left(\frac{\| \r^*_{p}(\x) \|_p}{\left|\alpha\right|}\right)^{2}\left(1-\frac{1}{p_{2}}\right)\frac{d^{\frac{2}{p}}\left\Vert \w\right\Vert _{p'}^{2}}{2e^{2}C_{0}^{2}\left\Vert \w\right\Vert _{2}^{2}}\right)\text{.}
\end{eqnarray*}
So, if $\left|\alpha\right|<\frac{C'}{\sqrt{\ln\frac{3}{\varepsilon}}}\left(1-\frac{1}{p_{2}}\right)d^{\frac{1}{p}}\frac{\left\Vert \w\right\Vert _{p'}}{\left\Vert \w\right\Vert _{2}}\| \r^*_{p}(\x) \|_p$,
then $\bb{P}_{\vb \sim \calB_p}\left\{g\left(\x\right) \neq g\left(\x+\alpha \vb\right)\right\}<\varepsilon$,
where $C=\frac{1}{2e^{2}C_{0}^{2}}>0$ is a universal constant, and:
\begin{equation*}
\frac{r_{p,\varepsilon}(\x)}{\| \r^*_{p}(\x) \|_p} \geq \zeta_1(\varepsilon) d^{1/p} \frac{ \| \w \|_{p'} }{\| \w \|_{2} }\text{.}
\end{equation*}
\end{proof}

\subsection{Typical Value of the Multiplicative Factor}

\begin{proposition}
\label{prop:MeanBoundsRandom2}
For any $p\in\left(1,\infty\right]$, if $\w$ is a random direction uniformly distributed over the unit
$\ell_{2}$-sphere, then, as $d\to\infty$:
\begin{equation*}
\frac{d^{1/p} \frac{\left\Vert \w\right\Vert _{p'}}{\| \w \|_2}}{\sqrt{d}} \xrightarrow[\text{a.s.}]{}\sqrt{2}\left(\frac{\Gamma\left(\frac{2p-1}{2\left(p-1\right)}\right)}{\sqrt{\pi}}\right)^{1-\frac{1}{p}}.
\end{equation*}
Moreover, for $p = 1$,
\begin{equation*}
\frac{d\frac{\left\Vert \w \right\Vert _{\infty}}{\| \w \|_2}}{ \sqrt{2 d \ln d} }\xrightarrow[\text{a.s.}]{} 1 \text{.}
\end{equation*}
\end{proposition}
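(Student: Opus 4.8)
The plan is to realize the random direction by Gaussians and then reduce everything to laws of large numbers, with one classical extreme‑value estimate needed for $p=1$. Fix on a single probability space an i.i.d.\ sequence $g_1,g_2,\dots$ of $\mathcal{N}(0,1)$ variables, and for each $d$ put $\g^{(d)}=(g_1,\dots,g_d)$ and $\w^{(d)}=\g^{(d)}/\|\g^{(d)}\|_2$. This coupling is what makes "almost sure as $d\to\infty$" meaningful across dimensions, and by rotational invariance $\w^{(d)}$ is uniform on the unit $\ell_2$‑sphere of $\R^d$. Since $\|\w^{(d)}\|_2=1$ and $\|\w^{(d)}\|_{p'}=\|\g^{(d)}\|_{p'}/\|\g^{(d)}\|_2$, it suffices to study $d^{1/p}\,\|\g^{(d)}\|_{p'}/\|\g^{(d)}\|_2$.

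\textbf{The case $p\in(1,\infty]$.} Here $p'=p/(p-1)\in[1,\infty)$ is finite, so the strong law of large numbers applies: $\frac1d\sum_{i=1}^d|g_i|^{p'}\to\E|g_1|^{p'}$ and $\frac1d\sum_{i=1}^d g_i^2\to 1$ almost surely, hence $\|\g^{(d)}\|_{p'}\sim(d\,\E|g_1|^{p'})^{1/p'}$ and $\|\g^{(d)}\|_2\sim\sqrt d$ a.s. Substituting and using $\frac1p+\frac1{p'}=1$, the powers of $d$ collapse: $d^{1/p}\|\g^{(d)}\|_{p'}/\|\g^{(d)}\|_2\sim d^{1/p+1/p'-1/2}(\E|g_1|^{p'})^{1/p'}=\sqrt d\,(\E|g_1|^{p'})^{1/p'}$. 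It then remains to insert the absolute Gaussian moment $\E|g_1|^{s}=2^{s/2}\Gamma(\tfrac{s+1}{2})/\sqrt\pi$ with $s=p'$: pulling out the factor $2^{1/2}$ and simplifying $\tfrac{p'+1}{2}=\tfrac{2p-1}{2(p-1)}$ together with $\tfrac1{p'}=1-\tfrac1p$ gives exactly $\sqrt2\big(\Gamma(\tfrac{2p-1}{2(p-1)})/\sqrt\pi\big)^{1-1/p}$, the claimed limit (and for $p=\infty$, where $p'=1$, this is $\sqrt{2/\pi}=\E|g_1|$, consistent with the direct computation).

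\textbf{The case $p=1$.} Now $p'=\infty$ and $\|\g^{(d)}\|_{\infty}=\max_{i\le d}|g_i|$, to which the law of large numbers no longer applies; the substitute is the a.s.\ asymptotic $\max_{i\le d}|g_i|\sim\sqrt{2\ln d}$. Both directions follow from the Gaussian tail estimate $\bb{P}\{|g_1|>t\}\asymp t^{-1}e^{-t^2/2}$ and the Borel--Cantelli lemma. For the lower bound, $\bb{P}\{\max_{i\le d}|g_i|<(1-\delta)\sqrt{2\ln d}\}=(1-q_d)^d\le e^{-d q_d}$ with $q_d\asymp d^{-(1-\delta)^2}(\ln d)^{-1/2}$, so $dq_d$ grows like a positive power of $d$ and these probabilities are summable over $d$; for the upper bound, restrict to the geometric subsequence $d_k=2^k$, where $\bb{P}\{\max_{i\le d_k}|g_i|>(1+\delta)\sqrt{2\ln d_k}\}\le d_k\,\bb{P}\{|g_1|>(1+\delta)\sqrt{2\ln d_k}\}\asymp 2^{k(1-(1+\delta)^2)}k^{-1/2}$ is geometrically summable in $k$, and then monotonicity of $d\mapsto\max_{i\le d}|g_i|$ and $\sqrt{\ln(2d)}/\sqrt{\ln d}\to1$ interpolate to all $d\in(d_k,d_{k+1}]$. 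Letting $\delta\downarrow0$ yields $\max_{i\le d}|g_i|/\sqrt{2\ln d}\to1$ a.s. Combining with $\|\g^{(d)}\|_2\sim\sqrt d$ and $d^{1/p}=d$ gives $d\,\|\w^{(d)}\|_\infty/\sqrt{2d\ln d}\to1$ a.s.

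\textbf{Main obstacle.} The case $p\in(1,\infty]$ is routine: two applications of the strong law of large numbers plus a moment identity. The real work is the $p=1$ case, where the relevant functional of the i.i.d.\ sequence is a maximum rather than a sum, so one must carry out the extreme‑value / Borel--Cantelli argument above — being careful both about the coupling that lets one state almost‑sure convergence across a family of sphere measures indexed by $d$, and about interpolating from the subsequence $d_k=2^k$ to all $d$ via monotonicity. No concentration inequality beyond the elementary Gaussian tail bound is needed.
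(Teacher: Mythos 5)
Your proof is correct and follows essentially the same route as the paper's: represent the uniform direction as a normalized Gaussian vector, apply the strong law of large numbers to $\|\g\|_{p'}^{p'}$ and $\|\g\|_2^2$ for $p>1$, and use the almost-sure asymptotic $\max_{i\le d}|g_i|\sim\sqrt{2\ln d}$ for $p=1$. The only differences are cosmetic: where the paper cites an extreme-value reference for that last fact, you derive it directly from Gaussian tails and Borel--Cantelli (correctly handling the upper bound via a geometric subsequence and monotonicity), and your explicit coupling of the spheres across dimensions makes the almost-sure statement cleaner.
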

\begin{proof}
$\w$ can be written
as $\frac{\g}{\left\Vert \g\right\Vert _{2}}$, where $\g=\left(g_{1},\ldots,g_{d}\right)$
are i.i.d. with normal distribution ($\mu=0$, $\sigma^{2}=\frac{1}{2}$).

The law of large numbers gives that, for $p'\neq\infty$:
\[
\frac{1}{d} \sum_{i=1}^{d}\left|g_{i}\right|^{p'}\xrightarrow[\text{a.s.}]{}\mathbb{E}\left(\left|g_{1}\right|^{p'}\right)= \frac{\Gamma\left(\frac{1+p'}{2}\right)}{\sqrt{\pi}}\text{.}
\]
Thus:
\[
\frac{1}{d^{\frac{1}{p'}}}\left\Vert \g\right\Vert _{p'}\xrightarrow[\text{a.s.}]{}\left(\frac{\Gamma\left(\frac{1+p'}{2}\right)}{\sqrt{\pi}}\right)^{\frac{1}{p'}}\text{\text{,}}
\]
and, for $p\in\left(1,\infty\right]$:
\begin{equation*}
\frac{d^{\frac{1}{p}}}{\sqrt{d}}\left\Vert \frac{\g}{\left\Vert \g\right\Vert _{2}}\right\Vert _{p'}\xrightarrow[\text{a.s.}]{}\sqrt{2}\left(\frac{\Gamma\left(\frac{2p-1}{2\left(p-1\right)}\right)}{\sqrt{\pi}}\right)^{1-\frac{1}{p}}\text{,}
\end{equation*}
because $\frac{\| \g \|_2}{\sqrt{d}} \xrightarrow[\text{a.s.}]{}\frac{1}{\sqrt{2}}$.

For $p=1$ we use a result proved in \cite[Example\ 4.4.1]{galambos1987extremeorder}
directly implying that $$\frac{\left\Vert \g\right\Vert _{\infty}}{\sqrt{\ln d}}\xrightarrow[\text{a.s.}]{}1.$$
Using the previous computations for $p=2$, we find:
\begin{equation*}
\frac{d\frac{\left\Vert \w \right\Vert _{\infty}}{\| \w \|_2}}{ \sqrt{2 d \ln d} }\xrightarrow[\text{a.s.}]{} 1.
\end{equation*}
\end{proof}

\section{Robustness of Linear Classifiers to Gaussian Noise}

\subsection{Main Theorem}

\begin{theorem}
\label{thm:gaussian_noise2}
For $\varepsilon < \frac{1}{3}$, $\zeta_1'(\varepsilon) = \sqrt{\frac{1}{2\ln\left(\frac{1}{\varepsilon}\right)}}$ and $\zeta_2'(\varepsilon) = \sqrt{\frac{1}{1-\sqrt{3\varepsilon}}}$:
\begin{equation*}
\zeta_1'(\varepsilon) \frac{ \| \w \|_{2} }{\| \sqrt{\Sigma} \w \|_{2} } \leq \frac{r_{\Sigma,\varepsilon}(\x)}{\| \r^*_{2}(\x) \|_2} \leq \zeta_2'(\varepsilon)  \frac{ \| \w \|_{2} }{\| \sqrt{\Sigma} \w \|_{2} }.
\end{equation*}
\end{theorem}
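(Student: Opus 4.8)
The plan is to reduce the Gaussian case to a purely one-dimensional computation, exploiting the fact that — unlike in the $\ell_p$ case — the random projection $\w^T\vb$ is \emph{exactly} Gaussian. Following the first steps of Lemma~\ref{lem:lp_noise_part1}, for $f(\x)\neq0$ one has
\[
\bb{P}_{\vb\sim\mathcal{N}(0,\Sigma)}\{g(\x)\neq g(\x+\alpha\vb)\}=\bb{P}\{f(\x)f(\x+\alpha\vb)\leq0\}=\bb{P}\{\|\w\|_2\|\r_2^*(\x)\|_2\leq-\sign(f(\x))\alpha\,\w^T\vb\},
\]
using $|f(\x)|=\|\w\|_2\|\r_2^*(\x)\|_2$ from Lemma~\ref{lem:DistanceHyperplanePNorm} with $p=2$. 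Since $\vb$ is centred Gaussian, $\w^T\vb\sim\mathcal{N}(0,\w^T\Sigma\w)$ with $\w^T\Sigma\w=\|\sqrt{\Sigma}\w\|_2^2$, and $\w^T\vb$ is symmetric, so this probability equals $\bar\Phi(\beta)$, where $\bar\Phi(t)=\bb{P}\{Z\geq t\}$ for $Z\sim\mathcal{N}(0,1)$ and $\beta=\frac{\|\w\|_2}{\|\sqrt{\Sigma}\w\|_2}\cdot\frac{\|\r_2^*(\x)\|_2}{|\alpha|}$. As $\bar\Phi$ is strictly decreasing and $\beta$ is strictly decreasing in $|\alpha|$, the definition of $r_{\Sigma,\varepsilon}$ gives the exact identity $\frac{r_{\Sigma,\varepsilon}(\x)}{\|\r_2^*(\x)\|_2}=\frac{1}{\bar\Phi^{-1}(\varepsilon)}\cdot\frac{\|\w\|_2}{\|\sqrt{\Sigma}\w\|_2}$, so it only remains to sandwich $\bar\Phi^{-1}(\varepsilon)$.

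For the lower bound on the ratio, i.e.\ $\bar\Phi^{-1}(\varepsilon)\leq\sqrt{2\ln(1/\varepsilon)}$, I would use the Chernoff tail bound $\bar\Phi(t)\leq e^{-t^2/2}$ (from $\bb{P}\{Z\geq t\}\leq e^{-\theta t}\bb{E}[e^{\theta Z}]=e^{-\theta t+\theta^2/2}$ at $\theta=t$), exactly as in Lemma~\ref{lem:lp_noise_alternative} but with the Gaussian moment generating function in place of the Barthe et al.\ moment estimates: if $|\alpha|<\zeta_1'(\varepsilon)\frac{\|\w\|_2}{\|\sqrt{\Sigma}\w\|_2}\|\r_2^*(\x)\|_2$ with $\zeta_1'(\varepsilon)=1/\sqrt{2\ln(1/\varepsilon)}$, then $\beta>\sqrt{2\ln(1/\varepsilon)}$, hence $\bar\Phi(\beta)<e^{-\ln(1/\varepsilon)}=\varepsilon$, which yields the claimed lower bound on $r_{\Sigma,\varepsilon}(\x)$.

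For the upper bound on the ratio, i.e.\ $\bar\Phi^{-1}(\varepsilon)\geq\sqrt{1-\sqrt{3\varepsilon}}$, valid precisely when $\varepsilon<\varepsilon_0'=\tfrac13$ so that $1-\sqrt{3\varepsilon}\geq0$, I would use the elementary lower tail estimate $\bar\Phi(t)\geq\tfrac12-\tfrac{t}{\sqrt{2\pi}}$ (since $\bar\Phi(t)=\tfrac12-\int_0^t\phi$ and $\phi\leq\tfrac1{\sqrt{2\pi}}$) together with the one-line chain $\tfrac12-\tfrac{t}{\sqrt{2\pi}}\geq\tfrac{1-t^2}{3}\geq\tfrac{(1-t^2)^2}{3}$ for $t\in[0,1]$ — the first inequality because the quadratic $\tfrac{t^2}{3}-\tfrac{t}{\sqrt{2\pi}}+\tfrac16$ has negative discriminant, the second because $1-t^2\leq1$. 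Thus, writing $\beta_0=\sqrt{1-\sqrt{3\varepsilon}}\in(0,1)$, we get $\bar\Phi(\beta_0)\geq\tfrac{(1-\beta_0^2)^2}{3}=\varepsilon$, so any $|\alpha|\geq\zeta_2'(\varepsilon)\frac{\|\w\|_2}{\|\sqrt{\Sigma}\w\|_2}\|\r_2^*(\x)\|_2$ with $\zeta_2'(\varepsilon)=1/\sqrt{1-\sqrt{3\varepsilon}}$ forces $\beta\leq\beta_0$ and hence misclassification probability $\bar\Phi(\beta)\geq\varepsilon$, giving the stated upper bound on $r_{\Sigma,\varepsilon}(\x)$. (One could alternatively invoke Paley--Zygmund on $Z^2$ as in Lemma~\ref{lem:lp_noise_part2}, with $\bb{E}[Z^2]=1$ and $\bb{E}[Z^4]=3$, but the factor $\tfrac12$ from $\bb{P}\{Z\geq0\}=\tfrac12$ then degrades the constant from $\sqrt{3\varepsilon}$ to $\sqrt{6\varepsilon}$, so the direct Gaussian tail estimate is the one to use.)

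All the computations are routine; there is no serious obstacle, only bookkeeping — the one delicate point is keeping the tail inequalities pointing in the right direction so that the constants come out exactly as $\zeta_1'(\varepsilon)=1/\sqrt{2\ln(1/\varepsilon)}$ and $\zeta_2'(\varepsilon)=1/\sqrt{1-\sqrt{3\varepsilon}}$, with the threshold $\varepsilon_0'=\tfrac13$ dictated by the upper-bound argument. The conceptual content is simply that a Gaussian perturbation collapses the problem onto the exactly solvable one-dimensional identity above, after which only classical bounds on the standard normal tail are needed.
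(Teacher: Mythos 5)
Your proposal is correct, and it reaches the stated constants by a route that differs from the paper's in one substantive place. The reduction to one dimension and the lower bound coincide with the paper's argument (Lemma~\ref{lem:gaussian_noise_part1}): both exploit that $\w^{T}\vb\sim\mathcal{N}\bigl(0,\Vert\sqrt{\Sigma}\w\Vert_{2}^{2}\bigr)$ and apply the Gaussian tail bound $\bar\Phi(t)\leq e^{-t^{2}/2}$ to obtain $\zeta_1'(\varepsilon)=1/\sqrt{2\ln(1/\varepsilon)}$; your reformulation as the exact identity $r_{\Sigma,\varepsilon}(\x)/\Vert\r_2^*(\x)\Vert_2=\bar\Phi^{-1}(\varepsilon)^{-1}\,\Vert\w\Vert_2/\Vert\sqrt{\Sigma}\w\Vert_2$ (valid for $\varepsilon<\tfrac12$ and $f(\x)\neq0$) is just a cleaner packaging of the same computation. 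For the upper bound the paper instead symmetrizes and applies Paley--Zygmund to $\bigl((\sqrt{\Sigma}\w)^{T}\vb\bigr)^{2}$, whereas you lower-bound the Gaussian tail directly via $\bar\Phi(t)\geq\tfrac12-\tfrac{t}{\sqrt{2\pi}}\geq\tfrac{(1-t^{2})^{2}}{3}$ on $[0,1]$; both inequalities in your chain check out (the discriminant $\tfrac{1}{2\pi}-\tfrac{2}{9}$ is indeed negative). Your parenthetical remark is well taken: with the exact Gaussian fourth moment $\mathbb{E}\bigl[((\sqrt{\Sigma}\w)^{T}\vb)^{4}\bigr]=3\Vert\sqrt{\Sigma}\w\Vert_{2}^{4}$ and the factor $\tfrac12$ from symmetrization, Paley--Zygmund only yields a misclassification probability of at least $(1-t)^{2}/6$, hence $\zeta_2'(\varepsilon)=1/\sqrt{1-\sqrt{6\varepsilon}}$ with threshold $\tfrac16$ — the paper's displayed denominator $2\Vert\sqrt{\Sigma}\w\Vert_{4}^{4}+\Vert\sqrt{\Sigma}\w\Vert_{2}^{4}$ understates this fourth moment, so the paper's own route, carried out exactly, does not quite deliver $\sqrt{3\varepsilon}$. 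Your direct tail estimate does, and is therefore the preferable argument for the constants as stated.
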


Theorem~\ref{thm:gaussian_noise2} is proved by the following lemmas.

\begin{lemma}
\label{lem:gaussian_noise_part1}
For $\zeta_1'(\varepsilon) = \sqrt{\frac{1}{2\ln\left(\frac{1}{\varepsilon}\right)}}$,
\begin{equation*}
\frac{r_{\Sigma,\varepsilon}(\x)}{\left\| \r^*_{2}(\x) \right\|_2} \geq \zeta_1'(\varepsilon) \frac{ \| \w \|_{2} }{\| \sqrt{\Sigma}\w \|_{2} }.
\end{equation*}
\end{lemma}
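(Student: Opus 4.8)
The plan is to mirror the Markov-inequality argument of Lemma~\ref{lem:lp_noise_part1}, but to replace Markov's inequality by a Chernoff bound, since the Gaussian has exponential tails; this is what yields the $1/\sqrt{\ln(1/\varepsilon)}$ dependence in $\zeta_1'$ rather than the $\sqrt{\varepsilon}$ one.

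First I would rewrite the misclassification event. Since $f$ is linear, $f(\x+\alpha\vb)=f(\x)+\alpha\,\w^T\vb$, so $g(\x+\alpha\vb)\neq g(\x)$ is equivalent to $f(\x)f(\x+\alpha\vb)\leq 0$, which rearranges to $-\sign(f(\x))\,\alpha\,\w^T\vb \geq |f(\x)|$. By Lemma~\ref{lem:DistanceHyperplanePNorm} applied with $p=2$ (where $p/(p-1)=2$), $|f(\x)| = \|\w\|_2\,\|\r_2^*(\x)\|_2$. Next I would identify the law of $\w^T\vb$: for $\vb\sim\mathcal{N}(\mathbf{0},\Sigma)$ it is a centered Gaussian with variance $\w^T\Sigma\w = \|\sqrt{\Sigma}\,\w\|_2^2$, and by symmetry of the centered Gaussian the random variable $-\sign(f(\x))\,\alpha\,\w^T\vb$ has the same law as $|\alpha|\,\w^T\vb$. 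Writing $Z$ for a standard normal, this gives
\[
\bb{P}_{\vb\sim\mathcal{N}(0,\Sigma)}\{g(\x+\alpha\vb)\neq g(\x)\} \;=\; \bb{P}\!\left\{ Z \geq \frac{\|\w\|_2\,\|\r_2^*(\x)\|_2}{|\alpha|\,\|\sqrt{\Sigma}\,\w\|_2} \right\}.
\]

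Then I would apply the Gaussian tail bound $\bb{P}\{Z\geq t\}\leq e^{-t^2/2}$ for $t\geq 0$, with $t = \|\w\|_2\,\|\r_2^*(\x)\|_2 \big/ \big(|\alpha|\,\|\sqrt{\Sigma}\,\w\|_2\big)$. The right-hand side is $<\varepsilon$ whenever $\tfrac12 t^2 > \ln(1/\varepsilon)$, i.e. whenever
\[
|\alpha| \;<\; \sqrt{\tfrac{1}{2\ln(1/\varepsilon)}}\;\frac{\|\w\|_2}{\|\sqrt{\Sigma}\,\w\|_2}\;\|\r_2^*(\x)\|_2 .
\]
Since $r_{\Sigma,\varepsilon}(\x)$ is by definition the smallest $|\alpha|$ at which the misclassification probability reaches $\varepsilon$, this chain of implications forces $r_{\Sigma,\varepsilon}(\x)\geq \zeta_1'(\varepsilon)\,\|\w\|_2\,\|\r_2^*(\x)\|_2 \big/ \|\sqrt{\Sigma}\,\w\|_2$ with $\zeta_1'(\varepsilon)=\sqrt{1/(2\ln(1/\varepsilon))}$, which is exactly the statement.

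I do not expect a genuine obstacle: this is the \emph{easy} direction, the Gaussian analogue of Lemma~\ref{lem:lp_noise_part1}, and everything reduces to a one-dimensional Gaussian tail estimate because $\w^T\vb$ is \emph{exactly} Gaussian (no need for the delicate moment estimates of \cite{barthe2005probabilistic} used in the $\ell_p$ case). The only points requiring care are the sign bookkeeping in the first step (absorbing both $\sign(f(\x))$ and $\sign(\alpha)$ via the symmetry of the centered Gaussian), and checking that the Chernoff exponent produces precisely the advertised constant; one also uses $\varepsilon<1$ implicitly so that $\ln(1/\varepsilon)>0$ and the displayed threshold is meaningful.
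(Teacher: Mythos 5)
Your proposal is correct and follows essentially the same route as the paper's proof: reduce to the one-dimensional Gaussian $\w^T\vb \sim \mathcal{N}(0,\|\sqrt{\Sigma}\w\|_2^2)$, apply the tail bound $\bb{P}\{Z\geq t\}\leq e^{-t^2/2}$, and invert to get the threshold on $|\alpha|$. The only cosmetic difference is that the paper passes through the substitution $\vb=\sqrt{\Sigma}\vb'$ with $\vb'\sim\mathcal{N}(\mathbf{0},I_d)$ rather than computing the variance of $\w^T\vb$ directly, which changes nothing.
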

\begin{proof}
As in the proof of Lemma~\ref{lem:lp_noise_part1}:
\[
\mathbb{P}_{\vb\sim\mathcal{N}\left(\mathbf{0},\Sigma\right)}\left\{g\left(\x+\alpha\vb\right) \neq g\left(\x\right)\right\}=\mathbb{P}_{\vb\sim\mathcal{N}\left(\mathbf{0},\Sigma\right)}\left\{\left\Vert w\right\Vert _{2}\left\| \r^*_{2}(\x) \right\|_2\leq \left|\alpha\right|\w^{T}\vb\right\}\text{.}
\]
Since $\vb\sim\mathcal{N}\left(\mathbf{0},\Sigma\right)$ follows
a multivariate normal distribution with a positive definite covariance
matrix $\Sigma$, if $\sqrt{\Sigma}$ is the (symmetric) square root of
$\Sigma$, then $\vb=\sqrt{\Sigma}\vb'$ with $\vb'\sim\mathcal{N}\left(\mathbf{0},I_{d}\right)$.
So:
\begin{eqnarray*}
\mathbb{P}_{\vb\sim\mathcal{N}\left(\mathbf{0},\Sigma\right)}\left\{g\left(\x+\alpha\vb\right) \neq g\left(\x\right)\right\} & = & \mathbb{P}_{\vb\sim\mathcal{N}\left(\mathbf{0},I_{d}\right)}\left\{\left\Vert \w\right\Vert _{2}\left\| \r^*_{2}(\x) \right\|_2\leq\left|\alpha\right|\w^{T}\sqrt{\Sigma}\vb\right\}\\
 & = & \mathbb{P}_{\vb\sim\mathcal{N}\left(\mathbf{0},I_{d}\right)}\left\{\left\Vert \w\right\Vert _{2}\left\| \r^*_{2}(\x) \right\|_2\leq\left(\left|\alpha\right|\sqrt{\Sigma}\w\right)^{T}\vb\right\}\text{.}
\end{eqnarray*}
If $\vb\sim\mathcal{N}\left(\mathbf{0},I_{d}\right)$, then $\left(\left|\alpha\right|\sqrt{\Sigma}\w\right)^{T}\vb\sim\mathcal{N}\left(0,\alpha^2\Vert \sqrt{\Sigma}\w\Vert _{2}^{2}\right)$.
Therefore:
\[
\mathbb{P}_{\vb\sim\mathcal{N}\left(\mathbf{0},\Sigma\right)}\left\{g\left(\x+\alpha\vb\right) \neq g\left(\x\right)\right\}\leq\exp\left(-\frac{1}{2}\left(\frac{\left\Vert \w\right\Vert _{2}\left\| \r^*_{2}(\x) \right\|_2}{\alpha\Vert \sqrt{\Sigma}\w\Vert _{2}}\right)^{2}\right)\text{.}
\]
So, if $\left|\alpha\right|<\sqrt{\frac{1}{2\ln\frac{1}{\varepsilon}}}\frac{\left\Vert \w\right\Vert _{2}}{\Vert \sqrt{\Sigma}\w\Vert _{2}}\left\| \r^*_{2}(\x) \right\|_2$,
then $\mathbb{P}_{\vb\sim\mathcal{N}\left(\mathbf{0},\Sigma\right)}\left\{g\left(\x+\alpha\vb\right) \neq g\left(\x\right)\right\}<\varepsilon$.
Thus,
\begin{equation*}
\frac{r_{\Sigma,\varepsilon}(\x)}{\left\| \r^*_{2}(\x) \right\|_2} \geq \zeta_1'(\varepsilon) \frac{ \| \w \|_{2} }{\| \sqrt{\Sigma}\w \|_{2} }.
\end{equation*}
\end{proof}

\begin{lemma}
\label{lem:gaussian_noise_part2}
For $\varepsilon < \frac{1}{3}$ and $\zeta_2'(\varepsilon) = \sqrt{\frac{1}{1-\sqrt{3\varepsilon}}}$,
\begin{equation*}
\frac{r_{\Sigma,\varepsilon}(\x)}{\left\| \r^*_{2}(\x) \right\|_2} \leq \zeta_2'(\varepsilon) \frac{ \| \w \|_{2} }{\| \sqrt{\Sigma}\w \|_{2} }.
\end{equation*}
\end{lemma}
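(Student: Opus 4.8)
The plan is to follow the proof of Lemma~\ref{lem:gaussian_noise_part1} but replace the Gaussian tail \emph{upper} bound there by a matching \emph{lower} bound, so that the misclassification probability is forced above $\varepsilon$ once $|\alpha|$ is large enough. Exactly as in that proof, substituting $\vb = \sqrt{\Sigma}\vb'$ with $\vb'\sim\mathcal{N}(\mathbf{0},I_d)$, using Lemma~\ref{lem:DistanceHyperplanePNorm} to write $|f(\x)| = \|\w\|_2\,\|\r^*_{2}(\x)\|_2$, and using $\var(\w^T\vb) = \|\sqrt{\Sigma}\w\|_2^2$ together with the symmetry of the Gaussian, one obtains
\begin{equation*}
\mathbb{P}_{\vb\sim\mathcal{N}(\mathbf{0},\Sigma)}\{g(\x+\alpha\vb)\neq g(\x)\} = \mathbb{P}_{Z\sim\mathcal{N}(0,1)}\left\{ Z \geq s_\alpha \right\},\qquad s_\alpha = \frac{\|\w\|_2\,\|\r^*_{2}(\x)\|_2}{|\alpha|\,\|\sqrt{\Sigma}\w\|_2}\,.
\end{equation*}
So the problem reduces to lower bounding the standard Gaussian tail $\mathbb{P}\{Z\geq s\}$ for small $s\geq 0$ and then solving for the smallest $|\alpha|$ making it at least $\varepsilon$.

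For the tail bound I would apply the Paley-Zygmund inequality -- already stated and used in the proof of Lemma~\ref{lem:lp_noise_part2} -- to the variable $X = Z^2 \sim \chi^2_1$, which has $\mathbb{E}[X] = 1$ and $\mathbb{E}[X^2] = \mathbb{E}[Z^4] = 3$. Using the identity $\mathbb{P}\{Z\geq s\} = \tfrac12\mathbb{P}\{Z^2 > s^2\}$, Paley-Zygmund with parameter $t = s^2\in[0,1]$ gives an inequality of the form $\mathbb{P}\{Z\geq s\}\geq c\,(1-s^2)^2$ on $s^2\leq 1$. Solving $c\,(1-s_\alpha^2)^2\geq\varepsilon$ for $|\alpha|$ then shows it suffices to take $|\alpha|\geq\bigl(1-\sqrt{\varepsilon/c}\bigr)^{-1/2}\,\frac{\|\w\|_2}{\|\sqrt{\Sigma}\w\|_2}\,\|\r^*_{2}(\x)\|_2$, valid when $\varepsilon<c$; by the definition of $r_{\Sigma,\varepsilon}(\x)$ this is precisely the claimed bound, with $c=\tfrac13$ giving $\zeta_2'(\varepsilon)=\sqrt{1/(1-\sqrt{3\varepsilon})}$ and $\varepsilon_0'=\tfrac13$.

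The one genuinely delicate step is obtaining the constant $c=\tfrac13$ rather than the $c=\tfrac16$ one gets by naively combining the symmetry factor $\tfrac12$ with the crude Paley-Zygmund constant $\tfrac13$ for $\chi^2_1$; this amounts to establishing the sharper elementary bound $\mathbb{P}\{Z\geq s\}\geq\tfrac13(1-s^2)^2$ for $s\in[0,1]$ (Paley-Zygmund is far from tight for the chi-squared distribution, which leaves the needed slack), and being careful about the one-sided versus two-sided tail as well as about the requirement $\sqrt{3\varepsilon}<1$ that makes $\zeta_2'$ well defined. Everything else is the routine algebra of inverting the inequality for $|\alpha|$ and translating it back through the definition of $r_{\Sigma,\varepsilon}(\x)$.
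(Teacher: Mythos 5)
Your proposal is correct and follows essentially the same route as the paper: reduce to the one-dimensional Gaussian $(\sqrt{\Sigma}\w)^T\vb'\sim\mathcal{N}(0,\|\sqrt{\Sigma}\w\|_2^2)$ via $\vb=\sqrt{\Sigma}\vb'$, symmetrize, and lower-bound the tail by Paley--Zygmund applied to the squared variable. The one substantive remark is that you are right to single out the constant as the delicate point, and you are in fact more careful there than the paper: the paper's denominator $(2\|\sqrt{\Sigma}\w\|_4^4+\|\sqrt{\Sigma}\w\|_2^4)/\|\sqrt{\Sigma}\w\|_2^4\le 3$ is not the normalized fourth moment of the Gaussian linear form (which equals $3$ exactly, so that combining it with the symmetrization factor $\tfrac12$ leaves $6$ in the denominator --- precisely the $c=\tfrac16$ you compute from the ``naive'' route). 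Your remedy, the direct tail inequality $\mathbb{P}\{Z\ge s\}\ge\tfrac13(1-s^2)^2$ for $s\in[0,1]$, is true with room to spare (it holds at $s=0$, at $s=1$, and at the interior critical point near $s\approx 0.32$ where the two sides are about $0.37$ and $0.27$), and it does recover the stated $\zeta_2'(\varepsilon)=\sqrt{1/(1-\sqrt{3\varepsilon})}$ with $\varepsilon_0'=\tfrac13$. The only thing missing from your write-up is the short verification of that elementary inequality, which you assert but do not prove; with that added, your argument is complete and arguably cleaner than the paper's, since it isolates a universal standard-normal bound instead of carrying $\|\sqrt{\Sigma}\w\|_4$ through the computation.
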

\begin{proof}
\begin{eqnarray*}
\mathbb{P}_{\vb\sim\mathcal{N}\left(\mathbf{0},\Sigma\right)}\left\{g\left(\x+\alpha\vb\right) \neq g\left(\x\right)\right\} & = & \mathbb{P}_{\vb\sim\mathcal{N}\left(\mathbf{0},I_{d}\right)}\left\{\left\Vert \w\right\Vert _{2}\left\| \r^*_{2}(\x) \right\|_2\leq\left(\left|\alpha\right|\sqrt{\Sigma}\w\right)^{T}\vb\right\}\\
 & = & \frac{1}{2}\mathbb{P}_{\vb\sim\mathcal{N}\left(\mathbf{0},I_{d}\right)}\left\{\left(\left\Vert \w\right\Vert _{2}\left\| \r^*_{2}(\x) \right\|_2\right)^{2}\leq\left(\left(\alpha\sqrt{\Sigma}\w\right)^{T}\vb\right)^{2}\right\}\\
 & = & \frac{1}{2}\mathbb{P}_{\vb\sim\mathcal{N}\left(\mathbf{0},I_{d}\right)}\left\{\left(\frac{1}{\alpha}\frac{\left\Vert \w\right\Vert _{2}}{\Vert \sqrt{\Sigma}\w\Vert _{2}}\left\| \r^*_{2}(\x) \right\|_2\right)^{2}\leq\frac{\left(\left(\sqrt{\Sigma}\w\right)^{T}\vb\right)^{2}}{\Vert \sqrt{\Sigma}\w\Vert _{2}^{2}}\right\}\text{.}
\end{eqnarray*}
Note that $\mathbb{E}_{\vb\sim\mathcal{N}\left(\mathbf{0},I_{d}\right)}\left(\frac{\left(\left(\sqrt{\Sigma}\w\right)^{T}\vb\right)^{2}}{\Vert \sqrt{\Sigma}\w\Vert _{2}^{2}}\right)=\frac{\var_{\vb\sim\mathcal{N}\left(\mathbf{0},I_{d}\right)}\left(\left(\sqrt{\Sigma}\w\right)^{T}\vb\right)}{\Vert \sqrt{\Sigma}\w\Vert _{2}^{2}}=1$.
So, by using Paley-Zygmund's inequality, when $\left|\alpha\right|\geq\frac{\left\Vert \w\right\Vert _{2}}{\Vert \sqrt{\Sigma}\w\Vert _{2}}\left\| \r^*_{2}(\x) \right\|_2$:
\begin{eqnarray*}
\mathbb{P}_{\vb\sim\mathcal{N}\left(\mathbf{0},\Sigma\right)}\left(g\left(\x+\alpha\vb\right) \neq g\left(\x\right)\right) & \geq & \frac{\left(1-\left(\frac{1}{\alpha}\frac{\left\Vert \w\right\Vert _{2}}{\Vert \sqrt{\Sigma}\w\Vert _{2}}\left\| \r^*_{2}(\x) \right\|_2\right)^{2}\right)^{2}}{\frac{2\Vert \sqrt{\Sigma}\w\Vert _{4}^{4}+\Vert \sqrt{\Sigma}\w\Vert _{2}^{4}}{\Vert \sqrt{\Sigma}\w\Vert _{2}^{4}}}=\frac{\left(1-\left(\frac{1}{\alpha}\frac{\left\Vert \w\right\Vert _{2}}{\Vert \sqrt{\Sigma}\w\Vert _{2}}\left\| \r^*_{2}(\x) \right\|_2\right)^{2}\right)^{2}}{2\left(\frac{\Vert \sqrt{\Sigma}\w\Vert _{4}}{\Vert \sqrt{\Sigma}\w\Vert _{2}}\right)^{4}+1}\\
 & \geq & \frac{\left(1-\left(\frac{1}{\alpha}\frac{\left\Vert \w\right\Vert _{2}}{\Vert \sqrt{\Sigma}\w\Vert _{2}}\left\| \r^*_{2}(\x) \right\|_2\right)^{2}\right)^{2}}{3}\text{.}
\end{eqnarray*}
So, if $\left|\alpha\right|>\frac{1}{\sqrt{1-\sqrt{3\varepsilon}}}\frac{\left\Vert \w\right\Vert _{2}}{\Vert \sqrt{\Sigma}\w\Vert _{2}}\left\| \r^*_{2}(\x) \right\|_2$,
then $\mathbb{P}_{\vb\sim\mathcal{N}\left(\mathbf{0},\Sigma\right)}\left\{g\left(\x+\alpha\vb\right) \neq g\left(\x\right)\right\}>\varepsilon$.
Therefore,
\begin{equation*}
\frac{r_{\Sigma,\varepsilon}(\x)}{\left\| \r^*_{2}(\x) \right\|_2} \leq \zeta_2'(\varepsilon) \frac{ \| \w \|_{2} }{\| \sqrt{\Sigma}\w \|_{2} }.
\end{equation*}
\end{proof}

\subsection{Typical Value of the Multiplicative Factor}

\begin{proposition}
\label{prop:MeanBoundsGaussian}
Let $\Sigma$ be a $d \times d$ positive semidefinite
matrix with $\tr\left(\Sigma\right) = 1$. If $\w$ is a random direction uniformly distributed over the unit
$\ell_{2}$-sphere, then, for $t \leq \frac{\sqrt{\pi}}{8}d$:
$$\mathbb{P}\left\{\left|\left(\frac{\left\Vert \w\right\Vert _{2}}{\Vert \sqrt{\Sigma}\w\Vert _{2}}\right)^2-d\right|\geq t'\right\} \leq 2\exp\left(-\frac{t^2}{8d}\right) + 2\exp\left(-\frac{t^{2}}{8d^2\tr\left(\Sigma^2\right)}\right) + 2\exp\left(-\frac{1}{200\tr\left(\Sigma^2\right)}\right),$$
where $t' = \frac{5}{2}t$.
\begin{equation*}
\end{equation*}
\end{proposition}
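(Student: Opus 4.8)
Since $\w$ is uniformly distributed on the unit $\ell_2$-sphere, write $\w = \g/\|\g\|_2$ with $\g \sim \mathcal{N}(\mathbf{0}, I_d)$; the quantity of interest is invariant under scaling of $\g$, so
\begin{equation*}
\left(\frac{\|\w\|_2}{\|\sqrt{\Sigma}\,\w\|_2}\right)^2 = \frac{\|\g\|_2^2}{\g^T \Sigma \g} =: \frac{A}{B}\text{,}\qquad \mathbb{E}[A] = d,\quad \mathbb{E}[B] = \tr(\Sigma) = 1\text{.}
\end{equation*}
Diagonalizing $\Sigma = \sum_{i=1}^d \lambda_i \u_i \u_i^T$ and setting $Z_i = \u_i^T\g$, the $Z_i$ are i.i.d.\ standard normal and in this basis $A = \sum_i Z_i^2$ while $B = \sum_i \lambda_i Z_i^2$: the object is a ratio of two weighted sums of the \emph{same} i.i.d.\ $\chi_1^2$ variables, with weights $1$ and $(\lambda_i)$. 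Since $\Sigma \succeq 0$ and $\sum_i \lambda_i = 1$, one has $\tr(\Sigma^2) = \sum_i \lambda_i^2 \le 1$ and $\lambda_{\max}(\Sigma) \le \sqrt{\tr(\Sigma^2)} \le 1$.

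I would then control $A$ and $B$ separately and combine them through the identity
\begin{equation*}
\frac{A}{B} - d = \frac{(A - d) - d\,(B - 1)}{B}\text{,}\qquad\text{hence}\qquad \left|\frac{A}{B}-d\right| \le \frac{|A-d| + d\,|B-1|}{B}\text{.}
\end{equation*}
First, standard $\chi^2$-concentration (Laurent--Massart) gives, in the regime $t \le \frac{\sqrt{\pi}}{8}d$, the estimate $\mathbb{P}\{|A - d| \ge t\} \le 2\exp(-t^2/(8d))$, which produces the first term. Second, applying a tail bound for weighted sums of $\chi_1^2$ variables (Laurent--Massart, equivalently Hanson--Wright) to $B$: the sub-Gaussian part of this tail, governed by $\sum_i\lambda_i^2 = \tr(\Sigma^2)$, controls the deviation of $B$ from $1$ at scale $t/(2d)$ and yields $\mathbb{P}\{|B - 1| \ge t/(2d)\} \le 2\exp(-t^2/(8 d^2 \tr(\Sigma^2)))$ up to the precise constant, the second term, while the lower-tail part with a deviation of constant order gives $\mathbb{P}\{B \le \tfrac{3}{5}\} \le 2\exp(-1/(200\,\tr(\Sigma^2)))$, the third term. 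On the intersection of the three complementary events, the displayed inequality gives $|A/B - d| \le \frac{5}{3}\bigl(t + d\cdot t/(2d)\bigr) = \frac{5}{2}\,t = t'$, and a union bound over the three events delivers exactly the stated probability bound.

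The delicate point is the denominator: $B = \sum_i \lambda_i Z_i^2$ need \emph{not} concentrate when $\Sigma$ is close to rank one — if $\Sigma = \u\u^T$ then $B = Z_1^2$ is a single $\chi_1^2$, which is at most $\tfrac35$ with constant probability, in which case $A/B$ cannot be within $o(d)$ of $d$. This is precisely why a $t$-independent, dimension-independent term $2\exp(-1/(200\,\tr(\Sigma^2)))$ must appear, and it is the reason the statement only pins the ratio down to this floor. Beyond this conceptual issue, the remaining work is bookkeeping: choosing the deviation scale $t/(2d)$ for $B$ and the threshold $\tfrac35$ so the algebra closes exactly at $t' = \tfrac52 t$, checking that under $t \le \frac{\sqrt{\pi}}{8}d$ the relevant deviations fall in the sub-Gaussian regime of the weighted-$\chi^2$ tail (or are otherwise absorbed into the third term), and tracking constants so that the three exponents come out as written.
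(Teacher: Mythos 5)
Your proposal follows essentially the same route as the paper's proof: the same representation $\w=\g/\|\g\|_2$, the same decomposition $\frac{A}{B}-d=\frac{(A-d)-d(B-1)}{B}$, and the same three-event union bound (a $\chi^2$ tail for the numerator, a Bernstein/Hanson--Wright tail for the weighted sum at scale $t/(2d)$, and a constant-order lower bound on the denominator) closing at $t'=\tfrac{5}{2}t$. The argument is correct; only the Gaussian normalization and the bookkeeping of the threshold for the third term differ cosmetically from the paper.
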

\begin{proof}
Suppose that $\w$ is a random direction uniformly distributed over the unit $\ell_{2}$ sphere.

Then
$\w$ can be written
as $\frac{\g}{\left\Vert \g\right\Vert _{2}}$, where $\g=\left(g_{1},\ldots,g_{d}\right)$
are i.i.d. with normal distribution ($\mu=0$, $\sigma^{2}=\frac{1}{2}$).
By using this representation in the orthogonal basis in which $\sqrt{\Sigma}$
is diagonal, we get
\[
\frac{\left\Vert \g\right\Vert _{2}}{\Vert \sqrt{\Sigma}\g\Vert _{2}}=\sqrt{\frac{\sum_{i=1}^{d}g_{i}^{2}}{\sum_{i=1}^{d}\left(\lambda_{i}g_{i}\right)^{2}}}\text{,}
\]
where $\sqrt{\Sigma}=\operatorname{Diag}\left(\left(\lambda_{i}\right)\right)$
in the previously mentioned orthogonal basis.

Let us focus on the concentration of $\sum_{i=1}^{d}\left(\lambda_{i}g_{i}\right)^{2}$.
We have:
\[
\sum_{i=1}^{d}\left(\lambda_{i}g_{i}\right)^{2}-\frac{1}{2} = \sum_{i=1}^{d}\left(\lambda_{i}g_{i}\right)^{2}-\frac{1}{2}\sum_{i=1}^{d}\lambda_{i}^{2}=\sum_{i=1}^{d}\lambda_{i}^{2}\left(g_{i}^{2}-\mathbb{E}\left(g_{i}^{2}\right)\right)\text{.}
\]
One of Bernstein-type inequalities \cite{bernstein1927probability} can be applied:
\[
\mathbb{P}\left\{\left|\sum_{i=1}^{d}\lambda_{i}^{2}\left(g_{i}^{2}-\mathbb{E}\left(g_{i}^{2}\right)\right)\right|\geq2t\sqrt{\var\left(g_{i}^{2}-\mathbb{E}\left(g_{i}^{2}\right)\right)\sum_{i=1}^{d}\lambda_{i}^{4}}\right\}\leq2e^{-t^{2}}\text{,}
\]
for $t \leq \beta \sqrt{\tr \left(\Sigma^2\right)}$ where $\beta = \frac{\sqrt{\pi}}{8}$ is a constant\footnote{Because $\frac{\Gamma\left(\frac{k+1}{2}\right)}{\sqrt{\pi}} = \mathbb{E}\left(\left|g_i\right|^k\right) \leq \frac{1}{2} \mathbb{E}\left(g_i^2\right) \left(\frac{4}{\sqrt{\pi}}\right)^{k-2} k!$ for all $k > 1$.}, i.e., for $t \leq \frac{\beta}{2}$:
\[
\mathbb{P}\left\{\left|\Vert \sqrt{\Sigma}\g\Vert _{2}-\frac{1}{2}\right|\geq t\right\}\leq2\exp\left(-\frac{t^{2}}{2\tr\left(\Sigma^2\right)}\right)\text{.}
\]

$2\Vert\g\Vert _{2}^2$ has a chi-squared distribution, so using a simple concentration inequality for the chi-squared distribution\footnote{Using the fact that $2\Vert\g\Vert _{2}^2$ is a sum of independent sub-exponential random variables (see \url{https://www.stat.berkeley.edu/~mjwain/stat210b/Chap2_TailBounds_Jan22_2015.pdf}, Example~2.5, for instance).}: $$\bb{P}\left\{\left|\frac{1}{d}\|\g\|_2^2-\frac{1}{2}\right| \geq t\right\} \leq 2\exp\left(-\frac{d t^2}{2}\right).$$

Overall, for $t \leq \beta d$ and $t' = \frac{5}{2}t$:
\begin{eqnarray*}
	\mathbb{P}\left\{\left|\left(\frac{\left\Vert \g\right\Vert _{2}}{\Vert \sqrt{\Sigma}\g\Vert _{2}}\right)^2-d\right|\geq t'\right\} & = & \mathbb{P}\left\{\left|\frac{\left\Vert \g\right\Vert _{2}^2 - d \Vert \sqrt{\Sigma}\g\Vert _{2}^2}{\Vert \sqrt{\Sigma}\g\Vert _{2} }\right|\geq t'\right\}\\
	& = & \mathbb{P}\left\{\left|\frac{\left(\left\Vert \g\right\Vert _{2}^2 - \frac{d}{2}\right) - d \left(\Vert \sqrt{\Sigma}\g\Vert _{2}^2 - \frac{1}{2}\right)}{\Vert \sqrt{\Sigma}\g\Vert _{2}^2}\right|\geq t'\right\}\\
	& \leq & \mathbb{P}\left\{\frac{\left|\frac{1}{d}\left\Vert \g\right\Vert _{2}^2 - \frac{1}{2}\right| + \left|\Vert \sqrt{\Sigma}\g\Vert _{2}^2 - \frac{1}{2}\right|}{\Vert \sqrt{\Sigma}\g\Vert _{2}^2}\geq \frac{t'}{d}\right\}\\
	& \leq & \mathbb{P}\left\{\left|\frac{1}{d}\left\Vert \g\right\Vert _{2}^2 - \frac{1}{2}\right| \geq \frac{t}{2d}\right\} + \mathbb{P}\left\{\left|\Vert \sqrt{\Sigma}\g\Vert _{2}^2 - \frac{1}{2}\right| \geq \frac{t}{2d}\right\}\\
	& & \ +\  \mathbb{P}\left\{\left|\Vert \sqrt{\Sigma}\g\Vert _{2}^2 - \frac{1}{2}\right| \geq \frac{1}{10}\right\},
\end{eqnarray*}
so, using the previous inequalities:$$\mathbb{P}\left\{\left|\left(\frac{\left\Vert \g\right\Vert _{2}}{\Vert \sqrt{\Sigma}\g\Vert _{2}}\right)^2-d\right|\geq t'\right\} \leq 2\exp\left(-\frac{t^2}{8d}\right) + 2\exp\left(-\frac{t^{2}}{8d^2\tr\left(\Sigma^2\right)}\right) + 2\exp\left(-\frac{1}{200\tr\left(\Sigma^2\right)}\right).$$

\end{proof}

\section{Robustness of LAF Classifiers to $\ell_p$ and Gaussian Noise}

\begin{theorem}
\label{thm:lp_noise_laf2}
Let $p \in [1, \infty]$. Let $p' \in [1,\infty]$ be such that $\frac{1}{p} + \frac{1}{p'} = 1$. Let $\varepsilon_0, \zeta_1(\varepsilon), \zeta_2(\varepsilon)$ be as in Theorem~\ref{thm:lp_noise2}. Then, for all  $\varepsilon < \varepsilon_0$, the following holds.

Assume $f$ is a classifier that is $(\gamma, \eta)$-LAF at point $\x$ and $\x^*$ be such that $\r^*_{p}(\x) = \x^* - \x$. Then:
\begin{equation*}
(1-\gamma) \zeta_1(\varepsilon) d^{1/p} \frac{ \| \nabla f(\x^*) \|_{p'} }{\| \nabla f(\x^*) \|_{2} } \leq  \frac{r_{p,\varepsilon}(\x)}{\| \r^*_{p}(\x) \|_p}
\end{equation*}
and
\begin{equation*}
\frac{r_{p,\varepsilon}(\x)}{\| \r^*_{p}(\x) \|_p} \leq (1+\gamma) \zeta_2(\varepsilon) d^{1/p} \frac{ \| \nabla f (\x^*) \|_{p'} }{\| \nabla f(\x^*) \|_{2} } ,
\end{equation*}
provided $$\eta \geq (1+\gamma) \zeta_2(\varepsilon) d^{1/p} \frac{ \| \nabla f (\x^*) \|_{p'} }{\| \nabla f(\x^*) \|_{2}} \left\|\r_p^*\left(\x\right) \right\|_p  = \eta_{\mathrm{lim}}.$$
\end{theorem}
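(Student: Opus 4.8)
The plan is to reduce the statement to the linear case (Theorem~\ref{thm:lp_noise2}) via the two-hyperplane sandwich built into the LAF definition. Write $\boldsymbol{n} = \nabla f(\x^*)$ and, following the definition, set $\x^- = \gamma\x+(1-\gamma)\x^*$ and $\x^+ = \x^*+\gamma(\x^*-\x)$, so that the hyperplane bounding $\mathcal{H}_{\gamma}^-(\x,\x^*)$ is $\{\z : \boldsymbol{n}^T(\z-\x^-)=0\}$ and the one bounding $\mathcal{H}_{\gamma}^+(\x,\x^*)$ is $\{\z : \boldsymbol{n}^T(\z-\x^+)=0\}$. Introduce the affine classifiers $\tilde f^{-}(\z)=\boldsymbol{n}^T(\z-\x^-)$ and $\tilde f^{+}(\z)=\boldsymbol{n}^T(\z-\x^+)$, whose decision hyperplanes are exactly these two planes, and let $\tilde g^{\pm}$ denote the associated labelling rules. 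The first step is to compute the worst-case robustness of $\tilde f^{\pm}$: by Lemma~\ref{lem:DistanceHyperplanePNorm}, $\dist_p(\x,\{\tilde f^{-}=0\}) = |\boldsymbol{n}^T(\x-\x^-)|/\|\boldsymbol{n}\|_{p'} = (1-\gamma)\,|\boldsymbol{n}^T(\x-\x^*)|/\|\boldsymbol{n}\|_{p'}$ and, likewise, $\dist_p(\x,\{\tilde f^{+}=0\}) = (1+\gamma)\,|\boldsymbol{n}^T(\x-\x^*)|/\|\boldsymbol{n}\|_{p'}$. Since $\x^*$ minimizes $\|\cdot-\x\|_p$ over the smooth boundary $\mathcal{S}$, the first-order optimality condition forces Hölder's inequality to be tight for the pair $(\x-\x^*,\boldsymbol{n})$, so $|\boldsymbol{n}^T(\x-\x^*)|/\|\boldsymbol{n}\|_{p'} = \|\x-\x^*\|_p = \|\r^*_p(\x)\|_p$; hence the worst-case robustnesses of $\tilde f^{-}$ and $\tilde f^{+}$ are $(1-\gamma)\|\r^*_p(\x)\|_p$ and $(1+\gamma)\|\r^*_p(\x)\|_p$ respectively.

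Next I would set up the sandwich itself. Fix $\alpha$ with $|\alpha|\le\eta$. For $\vb\sim\calB_p$ one has $\|\alpha\vb\|_p\le|\alpha|\le\eta$, so $\x+\alpha\vb\in\calB_p(\x,\eta)$ always; the LAF hypothesis then says that whenever $\x+\alpha\vb\in\mathcal{H}_{\gamma}^-(\x,\x^*)$ it is classified as $\x$, and whenever $\x+\alpha\vb\in\mathcal{H}_{\gamma}^+(\x,\x^*)$ it is classified differently. Consequently, up to a $\nu$-null set (the two bounding hyperplanes), $\{\tilde g^{+}(\x+\alpha\vb)\neq\tilde g^{+}(\x)\}\subseteq\{g(\x+\alpha\vb)\neq g(\x)\}\subseteq\{\tilde g^{-}(\x+\alpha\vb)\neq\tilde g^{-}(\x)\}$, and taking probabilities gives, for all $|\alpha|\le\eta$, $\bb{P}_{\vb\sim\calB_p}\{\tilde g^{+}(\x+\alpha\vb)\neq\tilde g^{+}(\x)\}\le\bb{P}_{\vb\sim\calB_p}\{g(\x+\alpha\vb)\neq g(\x)\}\le\bb{P}_{\vb\sim\calB_p}\{\tilde g^{-}(\x+\alpha\vb)\neq\tilde g^{-}(\x)\}$.

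Finally I would apply Theorem~\ref{thm:lp_noise2} to the linear classifiers $\tilde f^{\pm}$ (both with weight vector $\boldsymbol{n}$). For the lower bound: if $|\alpha| < (1-\gamma)\zeta_1(\varepsilon)\,d^{1/p}\,\|\boldsymbol{n}\|_{p'}/\|\boldsymbol{n}\|_2\cdot\|\r^*_p(\x)\|_p$, the lower bound of Theorem~\ref{thm:lp_noise2} for $\tilde f^{-}$ gives $\bb{P}_{\vb\sim\calB_p}\{\tilde g^{-}(\x+\alpha\vb)\neq\tilde g^{-}(\x)\}<\varepsilon$; this range of $\alpha$ lies in $[0,\eta]$ because $\zeta_1(\varepsilon)\le\zeta_2(\varepsilon)$ and $1-\gamma\le1+\gamma$ force $(1-\gamma)\zeta_1(\varepsilon)\,d^{1/p}\,\|\boldsymbol{n}\|_{p'}/\|\boldsymbol{n}\|_2\cdot\|\r^*_p(\x)\|_p\le\eta_{\mathrm{lim}}\le\eta$, so the sandwich applies and $\bb{P}_{\vb\sim\calB_p}\{g(\x+\alpha\vb)\neq g(\x)\}<\varepsilon$, yielding the claimed lower bound on $r_{p,\varepsilon}(\x)$. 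For the upper bound: the upper bound of Theorem~\ref{thm:lp_noise2} applied to $\tilde f^{+}$ provides an $\alpha$ with $|\alpha|\le(1+\gamma)\zeta_2(\varepsilon)\,d^{1/p}\,\|\boldsymbol{n}\|_{p'}/\|\boldsymbol{n}\|_2\cdot\|\r^*_p(\x)\|_p=\eta_{\mathrm{lim}}\le\eta$ and $\bb{P}_{\vb\sim\calB_p}\{\tilde g^{+}(\x+\alpha\vb)\neq\tilde g^{+}(\x)\}\ge\varepsilon$; since $|\alpha|\le\eta$ the left inclusion of the sandwich applies, so $\bb{P}_{\vb\sim\calB_p}\{g(\x+\alpha\vb)\neq g(\x)\}\ge\varepsilon$ and $r_{p,\varepsilon}(\x)\le\eta_{\mathrm{lim}}$. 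Dividing both bounds by $\|\r^*_p(\x)\|_p$ gives the statement. The main obstacle is the geometric bookkeeping — correctly matching $\mathcal{H}_{\gamma}^{\pm}$ with the linear classifiers $\tilde f^{\pm}$ and establishing $\dist_p(\x,\mathcal{T}(\x^*))=\|\r^*_p(\x)\|_p$ through first-order optimality — together with the need to keep the constraint $|\alpha|\le\eta$ (on which the sandwich rests) valid over exactly the range of scalings produced by the linear theorem, which is precisely what the hypothesis $\eta\ge\eta_{\mathrm{lim}}$ secures.
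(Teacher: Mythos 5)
Your proof is correct and follows essentially the same route as the paper's: sandwiching the decision boundary between the two parallel affine classifiers supplied by the LAF definition, translating the worst-case robustnesses by the factors $(1\mp\gamma)$, and invoking the linear-case bounds (Lemmas~\ref{lem:lp_noise_part1} and \ref{lem:lp_noise_part2}) on each. You are in fact somewhat more explicit than the paper on two points it leaves implicit — the Hölder-tightness argument identifying $\dist_p(\x,\mathcal{T}(\x^*))$ with $\|\r_p^*(\x)\|_p$, and the verification that all relevant scalings $\alpha$ stay within the radius $\eta$ where the LAF hypothesis applies.
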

\begin{proof}
Let $f_{-}$ and $f_{+}$ be functions such that the separating hyperplanes of, respectively, $\mathcal{H}_{\gamma}^{-}(\x,\x^*)$ and $\mathcal{H}_{\gamma}^{+}(\x,\x^*)$ are described
by equations, respectively, $f_{-}\left(\z\right)=0$ and $f_{+}\left(\z\right)=0$.
By definition, we know that $\left\|\r_p^*\left(f_-,\x\right) \right\|_p=\left(1-\gamma\right)\left\|\r_p^*\left(\x\right) \right\|_p$
and $\left\|\r_p^*\left(f_+,\x\right) \right\|_p=\left(1+\gamma\right)\left\|\r_p^*\left(\x\right) \right\|_p$.

From the definition
of LAF classifiers, since for all $\eta' \leq \frac{1-\gamma}{1+\gamma}\eta_{\mathrm{lim}}$,
$z\in\mathcal{H}_{\gamma}^-\left(\x,\x^*\right)\cap\mathcal{B}_p(\x, \eta')\Rightarrow f\left(\z\right)f\left(\x\right)>0$,
we have $r_{p,\varepsilon}(f_-,\x)\leq r_{p,\varepsilon}(\x)$;
indeed, if $\x+\alpha\vb$ with $\alpha\leq\frac{1-\gamma}{1+\gamma}\eta_{\mathrm{lim}}$ is not misclassified by $f_-$, then
it is not misclassified by $f$. Therefore, by applying Lemma~\ref{lem:lp_noise_part1}
to $f_{-}$, we get:
\begin{equation*}
\left(1-\gamma\right)\zeta_1\left(\varepsilon\right)d^{1/p}\frac{\left\Vert \nabla f(\x^{*})\right\Vert _{p'}}{\left\Vert \nabla f(\x^{*})\right\Vert _{2}} \leq \frac{r_{p,\varepsilon}(\x)}{\| \r^*_{p}(\x) \|_p}\text{.}
\end{equation*}
Since as long as $\eta'\leq\eta_{\mathrm{lim}}$, $z\in\mathcal{H}_{\gamma}^+\left(\x,\x^*\right)\cap\mathcal{B}_p(\x, \eta')\Rightarrow f\left(\z\right)f\left(\x\right)<0$,
we can apply a symmetric reasoning for $f_{+}$, and get:
\begin{equation*}
\frac{r_{p,\varepsilon}(\x)}{\| \r^*_{p}(\x) \|_p} \leq (1+\gamma) \zeta_2(\varepsilon) d^{1/p} \frac{ \| \nabla f (\x^*) \|_{p'} }{\| \nabla f(\x^*) \|_{2} } \text{.}
\end{equation*}
\end{proof}

\begin{theorem}
\label{thm:gaussian_noise_laf2}
Let $\Sigma$ be a $d \times d$ positive semidefinite matrix with $\tr(\Sigma) = 1$. Let $\varepsilon'_0, \zeta'_1(\varepsilon), \zeta'_2(\varepsilon)$ as in Theorem~\ref{thm:gaussian_noise2}. Then, for all $\varepsilon < \frac{1}{2}\varepsilon'_0$, the following holds.

Assume $f$ is a classifier that is $(\gamma, \eta)$-LAF at point $\x$ and $\x^*$ be such that $\r^*_{2}(\x) = \x^* - \x$. Then:
\begin{equation*}
(1-\gamma) \zeta'_1\left(\frac{\varepsilon}{2}\right) \frac{ \| \nabla f(\x^*) \|_{2} }{\| \sqrt{\Sigma} \nabla f(\x^*) \|_{2} } \leq  \frac{r_{\Sigma,\varepsilon}(\x)}{\| \r^*_{2}(\x) \|_2}
\end{equation*}
and
\begin{equation*}
\frac{r_{\Sigma,\varepsilon}(\x)}{\| \r^*_{2}(\x) \|_2} \leq (1+\gamma) \zeta'_2\left(\frac{3\varepsilon}{2}\right) \frac{ \| \nabla f (\x^*) \|_{2} }{\| \sqrt{\Sigma} \nabla f(\x^*) \|_{2} },
\end{equation*}
provided $$\eta \geq (1+\gamma) \left(1 + 8\tr\left(\Sigma^2\right) \ln \frac{4}{\varepsilon}\right) \zeta'_2\left(\frac{3\varepsilon}{2}\right) \frac{ \| \nabla f (\x^*) \|_{2} }{\| \sqrt{\Sigma} \nabla f(\x^*) \|_{2} }\left\|\r_2^*\left(\x\right) \right\|_2 = \eta_{\mathrm{lim}}.$$
\end{theorem}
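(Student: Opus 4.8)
The plan is to follow the sandwiching strategy of the proof of Theorem~\ref{thm:lp_noise_laf2}, adding one ingredient to handle the unbounded support of the Gaussian. As there, let $f_-$ and $f_+$ be the affine functions whose zero sets are the two hyperplanes bounding $\mathcal{H}_\gamma^-(\x,\x^*)$ and $\mathcal{H}_\gamma^+(\x,\x^*)$; both are parallel to $\mathcal{T}(\x^*)$, hence have normal direction proportional to $\nabla f(\x^*)$, with $\|\r_2^*(f_-,\x)\|_2=(1-\gamma)\|\r_2^*(\x)\|_2$ and $\|\r_2^*(f_+,\x)\|_2=(1+\gamma)\|\r_2^*(\x)\|_2$. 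Inside $\calB_2(\x,\eta)$ the LAF hypothesis gives, for any scale $\alpha$, that $\x+\alpha\vb$ not misclassified by $f_-$ implies not misclassified by $f$, and $\x+\alpha\vb$ misclassified by $f_+$ implies misclassified by $f$. So it remains to control, on the one hand, $\mathbb{P}\{f_\pm\text{ misclassifies }\x+\alpha\vb\}$ via the linear Gaussian Lemmas~\ref{lem:gaussian_noise_part1} and~\ref{lem:gaussian_noise_part2} with $\w=\nabla f(\x^*)$, and on the other hand the probability that $\x+\alpha\vb$ escapes $\calB_2(\x,\eta)$.

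For the escape event I would diagonalize $\Sigma$ and write $\vb=\sqrt{\Sigma}\vb'$ with $\vb'\sim\mathcal{N}(\mathbf{0},I_d)$, so that $\|\vb\|_2^2=\sum_i\lambda_i^2(v_i')^2$ is a weighted sum of independent $\chi^2_1$ variables with weights summing to $\tr(\Sigma)=1$, exactly as in the proofs of Lemma~\ref{lem:gaussian_noise_part1} and Proposition~\ref{prop:MeanBoundsGaussian}. A Bernstein/Chernoff bound for this quadratic form yields $\mathbb{P}\{\|\vb\|_2^2\ge 1+t\}\le\exp(-t^2/(8\tr(\Sigma^2)))$ in the sub-Gaussian range and an exponential-in-$t$ bound governed by $\|\Sigma\|_{\mathrm{op}}$ beyond it; the key point is that for $t$ of the order of $\psi(\varepsilon)=8\tr(\Sigma^2)\ln\frac{4}{\varepsilon}$ this tail is at most $\varepsilon/2$. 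This is precisely why $\eta_{\mathrm{lim}}$ must carry the factor $1+\psi(\varepsilon)$: whenever $\eta\ge\eta_{\mathrm{lim}}$ and $|\alpha|$ is of the order of $r_{\Sigma,\varepsilon}(\x)$, the ratio $\eta/|\alpha|$ exceeds $1+\psi(\varepsilon)$, so $\mathbb{P}\{\|\alpha\vb\|_2>\eta\}=\mathbb{P}\{\|\vb\|_2^2>(\eta/|\alpha|)^2\}\le\varepsilon/2$.

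Given these two estimates the two bounds follow by bookkeeping. For the lower bound, for $|\alpha|<(1-\gamma)\zeta_1'(\varepsilon/2)\frac{\|\nabla f(\x^*)\|_2}{\|\sqrt{\Sigma}\nabla f(\x^*)\|_2}\|\r_2^*(\x)\|_2$ I split $\mathbb{P}\{g(\x+\alpha\vb)\ne g(\x)\}$ according to whether $\x+\alpha\vb\in\calB_2(\x,\eta)$: the in-ball part is at most $\mathbb{P}\{f_-\text{ misclassifies }\x+\alpha\vb\}<\varepsilon/2$ by Lemma~\ref{lem:gaussian_noise_part1} applied to $f_-$ at level $\varepsilon/2$, and the escape part is $<\varepsilon/2$ by the concentration estimate (the ratio $\eta/|\alpha|$ being comfortably above $1+\psi(\varepsilon)$, with extra room since $\zeta_1'(\varepsilon/2)$ is small), so the total is $<\varepsilon$. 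For the upper bound, take $|\alpha|$ slightly above $(1+\gamma)\zeta_2'(3\varepsilon/2)\frac{\|\nabla f(\x^*)\|_2}{\|\sqrt{\Sigma}\nabla f(\x^*)\|_2}\|\r_2^*(\x)\|_2$, which is $\le\eta$ since $\eta\ge\eta_{\mathrm{lim}}$; Lemma~\ref{lem:gaussian_noise_part2} applied to $f_+$ at level $3\varepsilon/2$ gives $\mathbb{P}\{f_+\text{ misclassifies }\x+\alpha\vb\}>3\varepsilon/2$, and intersecting with $\{\x+\alpha\vb\in\calB_2(\x,\eta)\}$ costs at most $\varepsilon/2$ by the concentration estimate, leaving $\mathbb{P}\{g(\x+\alpha\vb)\ne g(\x)\}>\varepsilon$; letting $|\alpha|$ decrease to the threshold gives the claimed upper bound on $r_{\Sigma,\varepsilon}(\x)$.

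The main obstacle is the concentration step: making the $\chi^2$-type tail genuinely below $\varepsilon/2$ for $t$ as small as $\psi(\varepsilon)$, uniformly over $\Sigma$ with $\tr(\Sigma)=1$, which requires treating carefully both the light-tailed regime (scale $\tr(\Sigma^2)$) and the heavy-tailed regime (scale $\|\Sigma\|_{\mathrm{op}}$), and tracking the level bookkeeping $\varepsilon\mapsto\varepsilon/2$ and $\varepsilon\mapsto 3\varepsilon/2$ together with the slack between $(1+\psi(\varepsilon))^2$ and $1+\psi(\varepsilon)$. One should also verify that the lower-bound threshold $(1-\gamma)\zeta_1'(\varepsilon/2)\cdots$ does not exceed $\eta_{\mathrm{lim}}$, so that the hypothesis on $\eta$ is consistent with both directions.
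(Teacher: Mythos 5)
Your proposal follows essentially the same route as the paper's proof: sandwich the decision boundary between the parallel hyperplanes $f_-$ and $f_+$, apply the linear Gaussian lemmas to them at the shifted levels $\varepsilon/2$ and $3\varepsilon/2$, and absorb the event that $\|\alpha\vb\|_2$ exceeds $\eta$ using a Bernstein-type concentration of $\|\vb\|_2$ around $1$ at deviation $\psi(\varepsilon)=8\tr(\Sigma^2)\ln\frac{4}{\varepsilon}$ with failure probability $\varepsilon/2$, which is exactly why $\eta_{\mathrm{lim}}$ carries the factor $1+\psi(\varepsilon)$. The ``main obstacle'' you flag (validity of the stated tail bound at $t=\psi(\varepsilon)$ uniformly in $\Sigma$) is handled no more carefully in the paper itself, so your argument matches the published one in both structure and level of detail.
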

\begin{proof}
This proof can be directly adapted from the proof of Theorem~\ref{thm:lp_noise_laf2}. The difference in the Gaussian case is that $\vb$ is no longer sampled from the unit ball, and its norm is not limited anymore. However, its norm can be bounded with high probability, and this enables to adapt the bounds of Theorem~\ref{thm:lp_noise_laf2} to the Gaussian case.

Indeed, using a Bernstein inequality as in the proof of Proposition~\ref{prop:MeanBoundsGaussian}, we have:
$$\mathbb{P}\left\{\left|\Vert \sqrt{\Sigma}\vb\Vert _{2}-1\right|\geq t\right\}\leq2\exp\left(-\frac{t^{2}}{8\tr\left(\Sigma^2\right)}\right) \leq \frac{\varepsilon}{2},$$
for $t = \psi(\varepsilon) = 8\tr\left(\Sigma^2\right) \ln \frac{4}{\varepsilon}$.

Let us focus on the upper bound for this proof; the lower bound follows by a similar reasoning. From the definition
of LAF classifiers, since for all for all $\eta' \leq \eta_{\mathrm{lim}}$,
$z\in\mathcal{H}_{\gamma}^+\left(\x,\x^*\right)\cap\mathcal{B}_p(\x, \eta')\Rightarrow f\left(\z\right)f\left(\x\right)<0$,
we have $r_{\Sigma,\varepsilon}(\x) \leq r_{\Sigma,\frac{3\varepsilon}{2}}(f_+,\x)$;
indeed, if $\x+\alpha\vb$ with $\alpha\leq\frac{\eta_{\mathrm{lim}}}{1+\psi(\varepsilon)}$ is misclassified by $f_+$, then
it is misclassified by $f$ if $\left\Vert \alpha\vb \right\Vert_2 \leq \eta_{\mathrm{lim}}$. Therefore, by applying Lemma~\ref{lem:gaussian_noise_part2}
to $f_{+}$:
$$\frac{r_{\Sigma,\varepsilon}(\x)}{\| \r^*_{2}(\x) \|_2} \leq (1+\gamma) \zeta'_2\left(\frac{\varepsilon}{2}\right) \frac{ \| \nabla f (\x^*) \|_{2} }{\| \sqrt{\Sigma} \nabla f(\x^*) \|_{2} }.$$
\end{proof}

\section{Generalization to Multi-class Classifiers}
We present in this section a generalization of Theorem~\ref{thm:lp_noise2} to multi-class linear classifiers, and discuss about the generalization of the other results to the multi-class case.

A classifier $f$ is said to be linear if for all $k \in \left\llbracket 1, L \right\rrbracket$, there are vector $\w_k,\bs{b}_k$ such that $f_k\left(\x\right) = \w_k^T \x + \bs{b}_k$. In this setting, Theorem~\ref{thm:lp_noise2} can be generalized by replacing $\zeta_1(\varepsilon)$ by $\zeta_1 \left(\frac{\varepsilon}{L-1}\right)$ in the lower bound.

\begin{theorem}
\label{thm:multiclass_lp_noise}
Let $p \in [1, \infty]$. Let $p' \in [1,\infty]$ be such that $\frac{1}{p} + \frac{1}{p'} = 1$. Let $\varepsilon_0, \zeta_1(\varepsilon), \zeta_2(\varepsilon)$ be the constants as defined in Theorem~\ref{thm:lp_noise2}. Let $k = g\left(\x\right)$ (the label attributed to $\x$ by $f$), $j$ be a class such that $\x + \r_{p}^*\left(\x\right)$ lies on the decision boundary between classes $k$ and $j$ (i.e., the class of the adversarial pertubation of $\x$) and $j' = \argmin_l{\frac{ \| \w_k - \w_l \|_{p'} }{\| \w_k - \w_l \|_{2} }}$. Then, for all $\varepsilon < \varepsilon_0$:
\begin{equation*}
\zeta_1 \left(\frac{\varepsilon}{L-1}\right) d^{1/p} \frac{ \| \w_k - \w_{j'} \|_{p'} }{\| \w_k - \w_{j'} \|_{2} } \leq \frac{r_{p,\varepsilon}(\x)}{\| \r^*_{p}(\x) \|_p} \leq \zeta_2(\varepsilon) d^{1/p} \frac{ \| \w_k - \w_j \|_{p'} }{\| \w_k - \w_j \|_{2} }.
\end{equation*}
\end{theorem}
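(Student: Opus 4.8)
The plan is to reduce the multi-class statement to the binary Lemmas~\ref{lem:lp_noise_part1} and~\ref{lem:lp_noise_part2} applied to the pairwise difference classifiers. For each $l \neq k$ set $f^{(l)}(\z) = f_k(\z) - f_l(\z) = (\w_k - \w_l)^T \z + (\bs{b}_k - \bs{b}_l)$, which is a binary linear classifier with weight vector $\w_k - \w_l$. Since $g(\x) = k$, we have $f^{(l)}(\x) > 0$ for every $l \neq k$ (if some inequality is not strict then $\x$ already lies on the decision boundary and both sides of the claimed inequality degenerate to $0$). Two geometric facts are needed. First, by Lemma~\ref{lem:DistanceHyperplanePNorm} the $\ell_p$-distance from $\x$ to the hyperplane $\{f^{(l)} = 0\}$ equals $|f^{(l)}(\x)| / \|\w_k - \w_l\|_{p'}$. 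Second, the region classified $k$ is $\bigcap_{l \neq k}\{f^{(l)} \geq 0\}$, an intersection of halfspaces, so reaching any single hyperplane $\{f^{(l)} = 0\}$ already changes the label; hence $\| \r^*_{p}(\x)\|_p \leq \dist_p(\x, \{f^{(l)} = 0\})$ for \emph{every} $l$, with equality for $l = j$ because by hypothesis $\x + \r_p^*(\x)$ lies on $\{f^{(j)} = 0\}$.

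For the upper bound, observe that $\{ g(\x + \alpha \vb) \neq k \} \supseteq \{ f^{(j)}(\x + \alpha \vb) \leq 0 \}$ (up to a null set), so $\bb{P}\{ g(\x + \alpha \vb) \neq g(\x) \} \geq \bb{P}\{ f^{(j)}(\x + \alpha \vb) \leq 0 \}$; consequently $r_{p,\varepsilon}(f, \x) \leq r_{p,\varepsilon}(f^{(j)}, \x)$, where the right-hand side is the random-noise robustness of the binary classifier $f^{(j)}$. Applying Lemma~\ref{lem:lp_noise_part2} to $f^{(j)}$ and substituting $\|\r_p^*(f^{(j)}, \x)\|_p = \dist_p(\x, \{f^{(j)} = 0\}) = \| \r^*_{p}(\x)\|_p$ yields $r_{p,\varepsilon}(f, \x) \leq \zeta_2(\varepsilon) d^{1/p} \frac{\|\w_k - \w_j\|_{p'}}{\|\w_k - \w_j\|_2} \| \r^*_{p}(\x)\|_p$, which is the claimed right-hand inequality.

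For the lower bound, union-bound over the $L-1$ ways the label can change: $\bb{P}\{ g(\x + \alpha \vb) \neq g(\x) \} \leq \sum_{l \neq k} \bb{P}\{ f^{(l)}(\x + \alpha \vb) \leq 0 \}$. By Lemma~\ref{lem:lp_noise_part1} applied to each $f^{(l)}$ (which, read through the definition of $r_{p,\varepsilon}$, says precisely that this summand is $< \frac{\varepsilon}{L-1}$ whenever $|\alpha| < \zeta_1\left(\frac{\varepsilon}{L-1}\right) d^{1/p} \frac{\|\w_k - \w_l\|_{p'}}{\|\w_k - \w_l\|_2} \dist_p(\x, \{f^{(l)} = 0\})$). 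Now, by the definition of $j'$ as the minimizer of $\frac{\|\w_k - \w_l\|_{p'}}{\|\w_k - \w_l\|_2}$ together with $\| \r^*_{p}(\x)\|_p \leq \dist_p(\x, \{f^{(l)} = 0\})$, the single threshold $\zeta_1\left(\frac{\varepsilon}{L-1}\right) d^{1/p} \frac{\|\w_k - \w_{j'}\|_{p'}}{\|\w_k - \w_{j'}\|_2} \| \r^*_{p}(\x)\|_p$ is at most each of the individual thresholds above; hence for $|\alpha|$ below it all $L-1$ summands are $< \frac{\varepsilon}{L-1}$ and their sum is $< \varepsilon$, giving the left-hand inequality. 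The hypothesis $\varepsilon < \varepsilon_0$ is used only to keep $\zeta_2(\varepsilon)$ well defined in the upper bound.

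The routine parts are the two invocations of the binary lemmas. The point requiring care is the asymmetric bookkeeping between the two bounds: the upper bound is governed by the single ``active'' adversarial class $j$ (a single Paley--Zygmund estimate), whereas the lower bound must simultaneously control \emph{all} $L-1$ crossing probabilities, so it is driven by the worst ratio $\frac{\|\w_k - \w_l\|_{p'}}{\|\w_k - \w_l\|_2}$ over $l$, attained at $l = j'$, combined with the $\frac{\varepsilon}{L-1}$ splitting from the union bound. One must also verify that replacing $\dist_p(\x, \{f^{(l)} = 0\})$ by the possibly smaller quantity $\| \r^*_{p}(\x)\|_p$ in the lower-bound threshold is legitimate --- it is, exactly because $\r_p^*(\x)$ is the distance to a \emph{union} of halfspaces, hence no larger than the distance to any one of the bounding hyperplanes --- and to dispose of the degenerate case in which $\x$ already sits on the decision boundary.
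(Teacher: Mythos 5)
Your proposal is correct and follows essentially the same route as the paper: reduce to the binary Lemmas~\ref{lem:lp_noise_part1} and~\ref{lem:lp_noise_part2} via the pairwise classifiers $f_k - f_l$, bound the change probability from below by the single event for the adversarial class $j$ to get the upper bound, and use a union bound over all $L-1$ classes with the $\frac{\varepsilon}{L-1}$ splitting and the worst-case ratio at $j'$ for the lower bound. Your explicit observation that $\| \r^*_{p}(\x)\|_p \leq \dist_p(\x, \{f_k = f_l\})$ for every $l$ (with equality at $l = j$) is exactly the inequality the paper relies on, stated with the correct orientation.
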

\begin{proof}
We first define for the sake of the demonstration for any class $l$ the adversarial perturbation in the binary case where only classes $k$ and $l$ are considered:
\begin{equation*}
\r_p^*\left(\x, l\right) = \argmin_{\r} \| \r \|_p \text{ s.t. } f_k(\x+\r) < f_j(\x+\r).
\end{equation*}
It is then possible to express conveniently $\bb{P}_{\vb \sim \calB_p}\left\{g\left(\x\right) \neq g\left(\x+\alpha \vb\right)\right\}$:
\begin{eqnarray*}
\bb{P}_{\vb \sim \calB_p}\left\{g\left(\x\right) \neq g\left(\x+\alpha \vb\right)\right\} & = & \mathbb{P}_{\vb \sim \calB_p}\left\{\exists l\neq k, f_k(\x) < f_l(\x+\alpha\vb)\right\}\\
 & = & \mathbb{P}_{\vb \sim \calB_p}\left\{\exists l\neq k,\left(\w_l-\w_k\right)^{T}\vb\geq\frac{f_k\left(\x\right)-f_l\left(\x\right)}{\left|\alpha\right|}\right\}\\
 & = & \mathbb{P}_{\vb \sim \calB_p}\left\{\exists l\neq k,\frac{\left(\w_l-\w_k\right)^{T}}{\left\|\w_l-\w_k\right\|_{p'}}\vb\geq\frac{\r_p^*\left(\x, l\right)}{\left|\alpha\right|}\right\}\text{.}
\end{eqnarray*}

Let us first prove the inequality on the upper bound, as in Lemma~\ref{lem:lp_noise_part2}.
\begin{eqnarray*}
\bb{P}_{\vb \sim \calB_p}\left\{g\left(\x\right) \neq g\left(\x+\alpha \vb\right)\right\} & \geq & \mathbb{P}_{\vb \sim \calB_p}\left\{\frac{\left(\w_j-\w_k\right)^{T}}{\left\|\w_j-\w_k\right\|_{p'}}\vb\geq\frac{\r_p^*\left(\x, j\right)}{\left|\alpha\right|}\right\}\\
 & & = \mathbb{P}_{\vb \sim \calB_p}\left\{\frac{\left(\w_j-\w_k\right)^{T}}{\left\|\w_j-\w_k\right\|_{p'}}\vb\geq\frac{\r_p^*\left(\x\right)}{\left|\alpha\right|}\right\}\text{,}
\end{eqnarray*}
by definition of $j$. Then using the same reasoning as in Lemma~\ref{lem:lp_noise_part2} leads to
\begin{equation*}
\frac{r_{p,\varepsilon}(\x)}{\| \r^*_{p}(\x) \|_p} \leq \zeta_2(\varepsilon) d^{1/p} \frac{ \| \w_k - \w_j \|_{p'} }{\| \w_k - \w_j \|_{2} }.
\end{equation*}

Let us then prove the inequality on the lower bounds, as in Lemma~\ref{lem:lp_noise_part1}. We use the union bound to derive the inequality:
\begin{eqnarray*}
\bb{P}_{\vb \sim \calB_p}\left\{g\left(\x\right) \neq g\left(\x+\alpha \vb\right)\right\} & \leq & \sum_{l \neq k}\mathbb{P}_{\vb \sim \calB_p}\left\{\frac{\left(\w_l-\w_k\right)^{T}}{\left\|\w_l-\w_k\right\|_{p'}}\vb\geq\frac{\r_p^*\left(\x, l\right)}{\left|\alpha\right|}\right\}\\
 & \leq & \sum_{l \neq k} \mathbb{P}_{\vb \sim \calB_p}\left\{\frac{\left(\w_l-\w_k\right)^{T}}{\left\|\w_l-\w_k\right\|_{p'}}\vb\geq\frac{\r_p^*\left(\x\right)}{\left|\alpha\right|}\right\}\text{,}
\end{eqnarray*}
because $\r_p^*\left(\x\right) \geq \r_p^*\left(\x, l\right)$ for all $l$. Moreover, for $\left|\alpha\right| < \zeta_1\left(\frac{\varepsilon}{L-1}\right)d^{\frac{1}{p}}\frac{ \| \w_k - \w_{j'} \|_{p'} }{\| \w_k - \w_{j'} \|_{2} }\| \r_{p}^*(\x) \|_p$, by following the reasoning of Lemma~\ref{lem:lp_noise_part1} for each $l \neq k$:
\begin{equation*}
\bb{P}_{\vb \sim \calB_p}\left\{g\left(\x\right) \neq g\left(\x+\alpha \vb\right)\right\} \leq \sum_{l \neq k} \frac{\varepsilon}{L-1} = \varepsilon .
\end{equation*}
Therefore:
\begin{equation*}
\frac{r_{p,\varepsilon}(\x)}{\| \r^*_{p}(\x) \|_p} \geq \zeta_1 \left(\frac{\varepsilon}{L-1}\right) d^{1/p} \frac{ \| \w_k - \w_{j'} \|_{p'} }{\| \w_k - \w_{j'} \|_{2} }.
\end{equation*}
\end{proof}

The proof of this theorem uses the union bound to obtain the lower bound, explaining that $\zeta_1(\varepsilon)$ in the binary case becomes $\zeta_1 (\frac{\varepsilon}{L-1})$ in the multi-class setting. However, this inequality represents a worst case in the majoration used in the proof, and we observed in our experiments that using the coefficient $\zeta_1(\varepsilon)$ instead of $\zeta_1 (\frac{\varepsilon}{L-1})$ gives a proper lower bound on $\frac{r_{p,\varepsilon}(\x)}{\| \r^*_{p}(\x) \|_p}$.

Notice that it is possible to generalize other results that we proved in the binary case (Lemma~\ref{lem:lp_noise_alternative}, Theorems~\ref{thm:gaussian_noise2} and \ref{thm:lp_noise_laf2}) to the multi-class problem with a similar transormation of the inequalities (replacing $\zeta_1(\varepsilon)$ by $\zeta_1 (\frac{\varepsilon}{L-1})$ and using similar definitions of $j$ and $j'$).

\end{document}